\documentclass{article}

\PassOptionsToPackage{numbers, compress}{natbib}
\usepackage[preprint]{neurips_2026}

\usepackage[utf8]{inputenc} % allow utf-8 input
\usepackage[T1]{fontenc}    % use 8-bit T1 fonts
\usepackage{hyperref}       % hyperlinks
\usepackage{url}            % simple URL typesetting
\usepackage{booktabs}       % professional-quality tables
\usepackage{amsfonts}       % blackboard math symbols
\usepackage{nicefrac}       % compact symbols for 1/2, etc.
\usepackage{microtype}      % microtypography
\usepackage{xcolor}         % colors

\usepackage{amsmath}
\usepackage{amssymb}
\usepackage{mathtools}
\usepackage{amsthm}
\usepackage{bm} 
\usepackage{multirow}
\usepackage{subcaption}
\usepackage{wrapfig}
\usepackage{amsfonts}       % 用于支持特殊的数学字体 (例如 \mathbb)
\usepackage{svg}
\usepackage{rotating}
\usepackage{algorithm}
\usepackage{algorithmic}  % 或者 algorithmicx 和 algpseudocode
\usepackage{enumitem}

\usepackage[capitalize,noabbrev]{cleveref}

\theoremstyle{plain}
\newtheorem{theorem}{Theorem}[section]
\newtheorem{proposition}[theorem]{Proposition}

\newtheorem{corollary}[theorem]{Corollary}
\theoremstyle{definition}
\newtheorem{definition}[theorem]{Definition}

\theoremstyle{remark}
\newtheorem{remark}[theorem]{Remark}

\usepackage[dvipsnames, svgnames, table]{xcolor}
\usepackage[textsize=tiny]{todonotes}

\definecolor{deepiris}{HTML}{5D3FD3}

\newcommand{\greybg}[1]{{\cellcolor[HTML]{D0CECE}\textbf{#1}}}

\usepackage{CJKutf8} % 引入 CJKutf8

\title{SPACE: Unifying Symmetric and Asymmetric Routing Problems for Generalist Neural Solver}

\author{%
  Rongsheng Chen$^{1,2,3}$ \quad Changliang Zhou$^{1,3}$ \quad Canhong Yu$^{4}$ \quad Yuanyao Chen$^{1,3}$\\ \textbf{Yu Zhou$^{4}$ \quad Zhuo Chen$^{2}$ \quad Zhenkun Wang$^{1,3}$}\\
  $^{1}$ School of Automation and Intelligent Manufacturing,\\ Southern University of Science and Technology, Shenzhen, China\\
  $^{2}$ Pengcheng Laboratory, Shenzhen, China\\
  $^{3}$ Guangdong Provincial Key Laboratory of Fully Actuated System Control Theory and Technology,\\
  Southern University of Science and Technology, Shenzhen, China\\
  $^{4}$ College of Computer Science and Software Engineering, Shenzhen University, Shenzhen, China\\
  \texttt{\{chenrs2025,zhoucl2022\}@mail.sustech.edu.cn, 2410103054@mails.szu.edu.cn,} \\
  \texttt{12433020@mail.sustech.edu.cn, zhouyu\_1022@126.com, chenzhuo.zoom@gmail.com,} \\
  \texttt{wangzk3@sustech.edu.cn}
}

\begin{document}
\begin{CJK*}{UTF8}{gbsn} 

\maketitle

\begin{abstract}
  %   The abstract paragraph should be indented \nicefrac{1}{2}~inch (3~picas) on both the left- and right-hand margins. Use 10~point type, with a vertical spacing (leading) of 11~points. The word \textbf{Abstract} must be centered, bold, and in point size 12. Two line spaces precede the abstract. The abstract must be limited to one paragraph.
Generalist neural routing solvers have shown great potential in solving diverse vehicle routing problems (VRPs) with a unified model. However, existing solvers are typically limited to symmetric settings or degrade in performance when switching to asymmetric settings due to input inconsistencies or inherent structural differences, substantially limiting their practicality in real-world scenarios that encompass both scenarios. To address this limitation, we define the spatial position of each node based on the relative distances to a specific set of pivots and further propose a Spatial Pivot-Aligned Coordinate-free Embedding (SPACE) framework that unifies node representation and solution generation across symmetric and asymmetric VRPs. Specifically, we construct a bidirectional Fréchet representation using a novel furthest pivot sampling strategy to enable invariant node representations across distinct problem settings. Furthermore, we introduce a weight-decomposed adaptive decoding mechanism that decouples geometric perception from problem representations, mitigating the overfitting of constraint decisions to a specific geometry setting. Extensive experiments on 110 VRP variants, comprising 55 symmetric problems and their asymmetric counterparts, demonstrate that SPACE achieves promising zero-shot generalization in both symmetric and asymmetric VRPs.
\end{abstract}
\section{Introduction}
\label{sec:intro}
The Vehicle Routing Problem (VRP) is an essential class of combinatorial optimization problems (COPs) and has significant practical importance in domains such as logistics and supply chain management~\citep{sar2023systematic}. As an NP-hard problem, solving diverse VRPs efficiently is challenging. While exact solvers guarantee optimality, their prohibitive computational costs render them impractical for real-world scenarios. In contrast, traditional heuristics~\citep{LKH3,HGS} have achieved remarkable performance within reasonable time limits over the past few decades. However, these methods rely heavily on extensive domain expertise to handcraft specialized rules for specific problems. Given the rapidly growing diversity of VRP variants in practical applications, manual adjustment for each case has become increasingly impractical.

Recently, neural combinatorial optimization (NCO) methods have attracted significant attention for their potential to reduce reliance on expert knowledge while maintaining competitive solution quality~\citep{bengio2021machine,ba2026survey}. By automatically learning implicit heuristic rules from data, numerous specialist solvers achieve promising performance on specific problems like the Traveling Salesman Problem (TSP)~\citep{joshi2019efficient,li2024fastT2T} and the Capacitated VRP (CVRP)~\citep{kwon2020pomo,zhou2024icam}. However, they typically require per-problem retraining to accommodate the specific constraints and attributes of distinct VRP variants. More critically, most methods focus on symmetric settings and heavily rely on coordinate-based encoding schemes, rendering them inapplicable to asymmetric settings where node coordinates are unavailable. These limitations impose substantial training overhead and hinder practical deployment on new problems, especially in asymmetric scenarios. 

To address the challenge of cross-problem generalization, research focus has increasingly shifted toward generalist neural routing solvers capable of handling diverse VRP variants. Existing efforts can be broadly categorized into three paradigms: 1) constraint combination; 2) adapter-based fine-tuning; and 3) the emerging unified data representation. The first category treats VRP variants as combinations of predefined constraints and trains a unified model across these combinations to enable knowledge sharing among seen constraints~\citep{liu2024mtpomo,zhou2024mvmoe}. The second approach builds a shared backbone for all problems while incorporating problem-specific input/output adapters, thereby reducing retraining costs~\citep{lin2024cross,drakulic2025goal}. The last paradigm replaces problem enumeration with data unification, significantly broadening the problem coverage to over 100 VRP variants~\citep{zhou2025urs}. A detailed review of related work on specialist and generalist neural solvers is provided in Appendix \ref{append:related_work}.

\begin{wrapfigure}[18]{r}{0.65\textwidth}
\vspace{-18pt}
    \centering{\includegraphics[width=0.95\linewidth]{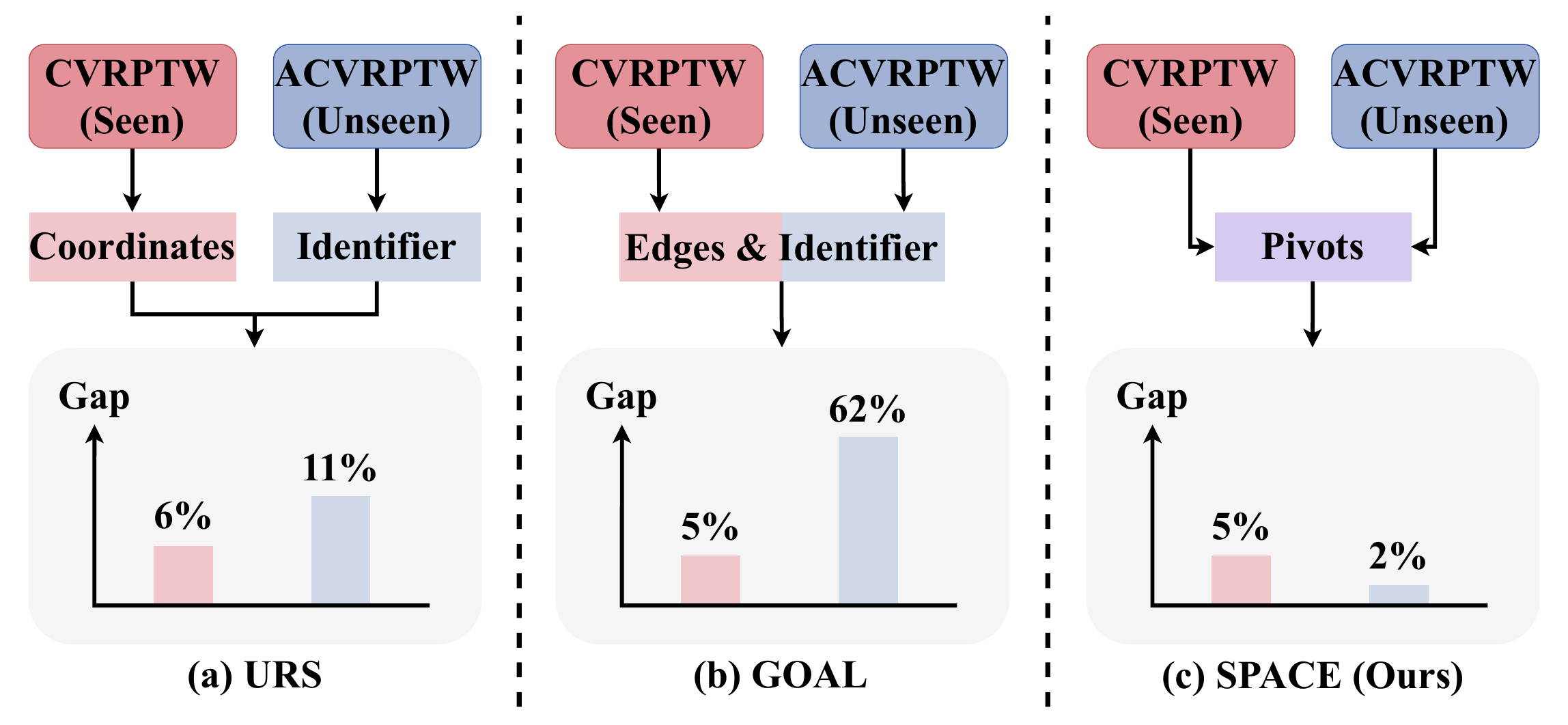}}
    \caption{\textbf{Effects of node representations on generalization gaps across different generalist solvers.} Both (a) URS~\citep{zhou2025urs} and (b) GOAL~\citep{drakulic2025goal} suffer from performance degradation caused by input inconsistencies or inherent structural differences between symmetric and asymmetric settings. (c) Our SPACE employs an invariant node representation to unify two problem settings, ensuring robust zero-shot generalization.\protect\footnotemark}
    \label{fig:motivation}
\end{wrapfigure}
\footnotetext{Given that GOAL already captures both symmetric and asymmetric features during training, we adopt an adapter sharing strategy to minimize retraining costs.}

Despite these advancements, existing methods still suffer from limited applicability, as most generalist solvers remain limited in symmetric VRPs. However, real-world scenarios encompass both symmetric and asymmetric settings. For instance, while drone route planning typically operates in symmetric settings~\citep{attenni2023drone}, urban express delivery often exhibits significant asymmetry~\citep{son2025rrnco}. This duality necessitates a unified solver that seamlessly handles both settings to effectively address complex real-world challenges. As illustrated in Figure \ref{fig:motivation}, although URS~\citep{zhou2025urs} and GOAL~\citep{drakulic2025goal} have attempted to address this challenge, they exhibit notable drawbacks. URS relies on distinct positional representations for these two settings, hindering the sharing of learned structural knowledge and compromising its zero-shot generalization on asymmetric variants. Conversely, while the adapter-based GOAL employs a unified distance-only encoding, the inherent structural difference results in performance degradation when switching between distinct settings. In addition, GOAL lacks zero-shot generalization capabilities for unseen problem variants. 

In this paper, grounded in Bourgain's Theorem~\cite{bourgain1985lipschitz}, we construct an invariant node representation for unifying symmetric and asymmetric VRPs. By defining the spatial position of each node based on its distances to a specific set of \textbf{pivots}, we propose a \underline{\textbf{S}}patial \underline{\textbf{P}}ivot-\underline{\textbf{A}}ligned \underline{\textbf{C}}oordinate-free \underline{\textbf{E}}mbedding (SPACE) framework, which significantly improves cross-problem zero-shot generalization for NCO methods, especially in asymmetric VRPs. Our contributions can be summarized as follows:

\begin{itemize}
    \item We construct a bidirectional Fréchet representation using a novel furthest pivot sampling strategy. It enables a unified node representation across symmetric and asymmetric VRPs, thereby delivering superior results across both symmetric and asymmetric variants. 
    \item We introduce a weight-decomposed adaptive decoding mechanism that decouples geometric perception from problem representations, mitigating the overfitting of constraint decisions to a specific geometry setting.
   \item Extensive experiments on \textbf{110} VRP variants, comprising 55 symmetric problems and their asymmetric counterparts, demonstrate that SPACE achieves promising zero-shot generalization in both symmetric and asymmetric VRPs.
\end{itemize}

\section{Background and Motivation}
\label{sec:preliminary}
In this section, we first formulate VRPs and then review the common pipeline of autoregressive constructive solvers. Finally, based on our preliminary experiment and theoretical analysis, we identify a potential limitation in current encoding and introduce a targeted technique to address it.

\subsection{Vehicle Routing Problems}
Mathematically, the VRP is formulated as an optimization task within a finite action space. Formally, an instance is a tuple $\mathcal{G} = (\mathcal{V}, \bm{D})$, where $\mathcal{V}=\{v_i\}_{i=0}^n$ denotes the set of $n+1$ nodes unless the variant has no depot (e.g., TSP). In VRP, each node $v_i \in \mathcal{V}$ indicates node coordinates $\{x_{i},y_{i}\}$ when available and problem-specific attributes (e.g., demands in CVRP). The edge set of the problem is characterized by $\bm{D}=\{d(v_i, v_j)| \ \forall \  i,j \in 0, 1, \dots, n\}$. To accommodate the realistic asymmetric setting, i.e., $d(v_i, v_j) \neq d(v_j, v_i)$, $\bm{D}$ is characterized as a \textbf{quasimetric space}\footnote{A quasimetric relaxes the symmetry constraint of a standard metric, allowing for asymmetric costs where $d(v_i, v_j) \neq d(v_j, v_i)$. Please refer to Appendix \ref{app:metric_defs} for more details.} \cite{wilson1931quasi}, which satisfies non-negativity, identity of indiscernibles, and the triangle inequality $d(v_i, v_j) \le d(v_i, v_k) + d(v_k, v_j)$.

A feasible solution is a set of routes that satisfies all operational constraints, where each route originates and terminates at the depot (or starting node)~\citep{toth2002vehicle}. We represent the solution as a sequence $\bm{\pi} = (\pi_1, \pi_2, \ldots, \pi_T)$ (i.e., a finite sequence), which is a permutation of $T$ nodes. The objective is to find a permutation $\pi^* \in \bm{\pi}_{\text{feasible}}$ that minimizes the cost function $\mathcal{J}$, i.e., $\pi^* = \arg\min_{\pi \in \bm{\pi}_{\text{feasible}}} \mathcal{J}(\pi | \mathcal{G})$. The function $\mathcal{J}$ formulates diverse routing objectives. In most VRP variants, it can be represented as the cumulative distance $\sum_{i=1}^{T-1} d(\pi_i, \pi_{i+1}) + d(\pi_T, \pi_{1})$.

In this paper, we target a comprehensive coverage of symmetric and asymmetric settings. Consistent with \citet{zhou2025urs}, each variant incorporates at least one of nine constraints to reflect diverse real-world scenarios: (1) Capacity (C); (2) Open Route (O); (3) Backhaul (B); (4) Backhaul and Priority (BP); (5) Duration Limit (L); (6) Time Windows (TW); (7) Multi-Depot (MD); (8) Prize Collecting (PC); (9) Pickup and Delivery (PD). Detailed definitions are provided in Appendix \ref{append:setup_constraints}.

\subsection{Autoregressive Constructive Neural Solver}
Autoregressive constructive solvers have become the dominant paradigm for solving VRPs due to their significant reduction in reliance on hand-crafted heuristics. Most constructive solvers adopt an encoder-decoder architecture parametrized by $\bm{\theta}=\{\bm{\theta}_{enc},\bm{\theta}_{dec}\}$~\citep{luo2023lehd,kwon2020pomo}. Without loss of generality, we delineate the pipeline using the representative AM~\citep{kool2019attention}. Given an instance $\mathcal{G}$, the encoder $\bm{\theta}_{enc}$ transforms raw node features into advanced node embeddings $H^{(L)}=\{\mathbf{h}_i^{(L)}\}_{i=0}^n$ via linear projection and subsequent $L$ attention layers. At decoding step $t$, the decoder $\bm{\theta}_{dec}$ sequentially appends a node to the partial solution $\pi_{1:t-1}=(\pi_{1},\pi_{2},\dots,\pi_{t-1})$. This process continues until a complete solution is constructed. To ensure feasibility, a masking function $\mathcal{M}_t$ prevents invalid selections by assigning $-\infty$ logits to nodes that are either visited or violate problem-specific constraints (e.g., capacity).

\subsection{Motivation and Key Idea}
\label{sec:motivation}
\paragraph{Rethinking Symmetric and Asymmetric Encoding}

The nature of the distance matrix $\bm{D}$ yields a disparity within the encoding process. In symmetric instances, node coordinates provide explicit inductive biases that can fully reflect the problem setting, with $\bm{D}$ serving merely as auxiliary biases to facilitate decision-making in some works~\citep{zhou2024icam,huang2025reld}. Conversely, for asymmetric problems, the lack of inherent node coordinates means their geometric features are entirely determined by $\bm{D}$~\citep{kwon2021matnet,drakulic2023bq}. This distinction creates a critical representation gap, hindering the generalization of neural solvers that rely on symmetric spatial priors to asymmetric scenarios.

Intuitively, since the distance matrix $\bm{D}$ is shared across both symmetric and asymmetric problems, a straightforward unified strategy is to rely solely on distance-based encoding. To validate the limitations of this strategy, we conduct an experiment using URS~\citep{zhou2025urs}, which is a representative neural solver capable of zero-shot generalization across a wide range of VRP variants. Specifically, we train a variant of URS that removes the explicit node coordinates used to distinguish between symmetric and asymmetric inputs, denoted URS-Dist. This compels the model to achieve representation unification via random identifiers and intrinsic distance biases $\bm{D}$, while keeping all other configurations unchanged to ensure a fair comparison. 

\begin{wraptable}[20]{r}{0.45\textwidth}
\vspace{-18pt}
\begin{center}
  \centering
  \caption{Gap comparison between different representation strategies.}
  \resizebox{\linewidth}{!}{
    \begin{tabular}{ll|c|c}
    \toprule[0.5mm]
    \multicolumn{2}{c|}{Problem} & URS   & URS-Dist \\
    \midrule
    \multirow{3}[2]{*}{CVRPTW} & Sym.  & \greybg{6.13\%} & 6.28\% \\
          & Asym. & 11.28\% & \greybg{9.34\%} \\
          & Avg.  & 8.71\% & \greybg{7.81\%} \\
    \midrule
    \multirow{3}[2]{*}{OCVRPTW} & Sym.  & \greybg{5.07\%} & 5.25\% \\
          & Asym. & 17.47\% & \greybg{14.24\%} \\
          & Avg.  & 11.27\% & \greybg{9.74\%} \\
    \midrule
    \multirow{3}[2]{*}{OCVRPBP} & Sym.  & \greybg{14.30\%} & 16.57\% \\
          & Asym. & 28.27\% & \greybg{22.23\%} \\
          & Avg.  & 21.29\% & \greybg{19.40\%} \\
    \midrule
    \multirow{3}[2]{*}{PDTSP} & Sym.  & \greybg{4.98\%} & 6.09\% \\
          & Asym. & 6.21\% & \greybg{-2.31\%} \\
          & Avg.  & 5.60\% & \greybg{1.89\%} \\
    \midrule
    \multirow{3}[2]{*}{PDCVRP} & Sym.  & \greybg{-1.47\%} & 1.43\% \\
          & Asym. & 7.03\% & \greybg{3.55\%} \\
          & Avg.  & 2.78\% & \greybg{2.49\%} \\
    \bottomrule[0.5mm]
    \end{tabular}%
    }
  \label{tab:urs_dist_summary}%
\end{center}
\end{wraptable}

As illustrated in Table \ref{tab:urs_dist_summary}, URS-Dist outperforms URS overall and in asymmetric VRPs, which validates the necessity of a unified representation. However, it exhibits performance degradation in symmetric variants. We attribute this to the removal of explicit coordinates, which deprives the attention~\citep{vaswani2017attention} of essential spatial symmetry priors. Since $\bm{D}$ is used solely as a scalar attention bias, the model is forced to learn two distinct geometric spaces using the same attention without any input priors, making it challenging for the model to perform well in both symmetric and asymmetric settings. These empirical findings highlight the critical need for an advanced coordinate-free encoding scheme that provides relative spatial positions and distinguishable symmetric/asymmetric priors, thereby achieving robust zero-shot generalization across symmetric and asymmetric VRPs.

\paragraph{The Necessity of Bi-Lipschitz Embedding}
To unify symmetric and asymmetric instances, neural solvers require a mapping $\Phi: \mathcal{V} \to \mathbb{R}^k$ that transforms discrete nodes into continuous feature vectors without discarding the geometric priors encoded by the distance matrix $\bm{D}$. This requirement naturally motivates a Lipschitz-type stability criterion: nearby nodes in the underlying routing geometry should remain nearby in the embedding space, while excessive contraction should be avoided whenever the selected reference points sufficiently separate the nodes\cite{bourgain1985lipschitz}. However, because distances induced by normed vector spaces are symmetric, whereas asymmetric costs satisfy $d(v_i,v_j)\neq d(v_j,v_i)$, a direct bi-Lipschitz condition with respect to the raw quasimetric $d$ is mathematically inappropriate. We therefore formulate the stability target with respect to a symmetrized reference metric, and retain asymmetric information through directional reference-distance coordinates.

Formally, for a symmetric reference metric $D$, an embedding $\Phi$ is termed $L$-Lipschitz if distance expansion in the embedding space is bounded by a factor of $L$. If it is also non-contractive up to a constant, then $\Phi$ is a \textit{bi-Lipschitz embedding}: there exist constants $\alpha, \beta > 0$ such that for all node pairs $v_i, v_j \in \mathcal{V}$,
\begin{equation}
\label{eq:bilipschitz}
    \alpha \cdot D(v_i, v_j) \le \| \Phi(v_i) - \Phi(v_j) \|_p \le \beta \cdot D(v_i, v_j).
\end{equation}
The embedding is quantified by its distortion, defined as $\mathcal{D}(\Phi) = \beta / \alpha$ \cite{matousek2013lectures}. If an isometric embedding achieves $\mathcal{D}(\Phi) = 1$, it represents perfect geometric preservation.

Low distortion is desirable because existing neural routing solvers~\citep{kool2019attention,kwon2020pomo} leverage attention mechanisms~\citep{vaswani2017attention} whose vector similarities are expected to correlate with routing costs. Excessive distortion $\mathcal{D}(\Phi)$ causes the vector-space geometry to diverge from the true costs, thereby making cost-dependent feasibility harder to infer from embeddings alone\footnote{We provide a formal derivation of how distortion impacts geometric constraints in Proposition \ref{prop:constraint_preservation} (Appendix \ref{app:geometric_logic}).}. For the BFR used in SPACE, we only claim a provable non-expansive stability bound for arbitrary pivot sets; a lower-bound or bounded-distortion statement requires an additional pivot-separation condition, stated explicitly in Proposition \ref{prop:pivot_separation_lower_bound}. We further show in Proposition \ref{prop:coverage_induced_separation} that a smaller pivot covering radius improves this separation behavior.

\paragraph{The Feasibility of Coordinate-free Encoding}
Obtaining a low-distortion embedding necessitates establishing whether the underlying geometry can be constructed solely from distance information. To this end, Bourgain’s Embedding Theorem~\cite{bourgain1985lipschitz} provides a fundamental guarantee: any finite metric space can be embedded into a Euclidean space with bounded distortion by defining coordinates based on distances to specific subsets of nodes (pivots). The theorem ensures that there exists an embedding where distances are contracted by at most a logarithmic factor while maintaining the upper bound of the original metric (see Theorem \ref{thm:bourgain_main} in Appendix \ref{app:bourgain_proof} for the formal statement and bounds).

The embedding theorem relies on defining coordinates in terms of distances to a specific subset of $\mathcal{V}$. Formally, each coordinate $\phi_\tau(v)$ can be defined as the distance from node $v$ to a subset $A_\tau \subset \mathcal{V}$, denoted as $d(v, A_\tau) = \min_{a \in A_\tau} d(v, a)$. Bourgain demonstrates that by aggregating a series of random subsets $\{A_\tau\}$ of varying cardinalities, the resulting embedding $\Phi(v)$ theoretically satisfies stated distortion bounds.

This theoretical result shows that explicit coordinates are not strictly necessary to capture problem settings. While the original theorem addresses symmetric metrics, it establishes the feasibility of a coordinate-free encoding paradigm. This motivates our design to construct data representations directly from the distance matrix $\bm{D}$, and extends it to handle asymmetric VRPs.

\begin{figure*}[t]
    \centering
    \includegraphics[width=1\linewidth]{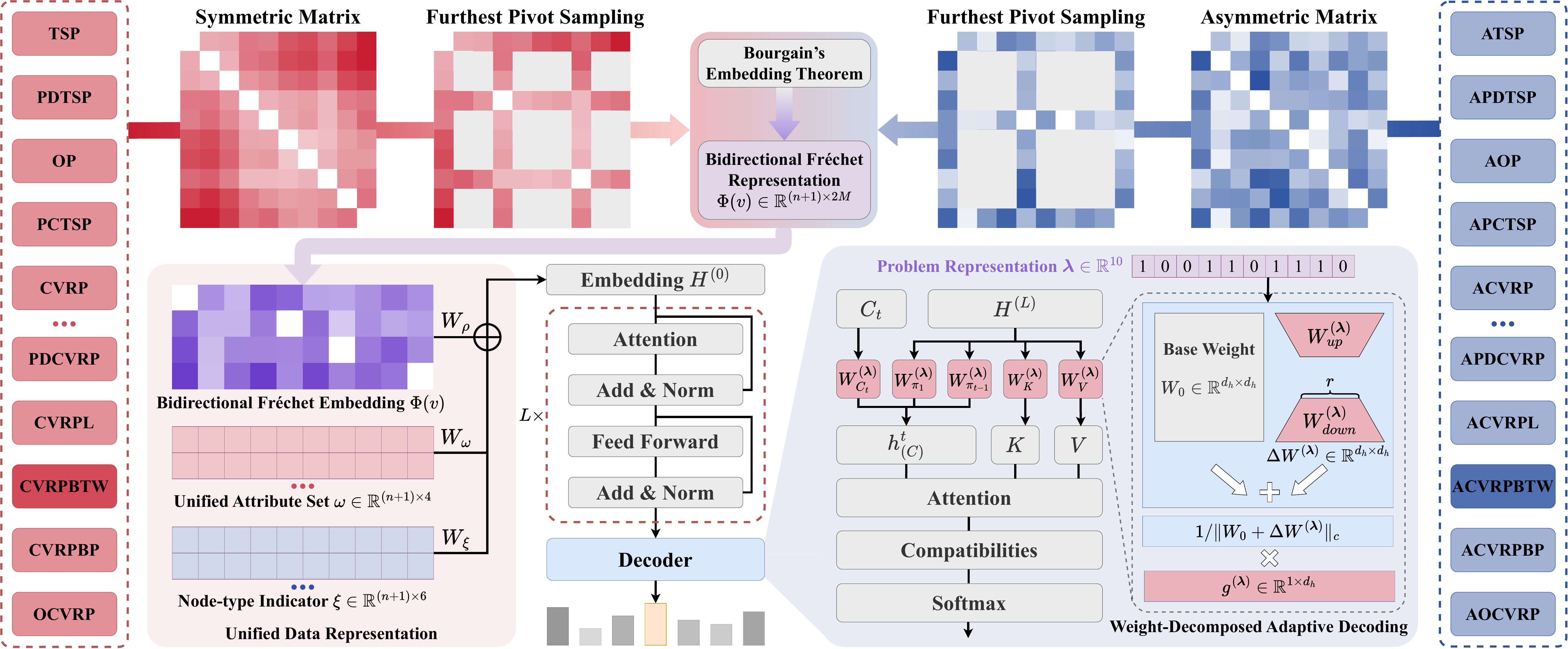}
    \caption{\textbf{Overview of our SPACE framework, illustrated on the (A)CVRPBTW instances.} By leveraging a unified bidirectional Fréchet representation coupled with a weight-decomposed adaptive decoding mechanism, SPACE achieves robust cross-problem zero-shot generalization across both symmetric and asymmetric VRPs, particularly remarkable in asymmetric scenarios.}
    \label{fig:space_overview}
% \vspace{-10pt}
\end{figure*}

\section{Methodology} 
\label{sec:method}

Building on the feasibility analysis in Section~\ref{sec:motivation}, we propose a novel SPACE framework for generalist neural routing solvers, as shown in \cref{fig:space_overview}. Since URS~\citep{zhou2025urs} exhibits zero-shot generalization capability on both symmetric and asymmetric VRP variants, we adopt it as our basic model. Our SPACE includes two key components. Their implementations are detailed below.

\subsection{Bidirectional Fréchet Representation}

Building upon the embedding concept established by Bourgain's Theorem, we introduce a computationally efficient construction to map the VRP instance into a unified vector space. While the theorem guarantees structural preservation for metric spaces, we adapt this distance-to-reference-set perspective to finite quasimetric space $(\mathcal{V}, d)$ by explicitly encoding directionality. We adopt the framework of \textit{Fréchet embeddings}, which represent the spatial position of a node via its distances to a specific set of reference nodes, termed \textit{pivots}.

Let $\mathcal{P}_M = \{p_m\}_{m=1}^M \subset \mathcal{V}$ denote a selected subset of $M$ pivots. In symmetric Euclidean spaces, a standard embedding $\phi(v) = [d(v, p_m)]_{m=1}^M$ can encode relative positions when the pivots separate the nodes. However, realistic routing problems frequently exhibit significant asymmetry ($d(v_i, v_j) \neq d(v_j, v_i)$). A unidirectional embedding can miss this non-commutative topology. Thus, we propose the \textit{Bidirectional Fréchet representation}, which explicitly encodes both the outgoing and incoming distances relative to the pivot set. Formally, for any node $v \in \mathcal{V}$, the embedding $\Phi: \mathcal{V} \to \mathbb{R}^{2M}$ is defined as the concatenation of the outgoing and incoming distance vectors:

\begin{equation}
\label{eq:directed_frechet}
    \Phi(v) = \frac{1}{\sqrt{2M}} \bigoplus_{m=1}^M \left[ \ d(v, p_m), \ d(p_m, v) \ \right],
\end{equation}
where $\oplus$ denotes vector concatenation and the factor $\frac{1}{\sqrt{2M}}$ normalizes the energy of the embedding. This construction inherently satisfies the Lipschitz condition required for geometric stability. Since the Euclidean norm is symmetric, strictly bounding the embedding distance by an asymmetric quasimetric $d(v_i, v_j)$ is mathematically ill-posed. Instead, we establish stability with respect to the \textit{symmetrized metric} $D_{\mathrm{sym}}(v_i, v_j) = \max(d(v_i, v_j), d(v_j, v_i))$. For any pivot $p_m$, the quasimetric triangle inequality in the two directions gives $|d(v_i,p_m)-d(v_j,p_m)|\le D_{\mathrm{sym}}(v_i,v_j)$ and $|d(p_m,v_i)-d(p_m,v_j)|\le D_{\mathrm{sym}}(v_i,v_j)$. Consequently, the Euclidean distance in the embedding space is upper-bounded as:
\begin{equation}
\label{eq:lipschitz_bound}
    \| \Phi(v_i) - \Phi(v_j) \|_2 \le \sqrt{\sum_{m=1}^{2M} \frac{1}{2M} D_{\mathrm{sym}}(v_i, v_j)^2} = D_{\mathrm{sym}}(v_i, v_j).
\end{equation}
This ensures the mapping is \textit{1-Lipschitz} regarding the symmetrized topology, guaranteeing that nodes close in the original graph (in both directions) remain close in the embedding space. Although the embedding distance $\|\cdot\|_2$ is symmetric, the concatenation of incoming and outgoing features in Eq. \ref{eq:directed_frechet} provides directional information from which the neural decoder can learn asymmetric costs. We provide the rigorous proof, the conditional non-contraction statement, the coverage-induced separation bound, and the necessity of this bidirectional design in Proposition \ref{prop:lipschitz_continuity}, Proposition \ref{prop:pivot_separation_lower_bound}, Proposition \ref{prop:coverage_induced_separation}, and Corollary \ref{cor:bidirectionality} (Appendix \ref{app:directed_frechet_proof}).

\paragraph{Geometry-Preserving Pivot Selection}
\label{subsec:landmark_selection}

The representational capacity of the Bidirectional Fréchet Representation $\Phi$ depends critically on the spatial distribution of the selected pivot set $\mathcal{P}$. Randomization frequently clusters pivots, generating highly correlated distance features that collapse the representation's effective dimensionality and obscure fine-grained geometric details. Maximizing the information content of the representation requires maximizing the spatial diversity of $\mathcal{P}$.

We therefore employ \textit{Furthest Pivot Sampling} (FPS) to construct the pivot set. Initialized with the depot node $v_0$, the algorithm iteratively selects the node $p_{m}$ that maximizes the minimum distance to the set of previously chosen pivots $\mathcal{P}_{m-1}$:
\begin{equation}
\label{eq:fps_greedy}
    p_m = \arg\max_{v \in \mathcal{V}} \min_{p \in \mathcal{P}_{m-1}} d_{\mathrm{fps}}(v, p).
\end{equation}
Here, we utilize the metric $d_{\mathrm{fps}}(v_i, v_j) = \frac{1}{2}(d(v_i, v_j) + d(v_j, v_i))$ to ensure balanced coverage across the underlying topology regardless of edge directionality. This iterative maximization should be interpreted as the standard furthest-first traversal for the metric $d_{\mathrm{fps}}$, rather than as an exact solution to the $M$-center objective. Under $d_{\mathrm{fps}}$, the greedy pivot set achieves the usual 2-approximation to the optimal covering radius, with the guarantee stated in Proposition~\ref{prop:fps_metric_coverage}. The link between pivot coverage and BFR separation is formalized in Proposition~\ref{prop:coverage_induced_separation}. Geometrically, the procedure pushes pivots toward under-covered regions of the data topology, reducing redundant distance features and helping distinct topological regions map to distinct regions in the vector space. Consequently, the representation provides the neural model with a high-resolution triangulation of the underlying geometry. More details are provided in Appendix~\ref{app:fps_details}.

\subsection{Weight-Decomposed Adaptive Decoding}
\label{subsec:dora_adaptation}

While BFR unifies the input representation, neural solvers must still learn constraint-handling logic that generalizes across diverse problem settings. A recent study~\citep{zhou2025urs} constructs a multi-hot problem representation $\bm{\lambda}$, based on input features, and employs a problem-conditioned parameter generation mechanism. While this approach mitigates the learning burden, it suffers from a key limitation. Within $\bm{\lambda}$, the spatial structures (i.e., symmetric/asymmetric) and various attributes (e.g., demand) are deeply coupled. When the same attribute appears across two spatial structures or multiple attributes are recombined into unseen variants, the previously hypernetwork-based technique risks overfitting constraint decisions to a specific metric setting, thereby delivering poor results on another setting (e.g., from symmetric to asymmetric scenarios).

Inspired by Weight-Decomposed Low-Rank Adaptation (DoRA)~\citep{liu2024dora}, we propose weight-decomposed adaptive decoding (WDAD) to enforce a decoupling between geometric perception and problem representations, thereby improving zero-shot generalization across asymmetric and symmetric VRPs. To align with our unified representation framework, we exclude explicit identifiers for symmetric and asymmetric properties in $\bm{\lambda}$(see Appendix~\ref{append:multi_hot} for details). Each active entry of $\bm{\lambda}$ corresponds to a routing attribute, and WDAD composes task-specific decoder parameters from these active attributes instead of allocating separate parameters to every VRP variant. For each adaptive projection matrix in the decoder, we decompose the effective weight into a shared base weight $W_0$ and a representation-conditioned update $\Delta W^{(\bm{\lambda})}$:
\begin{equation}
    W_{\mathrm{eff}}^{(\bm{\lambda})}
    = W_0 + \Delta W^{(\bm{\lambda})}, \qquad
    \Delta W^{(\bm{\lambda})}
    = \frac{1}{Z(\bm{\lambda})}\sum_{j:\lambda_j=1}\delta_j,
    \quad
    Z(\bm{\lambda})=\max(1,\|\bm{\lambda}\|_1).
\end{equation}
where each $\delta_j$ is an attribute-specific adapter induced by the $j$-th active constraint. To preserve the decoupling between update direction and magnitude, each adapter is parameterized by a DoRA-style low-rank decomposition with $H$ independent heads for every attribute:
\begin{equation}
    \delta_j =
    \sum_{\eta=1}^{H}
    g_{\eta}^{(j)} \odot
    \frac{
        W_{up,\eta}^{(j)}W_{down,\eta}^{(j)}
    }{
        \|W_{up,\eta}^{(j)}W_{down,\eta}^{(j)}\|_{\mathrm{row}} + \epsilon
    },
\end{equation}
where $W_{up,\eta}^{(j)} \in \mathbb{R}^{d_{\mathrm{out}} \times r}$ and $W_{down,\eta}^{(j)} \in \mathbb{R}^{r \times d_{\mathrm{in}}}$ determine the update direction, while $g_{\eta}^{(j)} \in \mathbb{R}^{d_{\mathrm{out}}}$ controls the row-wise magnitude of each output channel. Here, $\|\cdot\|_{\mathrm{row}}$ denotes the $\ell_2$ norm along the input dimension of each output row. The normalization by $Z(\bm{\lambda})$ averages nonempty active-attribute updates and falls back to the shared base decoder when no attribute is active, preventing magnitude explosion when multiple constraints coexist and allowing the decoder to compose known constraint primitives into unseen combinations. WDAR allows the shared base weight $W_0$ to capture general routing knowledge learned from BFR, whereas $\Delta W^{(\bm{\lambda})}$ injects only the task-specific decision logic required by the active attribute.  More implementation details are provided in Appendix~\ref{app:dora_details}.

\section{Experiments}
\label{sec:experiments}

In this section, we evaluate SPACE against both classical and learning-based solvers across 110 VRPs and adopt the 100-node setting as our primary evaluation criterion. All experiments are conducted on a single NVIDIA GeForce RTX 4090 GPU (24GB of memory).

\subsection{Experimental Setup}

\paragraph{Problem Setting} 
We adopt the unified data representation established by \citet{zhou2025urs} and select 12 VRPs in training to ensure comprehensive feature coverage, they are: (1) ATSP~\citep{kwon2021matnet};(2) TSP~\citep{kool2019attention}; (3) CVRP~\citep{kool2019attention}; (4) ACVRP~\citep{kwon2021matnet,kool2019attention}; (5) Orienteering Problem (OP)~\citep{kool2019attention}; (6) PCTSP~\citep{kool2019attention}; (7) PDTSP~\citep{li2021pdtsp}; (8) CVRPTW~\citep{zhou2024mvmoe}; (9) OCVRP~\citep{zhou2024mvmoe}; (10) CVRPB~\citep{zhou2024mvmoe}, (11) OCVRPTW~\citep{zhou2024mvmoe}, and (12) ACVRPBTW~\citep{zhou2025urs}. We provide a detailed discussion of problem selection in Appendix \ref{append:problem_selection}. To demonstrate the broad applicability of SPACE, we evaluate the model on 55 symmetric problems and their asymmetric counterparts. Notably, we focus on a model's overall performance across two settings, rather than limiting it to a single scenario.  Each test dataset consists of 1,000 instances. 

\paragraph{Model \& Training Setting}
\label{sec:exp_model_setting}

Regarding the proposed BFR, we set the number of pivots to $M=8$. For the WDAD, we configure the rank to $r=32$ and the number of heads to $3$. To ensure a fair comparison, all remaining hyperparameters and training protocols strictly adhere to \citep{zhou2025urs} configuration. SPACE is trained by the REINFORCE~\citep{williams1992reinforce} algorithm, and the data is generated on the fly. Further details on model architecture and training are provided in Appendix \ref{append:model_training}.

\paragraph{Baseline}

We compare SPACE with the following generalist neural routing solvers: MTPOMO~\citep{liu2024mtpomo}, MVMoE/4E~\citep{zhou2024mvmoe}, ReLD-MTL~\citep{huang2025reld}, RouteFinder~\citep{berto2024routefinder}, CaDA~\citep{li2025cada}\footnote{Due to differing training variants and constraints, we evaluate URS against RouteFinder~\citep{berto2024routefinder} and CaDA~\citep{li2025cada} on all shared seen and unseen VRPs under consistent problem settings (see Appendix \ref{append:compare_rf_cada} for comparable results).}, GOAL-MTL~\citep{drakulic2025goal}, and URS~\citep{zhou2025urs}.  The oracle solvers of all evaluated problems are provided in Appendix \ref{append:multi_hot}.

\paragraph{Metrics \& Inference}
We report the optimality gap (Gap) and inference time (Time) for each method. The optimality gap quantifies the discrepancy between the solutions generated by neural solvers and those obtained by oracle solvers. Note that we report the average gap for symmetric and asymmetric scenarios. For most NCO baselines, we execute the source code provided by the authors using default settings. Consistent with~\citep{zhou2025urs}, we report the best result with $\times8$ and $\times128$ instance augmentation for symmetric and asymmetric instances, respectively. The mark ($-$) indicates that the method does not support the corresponding setting.

\begin{table*}[t]
  \centering
  \caption{Performance comparison on the 12 VRP variants used in SPACE training.}
  \resizebox{\textwidth}{!}{
    \begin{tabular}{l|cc|cc|cc|cc|cc|cc}
    \toprule[0.5mm]
    \multirow{2}[2]{*}{Method} & \multicolumn{2}{c|}{CVRP100} & \multicolumn{2}{c|}{ACVRP100} & \multicolumn{2}{c|}{CVRPTW100} & \multicolumn{2}{c|}{CVRPB100} & \multicolumn{2}{c|}{OCVRP100} & \multicolumn{2}{c}{OCVRPTW100} \\
          & Gap   & Time  & Gap   & Time  & Gap   & Time  & Gap   & Time  & Gap   & Time  & Gap   & Time \\
    \midrule
    Oracle & 0.00\% & 9.1m  & 0.00\% & 6.6m  & 0.00\% & 19.6m & 0.00\% & 20.8m & 0.00\% & 5.3m  & 0.00\% & 20.8m \\
    \midrule
    MVMoE & 1.65\% & 12s   & \multicolumn{2}{c|}{$-$} & 4.90\% & 15s   & 1.28\% & 11s   & 3.14\% & 13s   & 3.85\% & 15s \\
    MTPOMO & 1.85\% & 6s    & \multicolumn{2}{c|}{$-$} & 5.31\% & 8s    & 1.67\% & 5s    & 3.46\% & 6s    & 4.41\% & 7s \\
    ReLD-MTL & \greybg{1.42\%} & 8s    & \multicolumn{2}{c|}{$-$} & \greybg{4.56\%} &  10s  & \greybg{0.90\%} & 6s    & \greybg{2.32\%} & 7s    & \greybg{3.10\%} & 9s \\
    GOAL-MTL & 4.22\% & 48s   & \multicolumn{2}{c|}{$-$} & 4.66\% & 42s   & \multicolumn{2}{c|}{$-$} & \multicolumn{2}{c|}{$-$} & \multicolumn{2}{c}{$-$} \\
    \midrule
    URS   & 1.81\% & 6s    & 3.06\% & 1.4m  & 6.13\% & 8s    & 1.46\% & 6s    & 3.24\% & 6s    & 5.07\% & 8s \\
    SPACE & 1.83\% & 7s    & \greybg{2.67\%} & 1.8m  & 5.36\% & 9s    & 1.45\% & 6s    & 3.31\% & 7s    & 4.20\% & 8s \\
    \midrule
    \midrule
    \multirow{2}[2]{*}{Method} & \multicolumn{2}{c|}{TSP100} & \multicolumn{2}{c|}{ATSP100} & \multicolumn{2}{c|}{OP100} & \multicolumn{2}{c|}{PCTSP100} & \multicolumn{2}{c|}{PDTSP100} & \multicolumn{2}{c}{ACVRPBTW} \\
          & Gap   & Time  & Gap   & Time  & Gap   & Time  & Gap   & Time  & Gap   & Time  & Gap   & Time \\
    \midrule
    Oracle & 0.00\% & 6m    & 0.00\% & 2m    & 0.00\% & 1.5m  & 0.00\% & 1.2h  & 0.00\% & 9.8m  & 0.00\% & 3.5h \\
    \midrule
    GOAL-MTL & \multicolumn{2}{c|}{$-$} & \greybg{1.77\%} & 1m    & 1.20\% & 38s   & \multicolumn{2}{c|}{$-$} & \multicolumn{2}{c|}{$-$} & \multicolumn{2}{c}{$-$} \\
    \midrule
    URS   & \greybg{0.57\%} & 6s    & 2.26\% & 1.1m  & 0.45\% & 4s    & 1.06\% & 5s    & 4.98\% & 4s    & 9.95\% & 2.2m \\
    SPACE & 0.64\% & 5s    & 2.89\% & 1.4m  & \greybg{0.44\%} & 5s    & \greybg{0.88\%} & 5s    & \greybg{4.71\%} & 5s    & \greybg{-0.52\%} & 2.1m \\
    \bottomrule[0.5mm]
    \end{tabular}%
    }
  \label{tab:seen_performance}%
\end{table*}%

\subsection{Performance Evaluation}

\begin{table*}[t]
  \centering
  \caption{Zero-shot generalization on 24 unseen VRPs.  }
  \resizebox{\textwidth}{!}{
    \begin{tabular}{l|ccc|ccc|ccc}
    \toprule[0.5mm]
          & Symmetric & Asymmetric & Average & Symmetric & Asymmetric & Average & Symmetric & Asymmetric & Average \\
    Method & Gap (Time) & Gap (Time) & Gap   & Gap (Time) & Gap (Time) & Gap   & Gap (Time) & Gap (Time) & Gap \\
    \midrule
    \midrule
    Problem & \multicolumn{3}{c|}{CVRPBPLTW} & \multicolumn{3}{c|}{OCVRPLTW} & \multicolumn{3}{c}{CVRPBLTW} \\
    \midrule
    Oracle & 0.00\% (6.6m) & 0.00\% (6.6m) & 0.00\% & 0.00\% (3.5h) & 0.00\% (3.5h) & 0.00\% & 0.00\% (3.5h) & 0.00\% (3.5h) & 0.00\% \\
    MVMoE & 16.66\% (16s) & $-$ & $-$ & 3.90\% (15s) & $-$ & $-$ & 7.33\% (16s) & $-$ & $-$ \\
    MTPOMO & 17.15\% (10s) & $-$ & $-$ & 4.37\% (8s) & $-$ & $-$ & 7.75\% (8s) & $-$ & $-$ \\
    ReLD-MTL & 15.95\% (12s) & $-$ & $-$ & 3.16\% (9s) & $-$ & $-$ & 6.94\% (9s) & $-$ & $-$ \\
    URS   & 18.33\% (10s) & 13.82\% (2.8m) & 16.08\% & 5.12\% (9s) & 16.35\% (2.8m) & 10.74\% & 9.18\% (8s) & 10.18\% (2.4m) & 9.68\% \\
    SPACE & 17.54\% (11s) & 4.16\% (2.9m) & \greybg{10.85\%} & 4.26\% (9s) & 6.99\% (2.6m) & \greybg{5.63\%} & 8.29\% (8s) & -0.37\% (2.3m) & \greybg{3.96\%} \\
    \midrule
    \midrule
    Problem & \multicolumn{3}{c|}{OCVRPBTW} & \multicolumn{3}{c|}{OCVRPBPTW} & \multicolumn{3}{c}{MDOCVRPLTW} \\
    \midrule
    Oracle & 0.00\% (3.5h) & 0.00\% (6.6m) & 0.00\% & 0.00\% (6.6m) & 0.00\% (6.6m) & 0.00\% & 0.00\% (6.6m) & 0.00\% (6.6m) & 0.00\% \\
    MVMoE & 9.95\% (15s) & $-$ & $-$ & 9.42\% (15s) & $-$ & $-$ & 35.82\% (2.1m) & $-$ & $-$ \\
    MTPOMO & 10.45\% (6s) & $-$ & $-$ & 9.94\% (9s) & $-$ & $-$ & 30.08\% (1.5m) & $-$ & $-$ \\
    ReLD-MTL & 9.29\% (8s) & $-$ & $-$ & 8.43\% (11s) & $-$ & $-$ & 24.65\% (1.8m) & $-$ & $-$ \\
    URS   & 13.77\% (7s) & 14.45\% (2.3m) & 14.11\% & 10.05\% (8s) & 17.45\% (2.5m) & 13.75\% & 21.88\% (1.1m) & 24.30\% (20m) & 23.09\% \\
    SPACE & 11.60\% (7s) & 5.48\% (2.2m) & \greybg{8.54\%} & 9.62\% (9s) & 9.65\% (2.7m) & \greybg{9.64\%} & 18.16\% (1.1m) & 15.35\% (20m) & \greybg{16.75\%} \\
    \midrule
    \midrule
    Problem & \multicolumn{3}{c|}{CVRPLTW} & \multicolumn{3}{c|}{OCVRPB} & \multicolumn{3}{c}{MDCVRPLTW} \\
    \midrule
    Oracle & 0.00\% (3.5h) & 0.00\% (3.5h) & 0.00\% & 0.00\% (3.5h) & 0.00\% (3.5h) & 0.00\% & 0.00\% (6.6m) & 0.00\% (6.6m) & 0.00\% \\
    MVMoE & 1.47\% (16s) & $-$ & $-$ & 7.09\% (12s) & $-$ & $-$ & 41.51\% (2.1m) & $-$ & $-$ \\
    MTPOMO & 1.92\% (9s) & $-$ & $-$ & 7.34\% (5s) & $-$ & $-$ & 34.21\% (1.5m) & $-$ & $-$ \\
    ReLD-MTL & 1.17\% (11s) & $-$ & $-$ & 5.36\% (6s) & $-$ & $-$ & 26.44\% (1.8m) & $-$ & $-$ \\
    URS   & 2.67\% (9s) & 12.22\% (2.7m) & 7.45\% & 9.35\% (6s) & 18.05\% (2.5m) & 13.70\% & 32.92\% (1.3m) & 15.26\% (22m) & 24.09\% \\
    SPACE & 1.82\% (9s) & 2.26\% (2.6m) & \greybg{2.04\%} & 7.33\% (6s) & -1.34\% (1.7m) & \greybg{2.99\%} & 27.29\% (1.3m) & 4.64\% (21m) & \greybg{15.97\%} \\
    \midrule
    \midrule
    Problem & \multicolumn{3}{c|}{MDCVRPTW} & \multicolumn{3}{c|}{CVRPBPTW} & \multicolumn{3}{c}{MDOCVRPB} \\
    \midrule
    Oracle & 0.00\% (6.6m) & 0.00\% (6.6m) & 0.00\% & 0.00\% (6.6m) & 0.00\% (6.6m) & 0.00\% & 0.00\% (6.6m) & 0.00\% (6.6m) & 0.00\% \\
    MVMoE & 40.61\% (2.1m) & $-$ & $-$ & 16.86\% (15s) & $-$ & $-$ & 29.77\% (1.5m) & $-$ & $-$ \\
    MTPOMO & 33.05\% (1.5m) & $-$ & $-$ & 17.22\% (9s) & $-$ & $-$ & 24.80\% (1.1m) & $-$ & $-$ \\
    ReLD-MTL & 25.48\% (1.8m) & $-$ & $-$ & 16.09\% (11s) & $-$ & $-$ & 17.66\% (1.2m) & $-$ & $-$ \\
    URS   & 31.51\% (1.3m) & 15.38\% (21m) & 23.45\% & 18.15\% (9s) & 14.52\% (2.6m) & 16.33\% & 13.46\% (47s) & 25.95\% (16m) & 19.70\% \\
    SPACE & 25.97\% (1.2m) & 4.73\% (19m) & \greybg{15.35\%} & 17.57\% (10s) & 4.73\% (2.7m) & \greybg{11.15\%} & 13.07\% (47s) & 11.03\% (12m) & \greybg{12.05\%} \\
    \bottomrule[0.5mm]
    \end{tabular}%
    }
  \label{tab:unseen_performance}%
\end{table*}%

\paragraph{Performance on Seen VRPs}

As shown in Table \ref{tab:seen_performance}, SPACE remains highly competitive across the 12 seen variants among all comparable solvers and achieves smaller optimality gaps than URS on most tasks. Following the URS philosophy, we reduce reliance on problem-specific constraint states and use a model-agnostic masking function that handles diverse constraints during decoding (see Appendix \ref{append:problem_state}). While this design choice may yield inferior results on small-scale seen VRPs compared to neural solvers that explicitly embed these states (e.g., ReLD-MTL), SPACE demonstrates a clear advantage on both unseen asymmetric variants (Table \ref{tab:unseen_performance}) and large-scale instances (Table \ref{tab:large_scale_realistic}).

\paragraph{Generalization on Unseen VRPs}

We evaluate the zero-shot generalization of SPACE across 98 unseen VRP variants. As shown in Table \ref{tab:unseen_performance} (see Appendix \ref{sec:detailed_experimental_results} for detailed results), SPACE delivers the lowest average gap on most variants. Crucially, it achieves superior results on asymmetric variants while remaining competitive on symmetric ones, demonstrating its strong zero-shot generalization on both symmetric and asymmetric settings.

\begin{table*}[t]
  \centering
  \caption{Scalability comparison between different solvers on complex VRP variants with 1000 nodes.}
  \resizebox{\textwidth}{!}{
    \begin{tabular}{l|ccc|ccc|ccc}
    \toprule[0.5mm]
          & Symmetric & Asymmetric & Average & Symmetric & Asymmetric & Average & Symmetric & Asymmetric & Average \\
    Method & Gap (Time) & Gap (Time) & Gap   & Gap (Time) & Gap (Time) & Gap   & Gap (Time) & Gap (Time) & Gap \\
    \midrule
    Problem & \multicolumn{3}{c|}{CVRPB1000} & \multicolumn{3}{c|}{OCVRP1000} & \multicolumn{3}{c}{CVRPTW1000} \\
    \midrule
    Oracle & 0.00\% (20m) & 0.00\% (20m) & 0.00\% & 0.00\% (20m) & 0.00\% (20m) & 0.00\% & 0.00\% (20m) & 0.00\% (20m) & 0.00\% \\
    MTPOMO & 25.77\% (1.3m) & $-$ & $-$ & 63.98\% (1.3m) & $-$ & $-$ & 37.57\% (1.7m) & $-$ & $-$ \\
    MVMoE & 170.03\% (1.0m) & $-$ & $-$ & 149.38\% (1.2m) & $-$ & $-$ & 46.55\% (1.7m) & $-$ & $-$ \\
    ReLD-MTL & 5.48\% (1.0m) & $-$ & $-$ & 24.75\% (1.2m) & $-$ & $-$ & 15.04\% (1.6m) & $-$ & $-$ \\
    URS   & -4.54\% (40s) & 4.37\% (37s) & -0.09\% & 14.84\% (45s) & 27.51\% (51s) & 21.18\% & 8.30\% (1.3m) & 4.20\% (1.1m) & 6.25\% \\
    SPACE & -4.16\% (40s) & -19.10\% (33s) & \greybg{-11.63\%} & 13.56\% (44s) & 10.28\% (38s) & \greybg{11.92\%} & 7.34\% (1.3m) & -2.92\% (1.2m) & \greybg{2.21\%} \\
    \bottomrule[0.5mm]
    \end{tabular}%
    }
  \label{tab:large_scale_realistic}%
\end{table*}%

\paragraph{Results on Benchmark Dataset}

\begin{wraptable}[9]{r}{0.55\textwidth}
\vspace{-19pt}
\begin{center}
  \centering
  \caption{Comparison on CVRPLIB Set-X and Set-XXL. }
  \resizebox{\linewidth}{!}{
    \begin{tabular}{l|c|c|c|c|c}
    \toprule[0.5mm]
    Method & 500 $< n \leq$ 1K & 1K $< n \leq$ 5K & 5K $< n \leq$ 7K & ALL   & Solved \# \\
    \midrule
    MTPOMO & 16.80\% & 77.52\% & OOM   & $-$ & 32/34 \\
    MVMoE/4E & 26.41\% & 234.60\% & OOM   & $-$ & 32/34 \\
    MVMoE/4E-L & 19.61\% & 154.68\% & OOM   & $-$ & 32/34 \\
    RF-MVMoE & 18.80\% & 108.79\% & OOM   & $-$ & 32/34 \\
    RF-TE & 12.30\% & 36.23\% & OOM   & $-$ & 32/34 \\
    CaDA  & 334.84\% & OOM   & OOM   & $-$ & 31/34 \\
    ReLD-MTL & 9.44\% & 23.80\% & OOM   & $-$ & 32/34 \\
    URS   & 8.68\% & 14.69\% & 12.09\% & 9.20\% & 34/34 \\
    SPACE & \greybg{7.86\%} & \greybg{12.53\%} & \greybg{11.76\%} & \greybg{8.34\%} & 34/34 \\
    \bottomrule[0.5mm]
    \end{tabular}%
    }
  \label{tab:benchmark_summary}%
\end{center}
\end{wraptable}
We evaluate SPACE on large-scale benchmark instances from CVRPLIB Set-X~\citep{uchoa2017cvrplib_setx} and Set-XXL~\citep{arnold2019cvrplib_xxl}.  Note that all models are trained on 100-scale instances. As summarized in Table \ref{tab:benchmark_summary}, SPACE still maintains its position as the best overall solver among all comparable neural solvers. These results further underscore the practical applicability of SPACE in real-world large-scale scenarios. Complete instance-level results are reported in \cref{tb:exp_benchmark_large_scale,tab:setxxl_detailed}.

\paragraph{Scalability on Large-scale Realistic Problems}
To further assess scalability, we evaluate SPACE on symmetric and asymmetric VRP variants comprising 1,000 nodes. As shown in Table \ref{tab:large_scale_realistic}, SPACE consistently outperforms all evaluated baselines and achieves the best overall performance on all three evaluated variants. These results demonstrate the robust scalability of SPACE in addressing large-scale complex VRPs.

\begin{wraptable}[14]{r}{0.5\textwidth}
\vspace{-18pt}
\begin{center}
  \centering
  \caption{Ablation summary of SPACE.}
  \resizebox{\linewidth}{!}{
    % Table generated by Excel2LaTeX from sheet 'summary'
    \begin{tabular}{ccccc|cc}
    \toprule[0.5mm]
    \multirow{2}[2]{*}{Pivot} & \multirow{2}[2]{*}{FPS} & \multirow{2}[2]{*}{Depot} & \multirow{2}[2]{*}{Bi.} & \multirow{2}[2]{*}{WDAD} & \multicolumn{2}{c}{Avg.gap} \\
          &       &       &       &       & Sym.  & Asym. \\
    \midrule
    $\times$ & $\times$ & $\times$ & $\times$ & 3     & 7.67\% & 2.38\% \\
    5     & $\checkmark$ & $\checkmark$ & $\checkmark$ & 3     & 5.19\% & 2.04\% \\
    10    & $\checkmark$ & $\checkmark$ & $\checkmark$ & 3     & 5.50\% & 2.35\% \\
    20    & $\checkmark$ & $\checkmark$ & $\checkmark$ & 3     & 5.83\% & 1.54\% \\
    30    & $\checkmark$ & $\checkmark$ & $\checkmark$ & 3     & 5.70\% & 2.55\% \\
    8     & $\times$ & $\checkmark$ & $\checkmark$ & 3     & 83.10\% & 95.72\% \\
    8     & $\checkmark$ & $\times$ & $\checkmark$ & 3     & 5.85\% & 2.06\% \\
    8     & $\checkmark$ & $\checkmark$ & $\times$ & 3     & 5.80\% & 2.64\% \\
    8     & $\checkmark$ & $\checkmark$ & $\checkmark$ & $\times$ & 7.06\% & 3.82\% \\
    8     & $\checkmark$ & $\checkmark$ & $\checkmark$ & 1     & 5.58\% & 2.23\% \\
    8     & $\checkmark$ & $\checkmark$ & $\checkmark$ & 3     & \greybg{5.11\%} & \greybg{1.27\%} \\
    \bottomrule[0.5mm]
    \end{tabular}%
    }   
  \label{tab:ablation_summary}%
\end{center}
\end{wraptable}

\section{Ablation Study}

We conduct ablation studies across 16 widely studied CVRP variants and their asymmetric counterparts to validate the contributions of BFR and WDAD. Table \ref{tab:ablation_summary} reports the average gaps on symmetric and asymmetric variants. Replacing the entire BFR with a distance-based unified representation (similar to GOAL~\citep{drakulic2025goal}) degrades the symmetric performance, while removing FPS results in significant degradation in both settings, indicating that stable pivot selection is necessary for using reference-distance features. The depot initialization and bidirectional coordinates (denoted as Bi.) also contribute to the final performance. For WDAD, replacing it with the original URS hypernetwork or reducing it to a single head both weaken performance, suggesting that decomposed multi-head updates are useful for transferring constraint logic across problem settings. Detailed per-variant results and the meaning of each ablation option are provided in \cref{tab:ablation_bfr_details,tab:ablation_pivot_details,tab:ablation_wdad_details} in Appendix~\ref{append:ablation}, respectively.

\section{Conclusion, Limitation, and Future Work}
\label{sec:conclusion}

In this work, we propose SPACE, a generalist neural routing framework for unifying symmetric and asymmetric VRPs. The core of SPACE is a bidirectional Fréchet representation that defines node positions in terms of incoming and outgoing distances to pivots selected via furthest pivot sampling, enabling a coordinate-free representation shared across distinct problem settings. Its generalization capability is further enhanced by weight-decomposed adaptive decoding, which separates geometry-invariant routing knowledge from constraint-specific decision logic. Extensive experiments on 110 VRP variants show that SPACE significantly improves performance in asymmetric settings while maintaining competitive performance on symmetric variants, and further scales to large benchmark instances without fine-tuning.

\paragraph{Limitation and Future Work}

A current limitation of SPACE is that it deliberately reduces reliance on problem-specific constraint states. We retain only essential decoder state signals and use a model-agnostic masking function to handle diverse constraints during decoding, which favors cross-problem transfer but can weaken constraint-aware performance on constraint-heavy variants. Improving the model's constraint awareness while preserving model agnosticism is a key direction for future work.

{\small

\bibliographystyle{unsrtnat} 
\bibliography{references}        % References
}

\clearpage
% APPENDIX
%%%%%%%%%%%%%%%%%%%%%%%%%%%%%%%%%%%%%%%%%%%%%%%%%%%%%%%%%%%%%%%%%%%%%%%%%%%%%%%
%%%%%%%%%%%%%%%%%%%%%%%%%%%%%%%%%%%%%%%%%%%%%%%%%%%%%%%%%%%%%%%%%%%%%%%%%%%%%%%
\appendix

\section{Related Work}
\label{append:related_work}
\subsection{Specialist Neural Solvers for Symmetric VRPs}
\label{sec:related_work_stl_s}
Symmetric VRPs constitute the primary focus of neural routing research. Since the seminal application of Ptr-Nets to the TSP~\citep{vinyals2015pointer,bello2016neural}, a succession of autoregressive constructive solvers has emerged for TSPs and CVRPs and already demonstrated promising results on small instances (e.g., 100-node TSPs)~\citep{nazari2018reinforcement,xin2020stepwise,kim2022symnco}. Notably, the Transformer-style Attention Model (AM)~\citep{kool2019attention} and its multi-trajectory variant, POMO~\citep{kwon2020pomo}, have become the prevailing architectures. However, these models struggle with large-scale generalization, spurring researchers to improve them by varying-scale training~\cite{zhou2024icam,zhou2023omni,huang2025reld}, search space reduction~\citep{fang2024invit,zhou2025L2R,chen2025projection}, incorporating distance information~\citep{gao2023elg,zhou2024icam,huang2025reld}, and alternative heavy decoder architectures~\citep{drakulic2023bq,luo2023lehd,luo2024Boosting}. Other paradigms mainly include problem decomposition~\citep{zheng2024udc,ye2023glop} and non-autoregressive heatmap generation augmented with additional searches (e.g., Monte Carlo tree search (MCTS)~\citep{fu2021attgcn_mcts,qiu2022dimes} and 2-opt~\citep{li2024T2T,sun2023difusco}). Specialist solvers have also emerged for variants like heterogeneous fleets~\cite{li2022hcvrp}, pickup and delivery problems~\citep{li2021pdtsp}, CVRP with Backhauls~\citep{wang2024cvrpb}, and min-max multiple VRPs~\citep{zheng2024dpn}. Among these, constructive solvers are most common but are typically designed for VRPs and typically fail in asymmetric settings.

\subsection{Specialist Neural Solvers for Asymmetric VRPs}
\label{sec:related_work_stl_a}
Asymmetric VRPs (AVRPs) have garnered increasing attention due to their alignment with real-world logistics. Since AVRPs lack explicit coordinate information, research primarily focuses on node representations and on incorporating distance matrices. MatNet~\citep{kwon2021matnet} reformulates ATSPs as a bipartite graph and introduces Mixed-Score Attention to incorporate distance information, establishing itself as the mainstream architecture. However, its reliance on fixed one-hot vectors hinders scalability. \citet{pan2025unico} address this limitation by introducing Pseudo One-hot Embeddings. ICAM~\citep{zhou2024icam} introduces $k$-nearest distance features and a lightweight distance-biased attention mechanism to enhance asymmetric spatial perception. Subsequently, \citet{son2025rrnco} further validate effectiveness on real-world scenarios by integrating neural adaptive bias and distance-based probabilistic sampling. Alternatively, the random identifier has been used for each node as its initial node feature in \citep{drakulic2023bq,huang2025reld}, while GLOP~\citep{ye2023glop} and UDC~\citep{zheng2024udc} leverage divide-and-conquer strategies to improve large-scale generalization. Despite these advancements, existing AVRP solvers generally employ highly specialized architectures to address the intrinsic asymmetry, which limits their applicability to symmetric problems and the development of a unified framework.

\subsection{Generalist Neural Routing Solvers}
\label{sec:related_work_mtl}
To address the challenge of cross-problem generalization, growing attention has been directed toward multi-task learning capable of handling diverse routing problems. A primary approach formulates VRP variants as combinations of predefined constraints and trains a shared neural model~\citep{liu2024mtpomo,zhou2024mvmoe,zheng2025mtl_kd,li2025cada}. While achieving promising results with up to 48 variants~\citep{berto2024routefinder}, these methods are inherently bounded by manually specified constraint sets. More critically, they struggle to transfer to asymmetric settings. Another mainstream paradigm is adapter-based fine-tuning~\citep{wang2025mtl_mab,lin2024cross}. Although GOAL~\citep{drakulic2025goal} performs well in both seen symmetric and asymmetric VRPs, this paradigm lacks the capability for zero-shot generalization to unseen problems. Most recently, URS~\citep{zhou2025urs} replaces problem enumeration with a unified data representation and successfully covers over 100 (A)VRP variants without any fine-tuning. However, URS still relies on distinct positional representations for symmetric and asymmetric instances. In contrast, this work constructs a neural solver with zero-shot capabilities, enabling unified encoding and maintaining high performance across a wide range of VRPs.

\clearpage

\section{Setups of VRP Variants}
\label{append:setup_constraints}

\subsection{Symmetry and Asymmetry} 

In symmetric VRP, the distance function is defined as a standard metric that satisfies non-negativity, identity, symmetry, and the triangle inequality. Under this formulation, the problem is naturally modeled as an undirected weighted graph embedded in a metric space, where the distance between any two nodes is reversible and directly determined by the underlying geometry. In contrast, asymmetric VRP relaxes the symmetry requirement (i.e, $d(v_i, v_j) \neq d(v_j, v_i)$), resulting in a fundamentally different distance structure. While non-negativity, identity of indiscernibles, and the triangle inequality are retained, the distance function strictly defines a quasimetric space. This change from a metric to a quasimetric formulation transforms the problem into a directed weighted graph with irreversible costs. Compared to the symmetric setting, it fundamentally modifies the structure of the solution space and the associated algorithmic properties.

\paragraph{Symmetry}
To construct symmetric instances, we randomly sample node coordinates from a 2D unit square, denoted as $\mathcal{X}_i \sim \mathcal{U}[0, 1]^2$. We then compute the distance $d(v_i, v_j)$ between nodes $i$ and $j$, which is generally the Euclidean distance. The symmetric instance naturally forms a metric space (i.e., $d(v_i, v_j) = d(v_j, v_i), \forall v_i, v_j \in \mathcal{V}$).

\paragraph{Asymmetry}
To generate asymmetric VRP instances, we adopt the construction method in \citet{kwon2021matnet}, which directly generates distance matrices of problem instances. We initialize an integer-based directed distance matrix $\bm{D}$, where we independently sample $d(v_i, v_j)^{(0)}$ ($i \neq j$) from the discrete interval $[0, 10^6)$ and set the diagonal elements $d(v_i, v_i) = 0$. Due to the initial distance matrix typically violating the triangle inequality, we iteratively apply the min-plus closure rule: $d(v_i, v_j)^{(m+1)} = \min_{l} \left( d(v_i, v_l)^{(m)} + d(v_l, v_j)^{(m)} \right)$  to enforce the quasimetric property. We compute this transitive closure in a min-plus semiring, terminating the process upon convergence. The resulting matrix $\bm{D}$ strictly satisfies the triangle inequality $d(v_i, v_j) \le d(v_i, v_k) + d(v_k, v_j)$ while generally maintaining asymmetry ($d(v_i, v_j) \neq d(v_j, v_i)$), thus forming an asymmetric metric VRP instance. Finally, we scale the costs as $\tilde{d}(v_i, v_j) = d(v_i, v_j) / \text{scaler}$ to ensure numerical stability for neural network training. Overall, the resulting asymmetric VRP instances are defined over a geometric space that is fundamentally different from the Euclidean metric space underlying symmetric VRP. While both settings satisfy the triangle inequality, the asymmetric distance structure induces a directed geometric space in which spatial relationships are inherently orientation-dependent, leading to a distinct notion of proximity and path optimality compared to the symmetric case.

\subsection{Diverse Constraints}
\paragraph{Capacity (C)}
For all PD-related variants except for PDTSP, we follow the configuration established by \citet{zhou2025urs} and set $C=20$. For other VRP variants, we adopt the conventional settings in \citet{kool2019attention,kwon2020pomo}, where $C=50$. Node demands are sampled from a discrete uniform distribution over the set $\{1, \dots, 9\}$.
\paragraph{Open Route (O)}
Under the Open Route configuration, vehicles are not mandated to return to the depot; thus, the final return leg is omitted from the total cost calculation. This relaxation propagates to other constraints: (i) for O-TW, the constraint $t_i \le T_{\text{end}}$ is relaxed for the final destination, where $T_{\text{end}}$ denotes the depot end time; (ii) for O-L, the route length limit is evaluated without considering the distance back to the depot.
\paragraph{Backhaul (B)}
Conventional CVRP limits operations to deliveries from a depot to linehaul nodes. The inclusion of backhaul constraints introduces backhaul nodes, requiring goods collection and return transport to the depot. Following the setting in \citet{liu2024mtpomo}, node demands are sampled from a discrete uniform distribution $U\{1, 2, \dots, 9\}$. We negate the demands of a randomly selected $20\%$ subset, thereby designating nodes with positive and negative values as linehaul and backhaul customers, respectively. Notably, no precedence constraints are imposed between these two node categories. Following \citet{zhou2024mvmoe}, we mandate that the first customer in each route be a linehaul node to ensure solution feasibility, unless the remaining unvisited nodes consist entirely of backhauls. 
\paragraph{Backhaul and Priority (BP)}
Building upon the B constraint, the BP variant incorporates additional precedence constraints regarding the service order: within each route, all linehaul nodes must be visited prior to any backhaul nodes. If a backhaul node is selected as the initial customer, the vehicle capacity is initialized to zero to reflect the absence of delivery load.
\paragraph{Duration Limit (L)}
For symmetric instances, node coordinates are sampled from $\mathcal{U}[0, 1]^2$, resulting in a maximum depot-to-customer distance of $\sqrt{2}$. The round-trip distance for a single-node route is thus capped at $2\sqrt{2} \approx 2.82$. To ensure feasibility across all instances, we set the duration limit to $L=3.0$, consistent with the setting in \citet{zhou2024mvmoe}. For asymmetric instances, we adopt the setting from \citet{zhou2025urs} and fix the duration limit at $L=0.6$.
\paragraph{Time Windows (TW)}
For symmetric instances, the depot time window $[e_0,l_0]$ is defined as $[0, 3]$ with service time $s_0=0$. For customer nodes, we adopt the generation scheme from \citet{zhou2024mvmoe} for time windows and set the service time $s_i$ to $0.2$. For asymmetric instances, we follow the \cite{zhou2025urs} configuration by setting $[e_0,l_0] = [0, 1]$, while maintaining all other parameters consistent with the symmetric case.
\paragraph{Multi-Depot (MD)}
Consistent with \citet{zhou2025urs}, we fix the depot count at $3$, sampling coordinates from $\mathcal{U}[0, 1]^2$ identically to customer nodes. Unlike relaxed formulations that permit open routing, we enforce a strict \textit{Return-to-Origin} constraint: every valid route must form a closed loop originating from and terminating at the same depot $v_d$. To unify multi-depot management within a single solution sequence, we model the transition between differing depots (e.g., $v_d \to v_e$) as a zero-cost virtual switch. This formulation allows the agent to sequentially optimize disjoint fleets while strictly excluding inter-depot travel distances from the objective function. For a detailed description of the selection strategy for multiple depots, please refer to Appendix~\ref{app:select_md}.

\paragraph{Prize Collecting (PC)}
Under the PC constraint setting, each customer node is associated with a prize and a penalty. While visiting a node accrues its prize, skipping it incurs a corresponding penalty. The objective is to minimize the total cost, defined as the sum of the solution length and the penalties for all unvisited nodes, subject to a minimum prize-collection requirement. Following the configuration in \citet{kool2019attention}, the prizes are sampled from a continuous uniform distribution $\mathcal{U}(0, 4/n)$. The penalties are drawn from $\mathcal{U}(0, 3k_n/n)$, where $n=100$ and $k_n=4$. These parameters are scaled relative to the problem size to maintain consistent task difficulty.

\paragraph{Pickup and Delivery (PD)}
In the PD setting, nodes are partitioned into pickup and delivery pairs. Two fundamental requirements must be satisfied: (1) Precedence: each delivery node must be visited only after its corresponding pickup node; and (2) Completion: the vehicle is prohibited from returning to the depot until all collected loads have been delivered, ensuring that the vehicle is empty at the end of each route. Following the generation scheme in \citet{li2021heterogeneous}, we designate node $0$ as the depot for an instance of size $n$. We define nodes $1$ to $n/2$ as pickup nodes and nodes $n/2+1$ to $n$ as their corresponding delivery nodes. Specifically, for each pickup node $i$, we assign node $i+n/2$ as its unique delivery partner. For PDCVRP and its variants, we sample the demand for each pickup node from a discrete uniform distribution $\mathcal{U}\{1, 9\}$, while setting the demand of the paired delivery node to its negative equivalent to ensure load balance.

\subsection{The Selection Strategy of Multiple Depots}
\label{app:select_md}

Unlike prior works that treat depot returns implicitly~\citep{berto2024routefinder}, we explicitly parse the autoregressive action sequence into depot-rooted route segments. Let $\mathcal{V}_{\mathrm{depot}}\subset\mathcal{V}$ denote the depot set. A nonempty route segment that starts from depot $v_d\in\mathcal{V}_{\mathrm{depot}}$ must terminate at the same depot, i.e., it has the form $v_d\to v_{i_1}\to\cdots\to v_{i_s}\to v_d$. Consecutive depot actions, such as $v_d\to v_e$, are interpreted only as a fleet-selection switch between two route segments. They are not physical travel arcs and do not modify the original distance function $d$.

\paragraph{Return-to-Origin Masking}
During decoding, we maintain an origin-depot state for each partial trajectory. When the current node is a customer, the masking function allows at most the origin depot among all depot nodes and assigns $-\infty$ logits to the other depots. This prevents invalid segments such as $v_d\to v_i\to v_e$ with $v_e\neq v_d$. For pickup-delivery variants, returning to the origin depot is further masked until all picked-up loads in the current segment have been delivered. When the current node is already a depot and unserved customers remain, the decoder may either start a customer-serving segment from the current depot or select another depot as a control action that switches the active fleet. Once all customers have been served, only the current depot remains available, which terminates the trajectory.

The multi-depot objective is computed from the original distance matrix after the sequence has been generated. For a closed route segment $v_d\to v_{i_1}\to\cdots\to v_{i_s}\to v_d$, its cost is
\begin{equation}
    d(v_d,v_{i_1})+\sum_{r=1}^{s-1}d(v_{i_r},v_{i_{r+1}})+d(v_{i_s},v_d).
\end{equation}
For open-route variants, the final return term is omitted. Consecutive depot-to-depot control actions contribute zero switch cost, but this zero cost is applied only at the sequence-evaluation level; the quasimetric distance matrix itself remains unchanged.

\paragraph{Global Lookahead Decoding}
A depot state is a branching point because the next useful route may start from any depot. Greedily decoding only from the currently active depot can therefore underuse distant fleets. To address this, when a trajectory is at a depot we evaluate every candidate depot $v_e\in\mathcal{V}_{\mathrm{depot}}$ before committing to the next action. For each $v_e$, we query the decoder with the encoded representation of $v_e$ and the outgoing distance vector from $v_e$ to all nodes, while masking all depot nodes so that the comparison is based on feasible customer choices. We then record the highest customer-selection probability under this hypothetical start. If the best candidate depot is the current depot, the decoder directly selects the corresponding customer; otherwise, it first selects the best candidate depot as a fleet-switch action. For trajectories that are not at a depot, the standard autoregressive decoder is used without this lookahead.

\clearpage
\section{Detailed Theoretical Proofs}

This section formalizes the geometric claims used in Section \ref{sec:preliminary}. We model VRP distances as finite quasimetrics, analyze stability under the symmetrized metric, and prove the key guarantees of Bidirectional Fr\'echet Representation: non-expansiveness, conditional non-contraction, coverage-induced separation, and the necessity of bidirectional coordinates.

\subsection{Distance Quasimetrics in VRPs}

\label{app:metric_defs}

\begin{definition}[Quasimetric Space]
\label{def:quasimetric}
The node set of a routing problem is defined as a pair $(\mathcal{V}, d)$, where $\mathcal{V}$ is a finite set and $d: \mathcal{V} \times \mathcal{V} \to \mathbb{R}_{\ge 0}$ satisfies:
\begin{enumerate}
    \item $d(v_i, v_j) \ge 0$ (Non-negativity)
    \item $d(v_i, v_i) = 0$ (Identity of indiscernibles; we assume $d(v_i, v_j) > 0$ if $i \neq j$)
    \item $d(v_i, v_k) \le d(v_i, v_j) + d(v_j, v_k)$ (Triangle Inequality)
\end{enumerate}
Unlike a standard metric, a quasimetric does not require symmetry, i.e., $d(v_i, v_j) \neq d(v_j, v_i)$ is permissible \cite{wilson1931quasi}.
\end{definition}

\begin{definition}[Symmetrized Metric for Stability Analysis]
\label{def:symmetrized_metric}
Given a quasimetric space $(\mathcal{V}, d)$, we define the symmetrized metric $D_{\mathrm{sym}}: \mathcal{V} \times \mathcal{V} \to \mathbb{R}_{\ge 0}$ as:
\begin{equation}
    D_{\mathrm{sym}}(v_i, v_j) = \max(d(v_i, v_j), d(v_j, v_i)).
\end{equation}
This is a valid metric: non-negativity, symmetry, and identity follow directly from the quasimetric assumptions, and the triangle inequality follows since
\begin{align}
    D_{\mathrm{sym}}(x,z)
    &= \max\{d(x,z),d(z,x)\} \nonumber\\
    &\le \max\{d(x,y)+d(y,z),d(z,y)+d(y,x)\} \nonumber\\
    &\le D_{\mathrm{sym}}(x,y)+D_{\mathrm{sym}}(y,z).
\end{align}
This metric represents the "worst-case" separation between two nodes and serves as the upper bound for Lipschitz continuity analysis in Section \ref{sec:method}.
\end{definition}

\begin{remark}[On Symmetrization]
While Bourgain's Theorem is originally stated for metric spaces (symmetric), theoretical guarantees for asymmetric spaces can be derived by considering the symmetrized metric $D_{\mathrm{sym}}(v_i, v_j) = \max(d(v_i, v_j), d(v_j, v_i))$ or $D_{\mathrm{sym}}(v_i, v_j) = d(v_i, v_j) + d(v_j, v_i)$. In our method, we explicitly handle asymmetry by constructing a \textit{directed} embedding that concatenates representations for both $d(v_i, \cdot)$ and $d(\cdot, v_i)$.
\end{remark}

\subsection{Preservation of Geometric Logic via Bi-Lipschitz Embeddings}
\label{app:geometric_logic}

Here we formally show why minimizing distortion with respect to a symmetric reference metric is useful for preserving routing geometry. In the BFR analysis below, this reference metric is $D_{\mathrm{sym}}$ rather than the raw asymmetric quasimetric $d$.

\begin{proposition}[Constraint Preservation]
\label{prop:constraint_preservation}
Let $D$ be a metric on $\mathcal{V}$ and let $\Phi: \mathcal{V} \to \ell_p$ be a bi-Lipschitz embedding with constants $\alpha,\beta>0$ and distortion $\mathcal{D}=\beta/\alpha$. Then relative metric neighborhoods are preserved up to factor $\mathcal{D}$: for any distinct $v_i,v_j,v_k\in\mathcal{V}$ with $D(v_i,v_k)>0$,
\begin{equation}
    \frac{\| \Phi(v_i)-\Phi(v_j)\|_p}
         {\| \Phi(v_i)-\Phi(v_k)\|_p}
    \le
    \mathcal{D}\,
    \frac{D(v_i,v_j)}{D(v_i,v_k)}.
\end{equation}
Moreover, any two-hop metric path obeys
\begin{equation}
    \| \Phi(v_i)-\Phi(v_k)\|_p
    \le
    \beta\bigl(D(v_i,v_j)+D(v_j,v_k)\bigr).
\end{equation}
\end{proposition}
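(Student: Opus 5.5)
The plan is to apply the two-sided bound \eqref{eq:bilipschitz} directly to each pair of nodes and then combine the resulting inequalities. No deeper structure is needed. The only care points are the direction in which each side of the bi-Lipschitz inequality is used, and ensuring the denominators are nonzero.

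For the ratio bound, fix distinct $v_i,v_j,v_k$ with $D(v_i,v_k)>0$.
\begin{itemize}
\item For the numerator, I use the upper (Lipschitz) side: $\|\Phi(v_i)-\Phi(v_j)\|_p \le \beta D(v_i,v_j)$.
\item For the denominator, I use the lower (non-contraction) side: $\|\Phi(v_i)-\Phi(v_k)\|_p \ge \alpha D(v_i,v_k)$.
\end{itemize}
Since $\alpha>0$ and $D(v_i,v_k)>0$, the denominator is strictly positive, so the left-hand ratio is well defined. Dividing the two inequalities gives
\[
\frac{\|\Phi(v_i)-\Phi(v_j)\|_p}{\|\Phi(v_i)-\Phi(v_k)\|_p}\le \frac{\beta D(v_i,v_j)}{\alpha D(v_i,v_k)}=\mathcal{D}\,\frac{D(v_i,v_j)}{D(v_i,v_k)}.
\]
Here I use that a nonnegative numerator may be replaced by a larger one, and a positive denominator by a smaller positive one.

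For the two-hop bound, I start from the upper side on the pair $(v_i,v_k)$, which gives $\|\Phi(v_i)-\Phi(v_k)\|_p\le \beta D(v_i,v_k)$. I then invoke the triangle inequality of the metric $D$ to get $D(v_i,v_k)\le D(v_i,v_j)+D(v_j,v_k)$. Multiplying by $\beta>0$ yields the claim.

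An alternative route is to apply the triangle inequality of $\|\cdot\|_p$ in the embedding space through $\Phi(v_j)$ and then bound each leg by $\beta$ times its metric length. This gives the same result.

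There is no substantive obstacle; the main point is bookkeeping. The lower bound must be used only in the denominator, and positivity of $D(v_i,v_k)$ together with $\alpha>0$ must be checked so that division is legitimate. I would also remark that when $D$ is taken to be $D_{\mathrm{sym}}$ from Definition~\ref{def:symmetrized_metric}, the required triangle inequality is exactly the one verified there.
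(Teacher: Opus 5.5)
Your proposal is correct and matches the paper's proof: you bound the numerator with the upper bi-Lipschitz inequality and the denominator with the lower one, and for the two-hop claim you combine the upper bound on $(v_i,v_k)$ with the triangle inequality of $D$. Your explicit check that the denominator is positive ($\alpha>0$ and $D(v_i,v_k)>0$) is left implicit in the paper, and your alternative route through the triangle inequality in $\ell_p$ needs $p\ge 1$, whereas your primary route does not.
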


\begin{proof}
By the definition of a bi-Lipschitz embedding,
\begin{equation}
    \alpha D(u,w)
    \le
    \| \Phi(u)-\Phi(w)\|_p
    \le
    \beta D(u,w)
    \quad \text{for all } u,w\in\mathcal{V}.
\end{equation}
Using the upper bound for $(v_i,v_j)$ and the lower bound for $(v_i,v_k)$ gives
\begin{equation}
    \frac{\| \Phi(v_i)-\Phi(v_j)\|_p}
         {\| \Phi(v_i)-\Phi(v_k)\|_p}
    \le
    \frac{\beta D(v_i,v_j)}{\alpha D(v_i,v_k)}
    =
    \mathcal{D}\frac{D(v_i,v_j)}{D(v_i,v_k)}.
\end{equation}
The two-hop bound follows from the upper bi-Lipschitz inequality and the triangle inequality of $D$:
\begin{equation}
    \| \Phi(v_i)-\Phi(v_k)\|_p
    \le
    \beta D(v_i,v_k)
    \le
    \beta\bigl(D(v_i,v_j)+D(v_j,v_k)\bigr).
\end{equation}
\end{proof}

\subsection{Bourgain's Embedding Theorem}
\label{app:bourgain_proof}

We use Bourgain's theorem as a known external result. The theorem justifies the general principle that distances to reference subsets can encode finite metric geometry with controlled distortion.

\begin{theorem}[\cite{bourgain1985lipschitz}]
\label{thm:bourgain_main}
Let $(\mathcal{V}, d)$ be any finite metric space with $|\mathcal{V}| = n$. There exists an embedding $\Phi: \mathcal{V} \to \ell_2^k$ with dimension $k = O(\log^2 n)$ such that the distortion is bounded by $\mathcal{D}(\Phi) \le C \log n$, where $C$ is a universal constant independent of the metric space structure.
\end{theorem}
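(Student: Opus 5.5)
We follow the standard probabilistic Fréchet-embedding proof of Bourgain's theorem. Write $n=|\mathcal{V}|$, $L=\lceil \log_2 n\rceil$ and $Q=\lceil c\log n\rceil$ for a constant $c$ fixed later. For each scale $t\in\{1,\dots,L\}$ and repetition $q\in\{1,\dots,Q\}$, sample a random subset $A_{t,q}\subseteq\mathcal{V}$ by including each node independently with probability $2^{-t}$. Define the coordinate $\phi_{t,q}(v)=d(v,A_{t,q})=\min_{a\in A_{t,q}}d(v,a)$, with the convention $\phi_{t,q}\equiv 0$ if $A_{t,q}$ is empty. Set $\Phi(v)=\frac{1}{\sqrt{LQ}}\bigl(\phi_{t,q}(v)\bigr)_{t,q}$. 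The dimension is $k=LQ=O(\log^2 n)$, as the statement requires.

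\textbf{Upper bound.} Because $d$ is a metric, each map $v\mapsto d(v,A)$ is 1-Lipschitz: $|d(u,A)-d(w,A)|\le d(u,w)$. This is the same argument as the non-expansive bound in Eq.~\eqref{eq:lipschitz_bound}. Summing the $LQ$ squared coordinates and dividing by $LQ$ gives $\|\Phi(u)-\Phi(w)\|_2\le d(u,w)$ deterministically, so $\beta=1$.

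\textbf{Lower bound.} This is the main obstacle. Fix a pair $u\neq w$. Let $B(x,r)$ denote the closed ball of radius $r$ around $x$, and let $B^\circ(x,r)$ denote the corresponding open ball. For $t=0,1,\dots$ define $\rho_t$ as the least radius $r$ such that both $|B(u,r)|\ge 2^t$ and $|B(w,r)|\ge 2^t$. Let $\hat t$ be the largest index with $\rho_{\hat t}<d(u,w)/2$, set $\rho_{\hat t+1}:=d(u,w)/2$, and truncate all earlier radii at $d(u,w)/2$. This keeps the balls around $u$ and $w$ disjoint.

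For each $t\le\hat t+1$, say the ball of radius $\rho_t$ is the small one at $u$, i.e.\ $|B^\circ(u,\rho_t)|<2^t$, while $|B(w,\rho_{t-1})|\ge 2^{t-1}$. Consider a set $A$ sampled at density $2^{-t}$ (up to an index shift by one). With constant probability $p_0>0$ two things happen simultaneously: $A$ misses $B^\circ(u,\rho_t)$, and $A$ hits $B(w,\rho_{t-1})$. The two events are independent because the balls are disjoint, and their probabilities are bounded below by the elementary estimates $(1-2^{-t})^{2^t}\ge 1/4$ and $1-(1-2^{-t})^{2^{t-1}}\ge 1-e^{-1/2}$. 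On this joint event, $|d(u,A)-d(w,A)|\ge\rho_t-\rho_{t-1}$.

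By a Chernoff bound over the $Q$ repetitions, with probability at least $1-n^{-3}$ at least $p_0Q/2$ of the repetitions at scale $t$ succeed. A union bound over the $O(n^2\log n)$ pairs and scales then shows that a single realization works for all pairs at once. For that realization,
\begin{equation*}
\|\Phi(u)-\Phi(w)\|_1^{\mathrm{unnorm}}\ \ge\ \frac{p_0Q}{2}\sum_{t=1}^{\hat t+1}(\rho_t-\rho_{t-1})\ =\ \frac{p_0Q}{2}\cdot\frac{d(u,w)}{2},
\end{equation*}
where the sum telescopes to $d(u,w)/2$ (since $\rho_0=0$ and $\rho_{\hat t+1}=d(u,w)/2$). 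Here $\|\cdot\|_1^{\mathrm{unnorm}}$ denotes the $\ell_1$ norm of the coordinate differences $(\phi_{t,q}(u)-\phi_{t,q}(w))_{t,q}$ before the $1/\sqrt{LQ}$ normalization.

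\textbf{Converting to $\ell_2$.} Cauchy--Schwarz over the $LQ$ coordinates gives $\|x\|_2\ge\|x\|_1/\sqrt{LQ}$. Combining this with the normalization factor $1/\sqrt{LQ}$,
\begin{equation*}
\|\Phi(u)-\Phi(w)\|_2\ \ge\ \frac{1}{LQ}\cdot\frac{p_0Q}{4}\,d(u,w)\ =\ \Omega\!\left(\frac{d(u,w)}{\log n}\right).
\end{equation*}
So $\alpha=\Omega(1/\log n)$, and the distortion is $\beta/\alpha\le C\log n$. The delicate bookkeeping lies in the ball-growth case analysis: at each scale one must check which of $u$ or $w$ owns the small ball, and handle the truncated final scale. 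Since the paper treats the theorem as an external result, we would follow Matoušek's presentation~\cite{matousek2013lectures} for these details.
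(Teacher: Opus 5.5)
Your proposal is correct and uses the same construction as the paper's sketch: random subsets at logarithmically many scales with inclusion probability $2^{-t}$, $\Theta(\log n)$ repetitions, the $1/\sqrt{LQ}$ normalization, and the deterministic 1-Lipschitz upper bound (your empty-set convention $\phi\equiv 0$ instead of the paper's $\operatorname{diam}(\mathcal{V})$ makes no difference). The paper only cites the non-contraction bound as the probabilistic part of Bourgain's theorem, whereas you supply the standard ball-growth and telescoping argument, which is sound; the one step you leave implicit, $\hat t+1\le L$, holds because $B(u,\rho_{\hat t})$ and $B(w,\rho_{\hat t})$ are disjoint and each contains at least $2^{\hat t}$ points, so $2^{\hat t+1}\le n$.
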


After rescaling, the theorem ensures that for any pair of nodes $v_i, v_j \in \mathcal{V}$, the embedding $\Phi$ satisfies:
\begin{equation}
\label{eq:bourgain_bound}
    \frac{1}{C \log n} d(v_i, v_j) \le \| \Phi(v_i) - \Phi(v_j) \|_2 \le d(v_i, v_j).
\end{equation}
The upper bound implies that the embedding is 1-Lipschitz, ensuring that distances in the vector space never exaggerate the true metric costs. The lower bound guarantees that the contraction of distances is logarithmically bounded, thereby preserving the distinguishability of distinct nodes and the integrity of local neighborhoods.

One standard construction behind the theorem samples reference subsets at logarithmically many scales. Let $J=\lfloor \log_2 n \rfloor$. For each scale $j\in\{1,\ldots,J\}$ and repetition $t\in\{1,\ldots,T\}$ with $T=\Theta(\log n)$, sample a subset $A_{j,t}\subseteq\mathcal{V}$ by including each point independently with probability $2^{-j}$. For a nonempty subset $A$, define
\begin{equation}
    \phi_A(v)=d(v,A)=\min_{a\in A}d(v,a),
\end{equation}
and use the convention $\phi_{\emptyset}(v)=\operatorname{diam}(\mathcal{V})$ for all $v$. Thus an empty sampled subset contributes only a constant coordinate and does not affect pairwise embedding distances.
We then concatenate the subset-distance coordinates as
\begin{equation}
    f(v_i) = \frac{1}{\sqrt{T \cdot J}} \bigoplus_{j=1}^J \bigoplus_{t=1}^T \phi_{A_{j, t}}(v_i).
\end{equation}
For any fixed subset $A$ under this convention, the map $v\mapsto \phi_A(v)$ is 1-Lipschitz:
\begin{equation}
    |\phi_A(v_i)-\phi_A(v_j)|\le d(v_i,v_j),
\end{equation}
because the claim is trivial when $A=\emptyset$, and for $A\neq\emptyset$ the triangle inequality gives $d(v_i,A)\le d(v_i,v_j)+d(v_j,A)$ and, by symmetry of the metric, the reverse inequality with $v_i$ and $v_j$ exchanged. Hence the normalized concatenation is non-expansive:
\begin{equation}
    \|f(v_i)-f(v_j)\|_2^2
    =
    \frac{1}{TJ}\sum_{j=1}^{J}\sum_{t=1}^{T}
    |\phi_{A_{j,t}}(v_i)-\phi_{A_{j,t}}(v_j)|^2
    \le d(v_i,v_j)^2.
\end{equation}
The non-contraction lower bound in Eq.~\eqref{eq:bourgain_bound} is the nontrivial probabilistic separation part of Bourgain's theorem: with $T=\Theta(\log n)$ repetitions, the sampled subsets separate all pairs at the appropriate scale with high probability, yielding contraction at most $O(\log n)$ after a union bound over all node pairs. The deterministic BFR analyzed below has its own guarantee: unconditional non-expansiveness for arbitrary pivots and conditional non-contraction only under explicit pivot separation.

\subsection{Bidirectional Fréchet Representation Construction}
\label{app:directed_frechet_proof}

While Bourgain's Theorem guarantees the existence of a low-distortion embedding via random subsets, practical implementation requires a deterministic, computationally efficient construction. Here, we analyze the stability properties of the \textit{Bidirectional Fréchet Representation} employed in our method. For arbitrary fixed-size pivot sets, the guarantee below is non-expansiveness; non-contraction requires a separate pivot-separation condition.

\begin{definition}[Bidirectional Fréchet Representation]
Let $(\mathcal{V}, d)$ be a finite quasimetric space and $\mathcal{P} = \{p_1, \dots, p_M\} \subset \mathcal{V}$ be a fixed set of $M$ pivot nodes. We first define the unnormalized bidirectional pivot-coordinate vector
\begin{equation}
    \widetilde{\Phi}(v) =
    \bigoplus_{m=1}^M \left[ d(v, p_m), \, d(p_m, v) \right].
\end{equation}
The Bidirectional Fréchet Representation used by SPACE is the normalized embedding
\begin{equation}
    \Phi(v) = \frac{1}{\sqrt{2M}}\widetilde{\Phi}(v).
\end{equation}
Here $\oplus$ denotes vector concatenation. The embedding space is equipped with the standard Euclidean norm $\|\cdot\|_2$.
\end{definition}

We now prove that this specific construction satisfies a Lipschitz condition with respect to the symmetrized quasimetric geometry. This property shows that BFR does not artificially expand symmetrized geometric distances; feasibility constraints are still enforced by the decoder mask rather than by the embedding alone.

\begin{proposition}[1-Lipschitz Continuity in Quasimetric Spaces]
\label{prop:lipschitz_continuity}
The normalized Bidirectional Fréchet Embedding $\Phi(v) = \frac{1}{\sqrt{2M}} \bigoplus_{m} [d(v, p_m), d(p_m, v)]$ is \textit{1-Lipschitz} (non-expansive) with respect to the symmetrized distance metric $D_{\mathrm{sym}}(v_i, v_j) = \max(d(v_i, v_j), d(v_j, v_i))$. That is, for all $v_i, v_j \in \mathcal{V}$:
\begin{equation}
    \| \Phi(v_i) - \Phi(v_j) \|_2 \le D_{\mathrm{sym}}(v_i, v_j).
\end{equation}
\end{proposition}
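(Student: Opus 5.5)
The plan is a coordinate-wise bound followed by averaging over the $2M$ coordinates. Fix $v_i,v_j\in\mathcal{V}$ and a pivot $p_m\in\mathcal{P}$. I will first show that each of the two scalar coordinates attached to $p_m$ changes by at most $D_{\mathrm{sym}}(v_i,v_j)$ when $v_i$ is replaced by $v_j$. Then I will insert these bounds into the normalized Euclidean norm.

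\emph{Outgoing coordinate.} The quasimetric triangle inequality of Definition~\ref{def:quasimetric}, applied along the path $v_i\to v_j\to p_m$, gives
\begin{equation}
d(v_i,p_m)-d(v_j,p_m)\le d(v_i,v_j).
\end{equation}
Applied along $v_j\to v_i\to p_m$, it gives
\begin{equation}
d(v_j,p_m)-d(v_i,p_m)\le d(v_j,v_i).
\end{equation}
Taking the larger of the two right-hand sides yields $|d(v_i,p_m)-d(v_j,p_m)|\le \max\{d(v_i,v_j),d(v_j,v_i)\}=D_{\mathrm{sym}}(v_i,v_j)$.

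\emph{Incoming coordinate.} The same argument works with the paths $p_m\to v_j\to v_i$ and $p_m\to v_i\to v_j$. These give $d(p_m,v_i)-d(p_m,v_j)\le d(v_j,v_i)$ and $d(p_m,v_j)-d(p_m,v_i)\le d(v_i,v_j)$. Hence $|d(p_m,v_i)-d(p_m,v_j)|\le D_{\mathrm{sym}}(v_i,v_j)$ as well.

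This step is the only real obstacle. Because $d$ is asymmetric, the two one-sided differences are controlled by the two \emph{different} directed distances, so the correct orientation of each triangle has to be chosen carefully. The $\max$ in $D_{\mathrm{sym}}$ is precisely what lets a single quantity bound both signs of each difference.

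\emph{Combining the coordinates.} Writing out the normalized norm,
\begin{equation}
\|\Phi(v_i)-\Phi(v_j)\|_2^2=\frac{1}{2M}\sum_{m=1}^M\Big(|d(v_i,p_m)-d(v_j,p_m)|^2+|d(p_m,v_i)-d(p_m,v_j)|^2\Big).
\end{equation}
Each of the $2M$ squared terms is at most $D_{\mathrm{sym}}(v_i,v_j)^2$, so the right-hand side is at most $D_{\mathrm{sym}}(v_i,v_j)^2$. Taking square roots gives the claim, which matches Eq.~\eqref{eq:lipschitz_bound}. Note that nothing here depends on how the pivots are selected, so the bound holds for any pivot set $\mathcal{P}$, including the one produced by FPS.
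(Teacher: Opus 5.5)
Your proposal is correct and follows essentially the same route as the paper's proof: you bound each outgoing and incoming pivot coordinate by $D_{\mathrm{sym}}(v_i,v_j)$ using the quasimetric triangle inequality in both orientations, and then average the $2M$ squared differences under the $\frac{1}{2M}$ normalization. Your remark that the bound holds for any pivot set, independent of how the pivots are chosen, is also consistent with the paper.
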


\begin{proof}
Consider the contribution of a single pivot $p_m$ to the squared Euclidean distance in the embedding space. The component-wise difference vector for the $m$-th pivot is:
\begin{equation}
    \delta_m = \left[ d(v_i, p_m) - d(v_j, p_m), \quad d(p_m, v_i) - d(p_m, v_j) \right].
\end{equation}
We analyze the two components separately using the triangle inequality inherent to the quasimetric $d$:

\textbf{1. Outgoing Component:}
By the triangle inequality, $d(v_i, p_m) \le d(v_i, v_j) + d(v_j, p_m)$. Rearranging gives $d(v_i, p_m) - d(v_j, p_m) \le d(v_i, v_j)$.
Similarly, $d(v_j, p_m) \le d(v_j, v_i) + d(v_i, p_m)$ implies $d(v_j, p_m) - d(v_i, p_m) \le d(v_j, v_i)$.
Combining these, we bound the absolute difference:
\begin{equation}
    |d(v_i, p_m) - d(v_j, p_m)| \le \max(d(v_i, v_j), d(v_j, v_i)) = D_{\mathrm{sym}}(v_i, v_j).
\end{equation}

\textbf{2. Incoming Component:}
By the triangle inequality, $d(p_m, v_i) \le d(p_m, v_j) + d(v_j, v_i)$. Rearranging gives $d(p_m, v_i) - d(p_m, v_j) \le d(v_j, v_i)$.
Similarly, $d(p_m, v_j) \le d(p_m, v_i) + d(v_i, v_j)$ implies $d(p_m, v_j) - d(p_m, v_i) \le d(v_i, v_j)$.
Thus, the absolute difference is also bounded:
\begin{equation}
    |d(p_m, v_i) - d(p_m, v_j)| \le \max(d(v_j, v_i), d(v_i, v_j)) = D_{\mathrm{sym}}(v_i, v_j).
\end{equation}

Now, summing over all $M$ pivots and applying the normalization factor $\frac{1}{\sqrt{2M}}$:
\begin{align}
    \| \Phi(v_i) - \Phi(v_j) \|_2^2 &= \sum_{m=1}^M \left( \frac{1}{2M}|d(v_i, p_m) - d(v_j, p_m)|^2 + \frac{1}{2M}|d(p_m, v_i) - d(p_m, v_j)|^2 \right) \\
    &\le \frac{1}{2M} \sum_{m=1}^M \left( D_{\mathrm{sym}}(v_i, v_j)^2 + D_{\mathrm{sym}}(v_i, v_j)^2 \right) \\
    &= \frac{1}{2M} \cdot 2M \cdot D_{\mathrm{sym}}(v_i, v_j)^2 \\
    &= D_{\mathrm{sym}}(v_i, v_j)^2.
\end{align}
Taking the square root completes the proof:
\begin{equation}
    \| \Phi(v_i) - \Phi(v_j) \|_2 \le D_{\mathrm{sym}}(v_i, v_j).
\end{equation}
\end{proof}

\begin{proposition}[Conditional Non-Contraction under Pivot Separation]
\label{prop:pivot_separation_lower_bound}
For the normalized BFR in Proposition \ref{prop:lipschitz_continuity}, define the pivot separation score
\begin{equation}
    s_{\mathcal{P}}(v_i,v_j)
    =
    \max_{p\in\mathcal{P}}
    \max\left\{
    |d(v_i,p)-d(v_j,p)|,\,
    |d(p,v_i)-d(p,v_j)|
    \right\}.
\end{equation}
Then, for all $v_i,v_j\in\mathcal{V}$,
\begin{equation}
    \frac{s_{\mathcal{P}}(v_i,v_j)}{\sqrt{2M}}
    \le
    \|\Phi(v_i)-\Phi(v_j)\|_2
    \le
    D_{\mathrm{sym}}(v_i,v_j).
\end{equation}
Consequently, if the pivot set is $\eta$-separating with respect to $D_{\mathrm{sym}}$, i.e.,
\begin{equation}
    s_{\mathcal{P}}(v_i,v_j) \ge \eta\, D_{\mathrm{sym}}(v_i,v_j)
    \quad \text{for all } v_i\neq v_j,
\end{equation}
then BFR is bi-Lipschitz with respect to $D_{\mathrm{sym}}$ and has distortion at most $\sqrt{2M}/\eta$. Without this separation condition, an arbitrary pivot subset can collapse distinct nodes, so no bounded-distortion guarantee is claimed.
\end{proposition}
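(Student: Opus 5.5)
The plan is to prove the two-sided bound directly from the coordinate form of $\Phi$; the bi-Lipschitz statement and the distortion bound then follow by substitution. The upper bound $\|\Phi(v_i)-\Phi(v_j)\|_2 \le D_{\mathrm{sym}}(v_i,v_j)$ is exactly Proposition \ref{prop:lipschitz_continuity}, so I would simply invoke it.

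For the lower bound, I would expand the squared norm as in the proof of Proposition \ref{prop:lipschitz_continuity}:
\begin{equation}
\|\Phi(v_i)-\Phi(v_j)\|_2^2=\frac{1}{2M}\sum_{m=1}^M\Bigl(|d(v_i,p_m)-d(v_j,p_m)|^2+|d(p_m,v_i)-d(p_m,v_j)|^2\Bigr).
\end{equation}
Every summand is nonnegative, so dropping all but the largest term can only decrease the sum. I would choose the pivot $p^\ast\in\mathcal{P}$ and the direction (outgoing or incoming) that attain the maximum in the definition of $s_{\mathcal{P}}(v_i,v_j)$. This maximum exists because $\mathcal{P}$ is finite. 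Discarding the other $2M-1$ terms gives $\|\Phi(v_i)-\Phi(v_j)\|_2^2\ge s_{\mathcal{P}}(v_i,v_j)^2/(2M)$, and taking square roots yields the left inequality.

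For the consequence, I would substitute the $\eta$-separation hypothesis into the lower bound. This gives $\frac{\eta}{\sqrt{2M}}D_{\mathrm{sym}}(v_i,v_j)\le\|\Phi(v_i)-\Phi(v_j)\|_2\le D_{\mathrm{sym}}(v_i,v_j)$ for $v_i\neq v_j$. The case $v_i=v_j$ is trivial, since both sides vanish. This is precisely Eq. \eqref{eq:bilipschitz} with $\alpha=\eta/\sqrt{2M}$ and $\beta=1$, so the distortion is $\beta/\alpha=\sqrt{2M}/\eta$.

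The final sentence of the statement is a non-claim. I would justify it with a short example rather than a proof: take a single pivot $p$ and two nodes $v_i\ne v_j$ with $d(v_i,p)=d(v_j,p)$ and $d(p,v_i)=d(p,v_j)$. For instance, use points on a circle centred at $p$ in the Euclidean case. Then $\Phi(v_i)=\Phi(v_j)$ while $D_{\mathrm{sym}}(v_i,v_j)>0$, so no constant $\alpha>0$ can work.

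There is no genuine obstacle here. The only points needing care are to note explicitly that $\eta\le 1$ follows from Proposition \ref{prop:lipschitz_continuity}, since each pivot difference is at most $D_{\mathrm{sym}}$, and that the bound is coarse: it keeps a single coordinate, which is why the factor $\sqrt{2M}$ appears.
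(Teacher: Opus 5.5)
Your proposal is correct and follows essentially the same route as the paper. The upper bound is taken from Proposition~\ref{prop:lipschitz_continuity}, the lower bound keeps only the single normalized coordinate attaining $s_{\mathcal{P}}(v_i,v_j)$, and substituting $\eta$-separation gives $\alpha=\eta/\sqrt{2M}$ and $\beta=1$; your equidistant-pivot example and the remark that $\eta\le 1$ are harmless additions the paper does not spell out.
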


\begin{proof}
The upper bound is exactly Proposition \ref{prop:lipschitz_continuity}. For the lower bound, choose a pivot-coordinate pair attaining $s_{\mathcal{P}}(v_i,v_j)$. Since $\|\Phi(v_i)-\Phi(v_j)\|_2$ is the Euclidean norm of all normalized coordinate differences, it is at least the magnitude of this single normalized coordinate:
\begin{equation}
    \|\Phi(v_i)-\Phi(v_j)\|_2
    \ge
    \frac{s_{\mathcal{P}}(v_i,v_j)}{\sqrt{2M}}.
\end{equation}
If $s_{\mathcal{P}}(v_i,v_j) \ge \eta\, D_{\mathrm{sym}}(v_i,v_j)$ for every distinct pair, then
\begin{equation}
    \frac{\eta}{\sqrt{2M}} D_{\mathrm{sym}}(v_i,v_j)
    \le
    \|\Phi(v_i)-\Phi(v_j)\|_2
    \le
    D_{\mathrm{sym}}(v_i,v_j),
\end{equation}
which is the bi-Lipschitz condition with $\alpha=\eta/\sqrt{2M}$ and $\beta=1$. The distortion is therefore at most $\beta/\alpha=\sqrt{2M}/\eta$.
\end{proof}

\begin{proposition}[Coverage-Induced Separation]
\label{prop:coverage_induced_separation}
Let $\mathcal{P}\subseteq\mathcal{V}$ be a pivot set with $M=|\mathcal{P}|$ and covering radius
\begin{equation}
    \rho_{\mathrm{sym}}(\mathcal{P})
    =
    \max_{v\in\mathcal{V}}\min_{p\in\mathcal{P}}D_{\mathrm{sym}}(v,p)
\end{equation}
under the symmetrized metric $D_{\mathrm{sym}}$. For any two nodes $v_i,v_j\in\mathcal{V}$, the pivot separation score in Proposition \ref{prop:pivot_separation_lower_bound} satisfies
\begin{equation}
    s_{\mathcal{P}}(v_i,v_j)
    \ge
    \left(D_{\mathrm{sym}}(v_i,v_j)-2\rho_{\mathrm{sym}}(\mathcal{P})\right)_{+},
\end{equation}
where $(a)_{+}=\max(a,0)$. Consequently, the normalized BFR satisfies
\begin{equation}
    \|\Phi(v_i)-\Phi(v_j)\|_2
    \ge
    \frac{\left(D_{\mathrm{sym}}(v_i,v_j)-2\rho_{\mathrm{sym}}(\mathcal{P})\right)_{+}}{\sqrt{2M}}.
\end{equation}
\end{proposition}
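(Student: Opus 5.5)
The plan is to prove the separation inequality by choosing one pivot that is close to $v_i$ and checking that one of its two BFR coordinates already separates $v_i$ from $v_j$. The claimed bound on $\|\Phi(v_i)-\Phi(v_j)\|_2$ then follows from the lower bound $s_{\mathcal{P}}(v_i,v_j)/\sqrt{2M}\le\|\Phi(v_i)-\Phi(v_j)\|_2$ in Proposition \ref{prop:pivot_separation_lower_bound}.

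\textbf{Setup.} Since $s_{\mathcal{P}}\ge 0$ by definition, it suffices to show $s_{\mathcal{P}}(v_i,v_j)\ge D_{\mathrm{sym}}(v_i,v_j)-2\rho_{\mathrm{sym}}(\mathcal{P})$. Fix a pivot $p\in\mathcal{P}$ with $D_{\mathrm{sym}}(v_i,p)\le\rho:=\rho_{\mathrm{sym}}(\mathcal{P})$; one exists by the definition of the covering radius. Because $D_{\mathrm{sym}}$ is a maximum, this gives both $d(v_i,p)\le\rho$ and $d(p,v_i)\le\rho$. The key point is that the pivot is close to $v_i$ in both directions, which is what lets a single pivot handle either orientation of the asymmetry.

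\textbf{Case 1: $D_{\mathrm{sym}}(v_i,v_j)=d(v_i,v_j)$.} Here I use the incoming coordinate. The quasimetric triangle inequality gives $d(v_i,v_j)\le d(v_i,p)+d(p,v_j)$. Subtracting $d(p,v_i)$ from both sides yields
\begin{equation*}
d(p,v_j)-d(p,v_i)\ \ge\ d(v_i,v_j)-d(v_i,p)-d(p,v_i)\ \ge\ D_{\mathrm{sym}}(v_i,v_j)-2\rho .
\end{equation*}

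\textbf{Case 2: $D_{\mathrm{sym}}(v_i,v_j)=d(v_j,v_i)$.} Here I use the outgoing coordinate. The triangle inequality gives $d(v_j,v_i)\le d(v_j,p)+d(p,v_i)$. Subtracting $d(v_i,p)$ yields
\begin{equation*}
d(v_j,p)-d(v_i,p)\ \ge\ d(v_j,v_i)-d(p,v_i)-d(v_i,p)\ \ge\ D_{\mathrm{sym}}(v_i,v_j)-2\rho .
\end{equation*}

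\textbf{Conclusion.} In both cases one coordinate difference for the pivot $p$ is at least $D_{\mathrm{sym}}(v_i,v_j)-2\rho$ in absolute value. Since $s_{\mathcal{P}}(v_i,v_j)$ is the maximum of these absolute differences over all pivots and both coordinates, the bound on $s_{\mathcal{P}}$ follows, and so does the normalized bound on $\|\Phi(v_i)-\Phi(v_j)\|_2$.

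\textbf{Main obstacle.} The only delicate step is choosing the direction: the coordinate used must match the orientation in which $D_{\mathrm{sym}}(v_i,v_j)$ is attained. Picking the wrong one leaves an uncontrolled term $d(v_j,v_i)$ or $d(v_i,v_j)$ that the covering radius does not bound. The case split above resolves this.
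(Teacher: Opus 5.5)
Your proof is correct and follows the paper's argument: take a pivot within the covering radius of one endpoint, split on which orientation attains $D_{\mathrm{sym}}(v_i,v_j)$, use the matching outgoing or incoming coordinate via the quasimetric triangle inequality, and then invoke Proposition~\ref{prop:pivot_separation_lower_bound}. The differences are cosmetic: the paper anchors the pivot near $v_j$, which swaps the coordinate roles in the two cases relative to yours, and it handles $D_{\mathrm{sym}}(v_i,v_j)\le 2\rho_{\mathrm{sym}}(\mathcal{P})$ as a separate trivial case, whereas you absorb it by noting $s_{\mathcal{P}}\ge 0$.
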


\begin{proof}
The claim is trivial when $D_{\mathrm{sym}}(v_i,v_j)\le 2\rho_{\mathrm{sym}}(\mathcal{P})$, since $s_{\mathcal{P}}(v_i,v_j)\ge 0$. We therefore consider the case $D_{\mathrm{sym}}(v_i,v_j)>2\rho_{\mathrm{sym}}(\mathcal{P})$.

By the definition of the covering radius, there exists a pivot $p\in\mathcal{P}$ such that $D_{\mathrm{sym}}(v_j,p)\le \rho_{\mathrm{sym}}(\mathcal{P})$. Hence both $d(v_j,p)\le \rho_{\mathrm{sym}}(\mathcal{P})$ and $d(p,v_j)\le \rho_{\mathrm{sym}}(\mathcal{P})$.

If $D_{\mathrm{sym}}(v_i,v_j)=d(v_i,v_j)$, then the quasimetric triangle inequality gives
\begin{equation}
    d(v_i,v_j)
    \le
    d(v_i,p)+d(p,v_j),
\end{equation}
and therefore $d(v_i,p)\ge D_{\mathrm{sym}}(v_i,v_j)-\rho_{\mathrm{sym}}(\mathcal{P})$. Combining this with $d(v_j,p)\le \rho_{\mathrm{sym}}(\mathcal{P})$ yields
\begin{equation}
    |d(v_i,p)-d(v_j,p)|
    \ge
    D_{\mathrm{sym}}(v_i,v_j)-2\rho_{\mathrm{sym}}(\mathcal{P}).
\end{equation}

If $D_{\mathrm{sym}}(v_i,v_j)=d(v_j,v_i)$, then
\begin{equation}
    d(v_j,v_i)
    \le
    d(v_j,p)+d(p,v_i),
\end{equation}
so $d(p,v_i)\ge D_{\mathrm{sym}}(v_i,v_j)-\rho_{\mathrm{sym}}(\mathcal{P})$. Since $d(p,v_j)\le \rho_{\mathrm{sym}}(\mathcal{P})$, we have
\begin{equation}
    |d(p,v_i)-d(p,v_j)|
    \ge
    D_{\mathrm{sym}}(v_i,v_j)-2\rho_{\mathrm{sym}}(\mathcal{P}).
\end{equation}

In both cases, at least one outgoing or incoming pivot coordinate separates $v_i$ and $v_j$ by the stated amount, proving the lower bound on $s_{\mathcal{P}}(v_i,v_j)$. The lower bound on $\|\Phi(v_i)-\Phi(v_j)\|_2$ then follows directly from Proposition \ref{prop:pivot_separation_lower_bound}.
\end{proof}

\begin{remark}[Connection to Furthest Pivot Sampling]
\label{rem:fps_coverage_connection}
Proposition \ref{prop:coverage_induced_separation} explains why reducing the pivot covering radius improves the non-contraction behavior of BFR: node pairs whose symmetrized distance is larger than $2\rho_{\mathrm{sym}}(\mathcal{P})$ must be separated by at least one pivot-distance coordinate. In the implementation, FPS is performed with $d_{\mathrm{fps}}(u,v)=\frac{1}{2}(d(u,v)+d(v,u))$ for smoother coverage. Since $d_{\mathrm{fps}}(u,v)\le D_{\mathrm{sym}}(u,v)\le 2d_{\mathrm{fps}}(u,v)$, a small covering radius under $d_{\mathrm{fps}}$ also controls $\rho_{\mathrm{sym}}(\mathcal{P})$ up to a constant factor, while the final BFR still retains the directed coordinates $d(v,p)$ and $d(p,v)$.
\end{remark}

\begin{remark}[Handling Asymmetry via Symmetric Embeddings]
A fundamental challenge in embedding asymmetric VRPs is that standard vector spaces (like $\ell_2$) are symmetric norms. One might worry that mapping a directed graph to $\ell_2$ loses information. However, Proposition \ref{prop:lipschitz_continuity} only asserts \textit{stability} (closeness preservation), not \textit{isometry} or unconditional low distortion. The asymmetry is explicitly encoded in the \textit{values} of the coordinates (i.e., $\Phi(v)$ contains both $d(v, p)$ and $d(p, v)$). Thus, the neural network can learn a non-symmetric decoding function $f(\Phi(v_i), \Phi(v_j)) \approx d(v_i, v_j)$ even though the metric $\|\Phi(v_i) - \Phi(v_j)\|_2$ is symmetric.
\end{remark}

\begin{corollary}[Necessity of Bidirectionality for Directional Identifiability]
\label{cor:bidirectionality}
Let $\mathcal{P} = \{p_m\}_{m=1}^M \subset \mathcal{V}$ be a fixed pivot set. A normalized unidirectional embedding $\Phi_{out}(v) = \frac{1}{\sqrt{M}}\bigoplus_{m=1}^M [d(v, p_m)]$ can collapse nodes that are indistinguishable by their outgoing distances to $\mathcal{P}$ but differ in their incoming geometry. More precisely, there exists a finite quasimetric space and distinct nodes $v_i, v_j$ such that
\begin{equation}
    \Phi_{out}(v_i) = \Phi_{out}(v_j)
    \qquad \text{while} \qquad
    \Phi(v_i) \neq \Phi(v_j),
\end{equation}
where $\Phi(v) = \frac{1}{\sqrt{2M}}\bigoplus_{m=1}^M [d(v, p_m), d(p_m, v)]$ is the normalized bidirectional Fr\'echet representation. Therefore, bidirectionality is necessary to retain directional information relative to the chosen pivots. This statement does not claim that $\Phi$ is globally injective for an arbitrary pivot subset; such a guarantee would require additional assumptions on $\mathcal{P}$.
\end{corollary}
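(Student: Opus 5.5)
The corollary is an existence claim, so I will prove it by exhibiting a small explicit quasimetric space. I take three nodes $\mathcal{V}=\{p,v_i,v_j\}$ and a single pivot $\mathcal{P}=\{p\}$, so $M=1$. I then choose asymmetric costs so that the outgoing distances to the pivot agree, $d(v_i,p)=d(v_j,p)$, while the incoming distances differ, $d(p,v_i)\neq d(p,v_j)$.

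A concrete choice is the following:
\begin{equation}
d(v_i,p)=d(v_j,p)=1,\quad d(p,v_i)=1,\quad d(p,v_j)=2,\quad d(v_i,v_j)=d(v_j,v_i)=1,
\end{equation}
with all diagonal entries zero.

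The first step is to check that this $d$ is a quasimetric in the sense of Definition~\ref{def:quasimetric}. Non-negativity and positivity off the diagonal are immediate. For the triangle inequality, I run through the ordered triples of distinct nodes. Every off-diagonal entry lies in $\{1,2\}$, so any two-hop sum is at least $2$. The only value that could exceed a two-hop sum is therefore $2$, and that value occurs only for $d(p,v_j)$. For that entry the relevant bound is $d(p,v_j)\le d(p,v_i)+d(v_i,v_j)=2$, which holds. All remaining inequalities hold trivially because their left-hand side is $1\le 2$. The triple check is the only genuine verification in the argument, and it is exactly why I keep all values in $\{1,2\}$.

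The second step is to evaluate the two embeddings. The unidirectional one gives $\Phi_{out}(v_i)=d(v_i,p)=1=\Phi_{out}(v_j)$. The bidirectional one gives
\begin{equation}
\Phi(v_i)=\tfrac{1}{\sqrt{2}}(1,1)\neq\tfrac{1}{\sqrt{2}}(1,2)=\Phi(v_j).
\end{equation}
So $\Phi_{out}$ collapses the two nodes while $\Phi$ separates them.

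The third step handles general $M$, in case one wants the example at any pivot count. I can pad the construction with $M-1$ additional pivots, each placed at distance $1$ from and to every other node, and keep $d(p,v_j)=2$. The triangle inequality still holds by the same argument, since every off-diagonal entry remains in $\{1,2\}$ and the only entry equal to $2$ is $d(p,v_j)$, for which $d(p,v_i)+d(v_i,v_j)=2$. With this padding the outgoing coordinates still coincide and the incoming coordinate at $p$ still differs. I will conclude with the remark that only existence is claimed, consistent with the statement's disclaimer that global injectivity of $\Phi$ is not asserted.
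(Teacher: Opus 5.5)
Your proof is correct and follows the same approach as the paper. Both use a three-node quasimetric with a single pivot $p$ such that $d(v_i,p)=d(v_j,p)$ but $d(p,v_i)\neq d(p,v_j)$, so $\Phi_{out}$ collapses the pair while $\Phi$ separates it. The differences are minor: the paper gets the triangle inequality for free from shortest-path distances in a weighted digraph, whereas you check it directly (it holds automatically, since every off-diagonal value lies in $[1,2]$ and every two-hop sum is therefore at least $2$), and your padding extends the example to every $M$, which the paper does not need because existence at $M=1$ suffices.
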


\begin{proof}
It suffices to construct one asymmetric example. Let $\mathcal{V} = \{p, v_i, v_j\}$ and let $d$ be the shortest-path distance induced by the directed graph with edge lengths
\begin{equation*}
    p \to v_i : 2,\qquad
    p \to v_j : 1,\qquad
    v_i \to p : 1,\qquad
    v_j \to p : 1,\qquad
    v_i \to v_j : 1,\qquad
    v_j \to v_i : 2.
\end{equation*}
Shortest-path distances define a valid quasimetric, so the triangle inequality holds automatically. In particular,
\begin{equation*}
    d(v_i, p) = d(v_j, p) = 1,\qquad
    d(p, v_i) = 2,\qquad
    d(p, v_j) = 1.
\end{equation*}

Choose the pivot set $\mathcal{P} = \{p\}$. Then the outgoing-only representation collapses the two nodes:
\begin{equation}
    \Phi_{out}(v_i) = [d(v_i, p)] = [1]
    \quad \text{and} \quad
    \Phi_{out}(v_j) = [d(v_j, p)] = [1].
\end{equation}
Hence $\Phi_{out}(v_i) = \Phi_{out}(v_j)$ even though $v_i \neq v_j$.

\textbf{Bidirectionality:}
The normalized Bidirectional Fréchet Embedding $\Phi(v_i) = \frac{1}{\sqrt{2}}[d(v_i, p), d(p, v_i)]$ incorporates the ingress distances:
\begin{equation}
    \Phi(v_i) = \frac{1}{\sqrt{2}}[d(v_i, p), d(p, v_i)] = \frac{1}{\sqrt{2}}[1, 2],
    \qquad
    \Phi(v_j) = \frac{1}{\sqrt{2}}[d(v_j, p), d(p, v_j)] = \frac{1}{\sqrt{2}}[1, 1].
\end{equation}
Therefore $\Phi(v_i) \neq \Phi(v_j)$, and in fact
\begin{equation}
    \|\Phi(v_i) - \Phi(v_j)\|_2 = \frac{1}{\sqrt{2}}.
\end{equation}

This proves the strict informational advantage claimed here: a unidirectional Fr\'echet representation can lose directional distinctions that the bidirectional one preserves. What follows is the necessity of the extra coordinates for asymmetric identifiability relative to the chosen pivots, not a global injectivity guarantee for arbitrary pivot subsets.
\end{proof}

\clearpage

\section{Implementation Details of Furthest Pivot Sampling}
\label{app:fps_details}

The representational power of the SPACE framework depends critically on the spatial distribution of the pivot set $\mathcal{P}$. While the main text outlines the theoretical motivation for Furthest Pivot Sampling (FPS), this appendix details the specific implementation strategies for metric symmetrization, training stochasticity, and inference-time augmentation.

\subsection{Metric Symmetrization for Pivot Selection}
\label{app:fps_metric}

In asymmetric VRPs, the raw distance matrix $\bm{D}$ implies a directed graph topology where $d(v_i, v_j) \neq d(v_j, v_i)$. Standard FPS assumes a symmetric metric space to define a unique "distance" between nodes. To resolve this ambiguity without altering the underlying problem structure, we compute the pivot set $\mathcal{P}$ based on a symmetrized reference metric $d_{\mathrm{fps}}$. We define this metric as the mean of the bidirectional costs:
\begin{equation}
    d_{\mathrm{fps}}(v_i, v_j) = \frac{d(v_i, v_j) + d(v_j, v_i)}{2}.
\end{equation}

\begin{proposition}[Metricity and Coverage of FPS]
\label{prop:fps_metric_coverage}
If $d$ is a quasimetric, then $d_{\mathrm{fps}}$ is a metric. Let $\mathcal{P}_{\mathrm{init}}\subseteq\mathcal{V}$ be fixed, and let $\mathcal{P}_M$ be the pivot set obtained by applying the greedy rule in Eq.~\eqref{eq:fps_greedy} until $|\mathcal{P}_M|=M$. Define the covering radius
\begin{equation}
    R(\mathcal{P})=\max_{v\in\mathcal{V}}\min_{p\in\mathcal{P}}d_{\mathrm{fps}}(v,p).
\end{equation}
If $\mathcal{P}^{\star}$ minimizes $R(\mathcal{P})$ among all $M$-point pivot sets that contain the same initial seeds $\mathcal{P}_{\mathrm{init}}$, then
\begin{equation}
    R(\mathcal{P}_M)\le 2R(\mathcal{P}^{\star}).
\end{equation}
\end{proposition}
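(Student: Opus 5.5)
The proof has two parts. First we check that $d_{\mathrm{fps}}$ is a metric; then we run the standard Gonzalez furthest-first argument adapted to fixed initial seeds.

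\textbf{Metricity.} Non-negativity and $d_{\mathrm{fps}}(v,v)=0$ are inherited from Definition~\ref{def:quasimetric}. Positivity for $v_i\neq v_j$ holds because both summands are positive. Symmetry is immediate from the definition. For the triangle inequality, add the two directed quasimetric inequalities $d(x,z)\le d(x,y)+d(y,z)$ and $d(z,x)\le d(z,y)+d(y,x)$, then divide by $2$.

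\textbf{Coverage.} Let $r=R(\mathcal{P}_M)$ and let $v^\dagger$ attain it, so $\min_{p\in\mathcal{P}_M}d_{\mathrm{fps}}(v^\dagger,p)=r$. If $r=0$ there is nothing to prove, so assume $r>0$.

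Let $q_1,\dots,q_k$ denote the greedily added pivots, where $k=M-|\mathcal{P}_{\mathrm{init}}|$, and set $q_{k+1}=v^\dagger$. By the greedy rule in Eq.~\eqref{eq:fps_greedy}, each $q_\ell$ was chosen to be at least as far from the current set as $v^\dagger$. Because the sets $\mathcal{P}_{\ell-1}\subseteq\mathcal{P}_M$ only grow, $\min_{p\in\mathcal{P}_{\ell-1}}d_{\mathrm{fps}}(v^\dagger,p)\ge r$. Hence every $q_\ell$ is at distance at least $r$ from every element of $\mathcal{P}_{\mathrm{init}}\cup\{q_1,\dots,q_{\ell-1}\}$. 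The same holds for $q_{k+1}=v^\dagger$ by the choice of $r$.

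The key structural fact is therefore that the $k+1$ points $q_1,\dots,q_{k+1}$ lie at distance at least $r$ from each other and from every seed. These points are distinct from the seeds, since $r>0$.

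\textbf{Pigeonhole step.} Suppose, for contradiction, that $R(\mathcal{P}^\star)<r/2$. The set $\mathcal{P}^\star$ contains the seeds, so at most $k$ of its points are non-seeds. Assign each $q_\ell$ to a nearest point of $\mathcal{P}^\star$; that point lies within $<r/2$ of $q_\ell$.

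First, no $q_\ell$ can be assigned to a seed $s$. We would then have $d_{\mathrm{fps}}(q_\ell,s)<r/2<r$, contradicting separation from the seeds. So all $k+1$ points $q_\ell$ are assigned to the at most $k$ non-seed centers.

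By pigeonhole, two of them, $q_a$ and $q_b$, share a center $c$. The triangle inequality for $d_{\mathrm{fps}}$ then gives $d_{\mathrm{fps}}(q_a,q_b)<r$, contradicting the pairwise separation. Therefore $R(\mathcal{P}^\star)\ge r/2$, which is the claim.

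\textbf{Main obstacle.} The delicate point is the seeds. Both $\mathcal{P}_M$ and $\mathcal{P}^\star$ must contain $\mathcal{P}_{\mathrm{init}}$, and the argument uses this to exclude seed centers as pigeonholes. This is exactly what makes the count come out as $k+1$ points against $k$ free centers.

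We must also handle two edge cases:
\begin{itemize}
\item If the greedy rule picks an already-chosen node, the current minimum distance of every node is $0$, which forces $r=0$ and makes the claim trivial.
\item Ties in the $\arg\max$ do not affect the argument, since we only use that each greedy choice is at least as far from the current set as $v^\dagger$.
\end{itemize}
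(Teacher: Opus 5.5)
Your proof is correct and follows essentially the same route as the paper: metricity comes from adding the two directed triangle inequalities. Coverage is the seeded furthest-first (Gonzalez) argument: the greedily added pivots together with a worst-covered node form $M-|\mathcal{P}_{\mathrm{init}}|+1$ points, pairwise at least $r$ apart and at least $r$ from every seed, so no seed can serve as their covering center and pigeonhole over the $M-|\mathcal{P}_{\mathrm{init}}|$ non-seed centers of $\mathcal{P}^{\star}$ gives the contradiction. Your explicit comparison of each greedy choice against $v^\dagger$, and your treatment of the $r=0$ and repeated-selection edge cases, spell out what the paper compresses into ``covering radii only decrease during furthest-first traversal.''
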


\begin{proof}
Symmetry and identity of indiscernibles for $d_{\mathrm{fps}}$ follow directly from the definition and the corresponding quasimetric assumptions on $d$. For the triangle inequality, for any $x,y,z\in\mathcal{V}$,
\begin{align}
    d_{\mathrm{fps}}(x,z)
    &=\frac{d(x,z)+d(z,x)}{2} \\
    &\le \frac{d(x,y)+d(y,z)+d(z,y)+d(y,x)}{2} \\
    &= d_{\mathrm{fps}}(x,y)+d_{\mathrm{fps}}(y,z).
\end{align}
Thus $d_{\mathrm{fps}}$ is a metric.

Let $r=R(\mathcal{P}_M)$ and $r^\star=R(\mathcal{P}^{\star})$. Suppose, for contradiction, that $r>2r^\star$. Let $x$ be a point whose distance to $\mathcal{P}_M$ is $r$, and let $q_1,\ldots,q_{M-|\mathcal{P}_{\mathrm{init}}|}$ be the pivots added greedily after initialization. Because covering radii only decrease during furthest-first traversal, each $q_a$ was at a distance of at least $r$ from all previously selected pivots when it was chosen; also, $x$ is at a distance of at least $r$ from every pivot in $\mathcal{P}_M$. Hence the points $\{x,q_1,\ldots,q_{M-|\mathcal{P}_{\mathrm{init}}|}\}$ are pairwise more than $2r^\star$ apart under $d_{\mathrm{fps}}$.

Every point of this set must be within distance $r^\star$ of some center in $\mathcal{P}^{\star}$. None can be covered by an initial seed, since each is more than $r^\star$ away from all seeds. Moreover, two of them cannot be covered by the same non-seed center in $\mathcal{P}^{\star}$; otherwise, the triangle inequality would put them within distance at most $2r^\star$ of each other. This would require $M-|\mathcal{P}_{\mathrm{init}}|+1$ distinct non-seed centers in $\mathcal{P}^{\star}$, but only $M-|\mathcal{P}_{\mathrm{init}}|$ are available. This contradiction proves $r\le 2r^\star$.
\end{proof}

\begin{remark}
    While the theoretical Lipschitz bound (Proposition \ref{prop:lipschitz_continuity}) utilizes the maximum distance $D_{\mathrm{sym}}$ to guarantee a strict upper bound, we employ the mean distance $d_{\mathrm{fps}}$ for pivot sampling. This is because the mean provides smoother density estimates of the underlying topology, thereby avoiding outlier effects caused by extreme one-way penalties often encountered in asymmetric VRPs. Using $d_{\mathrm{fps}}$ ensures that the selected pivots cover the topology uniformly, regardless of edge directionality. Crucially, this symmetrization applies \textit{only} to the selection indices; the final embedding $\Phi(v)$ retains the asymmetric distance coordinates $d(v_i, p_m)$ and $d(p_m, v_i)$ up to the global normalization factor, preserving the directed topology required for routing.
\end{remark}

\subsection{Seed Initialization and Training Strategy}
\label{app:fps_training}

To prevent the model from overfitting to a fixed geometric constellation, we introduce stochasticity into the pivot generation process during training. We initialize the pivot set $\mathcal{P}_{init}$ not just with the depot node, but with a stochastic combination of the depot and a randomly sampled customer node.

Formally, let $\mathcal{V}_{depot}$ denote the set of depot nodes (typically $|\mathcal{V}_{depot}|=1$ for standard CVRP and $|\mathcal{V}_{depot}|=3$ for MDCVRP). For each training instance, we construct the initial pivot set $\mathcal{P}_{init}$ as:
\begin{equation}
    \mathcal{P}_{init} = \mathcal{V}_{depot} \cup \{ v_{rand} \}, \quad \text{where } v_{rand} \sim \text{Uniform}(\mathcal{V} \setminus \mathcal{V}_{depot}).
\end{equation}
The FPS algorithm then iteratively selects the remaining $M - |\mathcal{P}_{init}|$ pivots relative to this set. This strategy forces the encoder to learn robust representations that are invariant to the choice of reference points, effectively serving as a geometric regularization technique.

\subsection{Pivot-Induced Multi-View Augmentation (Inference)}
\label{app:fps_inference}

Standard test-time augmentation techniques, such as coordinate rotation or flipping, are not applicable to coordinate-free embeddings. Instead, we leverage the pivot-based architecture to generate diverse "views" of the same problem instance. By varying the pivot set $\mathcal{P}$, we alter the embedding $\Phi(v)$, thereby probing the solution space from different geometric perspectives.

We implement a deterministic yet diverse augmentation schedule based on the augmentation factor:

\begin{enumerate}
    \item \textbf{Phase 1 (Customer Traversal):} For augmentations $\alpha \in \{1, \dots, \min(N_{aug}, |\mathcal{V}_{customer}|)\}$, we systematically seed the FPS with the depot set and the $\alpha$-th customer from a fixed random permutation of the customer set $\mathcal{V}_{customer}$. This ensures that for small $N_{aug}$, the model views the graph from maximally distinct vantage points.
    
    \item \textbf{Phase 2 (Dual-Customer Seeding):} If $N_{aug} > |\mathcal{V}_{customer}|$, we exhaust single-customer seeds and transition to dual-customer seeding. We initialize $\mathcal{P}_{init}$ with the depot set and two distinct customers selected at random. This increases the entropy of the initial geometric frame, generating further diverse embeddings for deep search.
\end{enumerate}

Given the resulting batch of $N_{aug}$ solutions, we select the final solution $\pi^*$ that minimizes the objective function: $\pi^* = \arg\min_{m} \mathcal{J}(\pi_T)$.

\subsection{Computational Overhead}

We implement FPS using a vectorized approach to minimize computational overhead. Rather than recomputing pairwise distances at each step, we maintain a distance cache vector $\bm{\delta} \in \mathbb{R}^{|\mathcal{V}|}$, where $\delta_i$ stores the minimum distance from node $v_i$ to the current set of selected pivots.

Algorithm \ref{alg:vectorized_fps} summarizes the procedure. By efficiently utilizing the `scatter` and `gather` operations on the GPU, the computational cost of pivot selection becomes negligible compared to that of the Transformer encoder-decoder operations.

\begin{algorithm}[h]
   \caption{Vectorized Furthest Pivot Sampling with Seeds}
   \label{alg:vectorized_fps}
\begin{algorithmic}
   \STATE {\bfseries Input:} Symmetrized Distance Matrix $\bm{D}_{\mathrm{fps}}$, Number of pivots $M$, Initial pivot set $\mathcal{P}_{init} \subset \mathcal{V}$
   \STATE {\bfseries Output:} Pivot Indices $\mathcal{P}$
   \STATE Initialize pivot list $\mathcal{P} \leftarrow \mathcal{P}_{init}$
   \STATE Initialize distance cache $\bm{\delta} \leftarrow \infty$
   \FOR{each pivot $p \in \mathcal{P}_{init}$}
       \STATE Update cache: $\bm{\delta} \leftarrow \min(\bm{\delta}, \bm{D}_{\mathrm{fps}}[:, p])$
   \ENDFOR
   \WHILE{$|\mathcal{P}| < M$}
       \STATE Select next pivot: $p_{next} \leftarrow \arg\max (\bm{\delta})$
       \STATE Add to list: $\mathcal{P} \leftarrow \mathcal{P} \cup \{p_{next}\}$
       \STATE Update cache: $\bm{\delta} \leftarrow \min(\bm{\delta}, \bm{D}_{\mathrm{fps}}[:, p_{next}])$
       \STATE Mask selected: $\bm{\delta}[p_{next}] \leftarrow -\infty$
   \ENDWHILE
\end{algorithmic}
\end{algorithm}

\clearpage

\section{Implementation Details of Weight-Decomposed Adaptive Decoding}
\label{app:dora_details}
In this section, we elaborate on the implementation of the weight-decomposed adaptive decoding strategy described in Section \ref{subsec:dora_adaptation}. We employ a specialized variant of DoRA \citep{liu2024dora} designed for multi-attribute routing tasks, where constraint-specific logic is learned via additive weight updates conditioned on the multi-hot problem representation $\bm{\lambda}$.

\paragraph{Task Attributes and Architecture}

The adaptation mechanism is conditioned on a binary task attribute vector $\bm{\lambda} \in \{0, 1\}^{10}$, which encodes the presence of specific routing constraints. For each attention projection matrix in the decoder (query, key, value, and constraint embeddings), the final effective weight $W_{\mathrm{eff}}^{(\bm{\lambda})}$ is computed as the sum of a shared base weight $W_{0}$ and a task-specific update $\Delta W$:
\begin{equation}
    W_{\mathrm{eff}}^{(\bm{\lambda})} = W_0 + \Delta W^{(\bm{\lambda})}.
\end{equation}
This ensures that $W_{0}$ captures universal geometric reasoning while $\Delta W^{(\bm{\lambda})}$ specializes in constraint logic.

\paragraph{Multi-Head DoRA Layer}

We implement a \textbf{TaskDoRALayer} that computes $\Delta W^{(\bm{\lambda})}$ by aggregating updates from active constraints. To enhance the representational capacity for complex constraints, each task attribute uses $H$ independent heads, each a low-rank DoRA adapter.

For each attribute $\bm{\lambda}_j$ and each head $\eta \in \{1, \dots, H\}$, we maintain three learnable parameters:
\begin{itemize}
    \item Directional matrices $W_{down,\eta}^{(j)} \in \mathbb{R}^{r \times d_{\mathrm{in}}}$ and $W_{up,\eta}^{(j)} \in \mathbb{R}^{d_{\mathrm{out}} \times r}$.
    \item A magnitude vector $g_{\eta}^{(j)} \in \mathbb{R}^{d_{\mathrm{out}}}$.
\end{itemize}
The update contribution $\delta_j$ for attribute $\bm{\lambda}_j$ is the summation over its heads:
\begin{equation}
    \delta_j = \sum_{\eta=1}^{H} g_\eta^{(j)} \odot \frac{W_{up,\eta}^{(j)} W_{down,\eta}^{(j)}}{\|W_{up,\eta}^{(j)} W_{down,\eta}^{(j)}\|_{\mathrm{row}} + \epsilon},
\end{equation}
where $\odot$ denotes row-wise broadcasting multiplication and $\|\cdot\|_{\mathrm{row}}$ is the $\ell_2$ norm along the input dimension of each output row. 

The final task-specific update $\Delta W^{(\bm{\lambda})}$ is the mean aggregation of all active attribute updates:
\begin{equation}
    \Delta W^{(\bm{\lambda})}
= \frac{1}{Z(\bm{\lambda})}\sum_{j:\lambda_j=1}\delta_j,
\quad
Z(\bm{\lambda})=\max(1,\|\bm{\lambda}\|_1).
\end{equation}
Using the mean (rather than sum) prevents magnitude explosion when multiple constraints are active simultaneously, preserving the scale of the effective weights. The normalization by $Z(\bm{\lambda})$ averages active-attribute updates when constraints are present and yields $\Delta W^{(\bm{\lambda})}=0$ when no attribute is active, so the decoder falls back to the shared base weight $W_0$.

\paragraph{Initialization Strategy}
Proper initialization is critical for the additive formulation. We initialize parameters to ensure that the model starts with $\Delta W \approx \mathbf{0}$, effectively behaving as the base model initially:
\begin{itemize}
    \item The down-projection tensor $W_{down}^{(j)}$ is initialized with Kaiming uniform initializer using $a=\sqrt{5}$~\citep{he2015delving}. Under the fan-in convention for the stacked tensor $W_{down}^{(j)}\in\mathbb{R}^{H\times r\times d_{\mathrm{in}}}$, this is equivalent to $\mathcal{U}(-1/\sqrt{r d_{\mathrm{in}}}, 1/\sqrt{r d_{\mathrm{in}}})$.
    \item $W_{up,\eta}^{(j)} \sim \mathcal{N}(0, 0.01^2)$, matching the small random initialization used to avoid zero direction vectors.
    \item $g_\eta^{(j)} = \mathbf{0}$.
\end{itemize}
The initialization of $g_\eta^{(j)}$ to zero guarantees that the initial contribution of any constraint adapter is null, allowing the optimization trajectory to gradually introduce constraint-specific logic as needed.

\clearpage

\section{Multi-hot Problem Representation}
\label{append:multi_hot}

Based on UDR, we design a 10-dimensional multi-hot problem representation $\bm{\lambda} \in \{0, 1\}^{10}$ to characterize the underlying constraints of each problem. To ensure a concise yet comprehensive encoding, we integrate specific attributes into unified categories. Thus, compared to the URS~\citep{zhou2025urs}, we make the following changes: (1) Since our spatial pivot-aligned representation exists for any problem, we do not use it for problem representation; (2) For time-related constraints(Earliest Arrival Time, Latest Arrival Time, and Service Time), we consolidate them into a single Time indicator; and (3) To ensure a compact problem representation, we unify delivery and linehaul nodes under a single delivery attribute, the distinction between backhaul and PD constraints is subsequently established through the co-occurrence of this attribute with either a backhaul or a pickup indicator. 

Rather than capturing numerical values, this multi-hot vector $\bm{\lambda}$ serves as a high-level descriptor of the problem's properties. By encoding whether an instance satisfies specific constraints, this representation enables the adaptive adjustment of model parameters based on the problem variant at hand.

\subsection{Problem Representations of Seen VRPs}
\label{append:training_problems_selection}
As illustrated in Table \ref{tab:problem_attributes_seen}, we train SPACE across 12 diverse VRP variants to ensure that every feature within the UDR is activated at least once during the learning phase. Following \citet{zhou2025urs}, the selection of the 12 training problems is guided by a "minimum redundancy" principle: we ensure that most attributes in UDR appear in at least two distinct problem types during training, which helps prevent the model from overfitting a specific attribute to a single problem type. The only two exceptions are the Penalty and Pickup attributes, as preliminary experiments have shown that exposure in a single problem type is sufficient for generalization to variants such as SPCTSP and PDCVRP.

\subsection{Problem Representations of Unseen VRPs}
We construct 98 unseen variants to further highlight the broad applicability of SPACE, each incorporating at least one of nine constraints to reflect diverse real-world scenarios. Detailed representations for all unseen symmetric and asymmetric VRP variants are provided in Table \ref{tab:problem_attributes_unseen_S} and Table \ref{tab:problem_attributes_unseen_A}, respectively.

\paragraph{Oracle Solvers}
We use classical solvers to obtain the oracle solutions for computing optimality gaps. Specifically, the oracle solvers include HGS-PyVRP~\citep{wouda2024pyvrp}, LKH3~\citep{LKH3}, and OR-Tools~\citep{ortools}. We run LKH3~\citep{LKH3} with $10000$ trials and 1 run~\citep{kool2019attention}. For HGS-PyVRP and OR-Tools, we run them on a single CPU core with a time limit of $20$s~\citep{li2025cada}. The oracle solver used for each VRP variant is listed in the Oracle column of the following tables.

\begin{table}[h!]
  \centering
  \caption{Problem representations for 12 seen routing problems in our experiments. }
  \label{tab:problem_attributes_seen}
  \resizebox{0.99\textwidth}{!}{
  \begin{tabular}{l|l|cccc|cccccc}
    \toprule[0.5mm]
     Problem & Oracle 
     & Demand & Prize & Penalty & Time
     & Depot & Pickup & Backhaul & Delivery & Sub-routes & Open Route \\
    \midrule
    ATSP        & LKH-3     
        &  &  &  &  
        &  &  &  &  &  &  \\
    TSP         & LKH-3     
        &  &  &  &  
        &  &  &  &  &  &  \\
    CVRP        & PyVRP     
        & $\checkmark$ &  &  &  
        & $\checkmark$ &  &  & $\checkmark$ & $\checkmark$ &  \\
    ACVRP       & PyVRP     
        & $\checkmark$ &  &  &  
        & $\checkmark$ &  &  & $\checkmark$ & $\checkmark$ &  \\
    OP          & Compass   
        &  & $\checkmark$ &  &  
        & $\checkmark$ &  &  &  &  &  \\
    PCTSP       & ILS       
        &  & $\checkmark$ & $\checkmark$ &  
        & $\checkmark$ &  &  &  &  &  \\
    PDTSP       & LKH-3     
        &  &  &  &  
        & $\checkmark$ & $\checkmark$ &  & $\checkmark$ &  &  \\
    CVRPTW      & PyVRP     
        & $\checkmark$ &  &  & $\checkmark$
        & $\checkmark$ &  &  & $\checkmark$ & $\checkmark$ &  \\
    OCVRP       & LKH-3     
        & $\checkmark$ &  &  &  
        & $\checkmark$ &  &  & $\checkmark$ & $\checkmark$ & $\checkmark$ \\
    CVRPB       & OR-Tools  
        & $\checkmark$ &  &  &  
        & $\checkmark$ &  & $\checkmark$ & $\checkmark$ & $\checkmark$ &  \\
    OCVRPTW     & OR-Tools  
        & $\checkmark$ &  &  & $\checkmark$
        & $\checkmark$ &  &  & $\checkmark$ & $\checkmark$ & $\checkmark$ \\
    ACVRPBTW    & OR-Tools  
        & $\checkmark$ &  &  & $\checkmark$
        & $\checkmark$ &  & $\checkmark$ & $\checkmark$ & $\checkmark$ &  \\
    \bottomrule[0.5mm]
  \end{tabular}
  }
\end{table}

\begin{table}[h!]
  \centering
  \caption{Problem representations of 46 unseen symmetric VRP variants for zero-shot generalization.}
  \label{tab:problem_attributes_unseen_S}
  \resizebox{0.99\textwidth}{!}{
  \begin{tabular}{l|l|cccc|cccccc}
    \toprule[0.5mm]
    Problem & Oracle
    & Demand & Prize & Penalty & Time
    & Depot & Pickup & Backhaul & Delivery & Sub-routes & Open Route \\
    \midrule
    CVRPL        & LKH-3     & $\checkmark$ &  &  &  & $\checkmark$ &  &  & $\checkmark$ & $\checkmark$ &  \\
    OCVRPB       & OR-Tools  & $\checkmark$ &  &  &  & $\checkmark$ &  & $\checkmark$ & $\checkmark$ & $\checkmark$ & $\checkmark$ \\
    CVRPBL       & OR-Tools  & $\checkmark$ &  &  &  & $\checkmark$ &  & $\checkmark$ & $\checkmark$ & $\checkmark$ &  \\
    CVRPLTW      & OR-Tools  & $\checkmark$ &  &  & $\checkmark$ & $\checkmark$ &  &  & $\checkmark$ & $\checkmark$ &  \\
    OCVRPBTW     & OR-Tools  & $\checkmark$ &  &  & $\checkmark$ & $\checkmark$ &  & $\checkmark$ & $\checkmark$ & $\checkmark$ & $\checkmark$ \\
    CVRPBLTW     & OR-Tools  & $\checkmark$ &  &  & $\checkmark$ & $\checkmark$ &  & $\checkmark$ & $\checkmark$ & $\checkmark$ &  \\
    OCVRPL       & OR-Tools  & $\checkmark$ &  &  &  & $\checkmark$ &  &  & $\checkmark$ & $\checkmark$ & $\checkmark$ \\
    CVRPBTW      & OR-Tools  & $\checkmark$ &  &  & $\checkmark$ & $\checkmark$ &  & $\checkmark$ & $\checkmark$ & $\checkmark$ &  \\
    OCVRPBL      & OR-Tools  & $\checkmark$ &  &  &  & $\checkmark$ &  & $\checkmark$ & $\checkmark$ & $\checkmark$ & $\checkmark$ \\
    OCVRPLTW     & OR-Tools  & $\checkmark$ &  &  & $\checkmark$ & $\checkmark$ &  &  & $\checkmark$ & $\checkmark$ & $\checkmark$ \\
    OCVRPBLTW    & OR-Tools  & $\checkmark$ &  &  & $\checkmark$ & $\checkmark$ &  & $\checkmark$ & $\checkmark$ & $\checkmark$ & $\checkmark$ \\
    CVRPBP       & OR-Tools  & $\checkmark$ &  &  &  & $\checkmark$ &  & $\checkmark$ & $\checkmark$ & $\checkmark$ &  \\
    OCVRPBP      & OR-Tools  & $\checkmark$ &  &  &  & $\checkmark$ &  & $\checkmark$ & $\checkmark$ & $\checkmark$ & $\checkmark$ \\
    CVRPBPL      & OR-Tools  & $\checkmark$ &  &  &  & $\checkmark$ &  & $\checkmark$ & $\checkmark$ & $\checkmark$ &  \\
    OCVRPBPTW    & OR-Tools  & $\checkmark$ &  &  & $\checkmark$ & $\checkmark$ &  & $\checkmark$ & $\checkmark$ & $\checkmark$ & $\checkmark$ \\
    CVRPBPLTW & OR-Tools
& $\checkmark$ & & & $\checkmark$ & $\checkmark$ & & $\checkmark$ & $\checkmark$ & $\checkmark$ & \\
    CVRPBPTW     & OR-Tools  & $\checkmark$ &  &  & $\checkmark$ & $\checkmark$ &  & $\checkmark$ & $\checkmark$ & $\checkmark$ &  \\
    OCVRPBPL     & OR-Tools  & $\checkmark$ &  &  &  & $\checkmark$ &  & $\checkmark$ & $\checkmark$ & $\checkmark$ & $\checkmark$ \\
    OCVRPBPLTW    & OR-Tools  & $\checkmark$ &  &  & $\checkmark$ & $\checkmark$ &  & $\checkmark$ & $\checkmark$ & $\checkmark$ & $\checkmark$ \\
    \midrule
MDCVRP        & PyVRP
  & $\checkmark$ &  &  & 
  & $\checkmark$ &  &  & $\checkmark$ & $\checkmark$ &  \\
MDCVRPTW      & PyVRP
  & $\checkmark$ &  &  & $\checkmark$
  & $\checkmark$ &  &  & $\checkmark$ & $\checkmark$ &  \\
MDOCVRP       & PyVRP
  & $\checkmark$ &  &  & 
  & $\checkmark$ &  &  & $\checkmark$ & $\checkmark$ & $\checkmark$ \\
MDCVRPL       & PyVRP
  & $\checkmark$ &  &  & 
  & $\checkmark$ &  &  & $\checkmark$ & $\checkmark$ &  \\
MDCVRPB       & PyVRP
  & $\checkmark$ &  &  & 
  & $\checkmark$ &  & $\checkmark$ & $\checkmark$ & $\checkmark$ &  \\
MDOCVRPTW     & PyVRP
  & $\checkmark$ &  &  & $\checkmark$
  & $\checkmark$ &  &  & $\checkmark$ & $\checkmark$ & $\checkmark$ \\
MDOCVRPB      & PyVRP
  & $\checkmark$ &  &  & 
  & $\checkmark$ &  & $\checkmark$ & $\checkmark$ & $\checkmark$ & $\checkmark$ \\
MDCVRPBL      & PyVRP
  & $\checkmark$ &  &  & 
  & $\checkmark$ &  & $\checkmark$ & $\checkmark$ & $\checkmark$ &  \\
MDCVRPLTW     & PyVRP
  & $\checkmark$ &  &  & $\checkmark$
  & $\checkmark$ &  &  & $\checkmark$ & $\checkmark$ &  \\
MDOCVRPBTW    & PyVRP
  & $\checkmark$ &  &  & $\checkmark$
  & $\checkmark$ &  & $\checkmark$ & $\checkmark$ & $\checkmark$ & $\checkmark$ \\
MDCVRPBLTW    & PyVRP
  & $\checkmark$ &  &  & $\checkmark$
  & $\checkmark$ &  & $\checkmark$ & $\checkmark$ & $\checkmark$ &  \\
MDOCVRPL      & PyVRP
  & $\checkmark$ &  &  & 
  & $\checkmark$ &  &  & $\checkmark$ & $\checkmark$ & $\checkmark$ \\
MDCVRPBTW     & PyVRP
  & $\checkmark$ &  &  & $\checkmark$
  & $\checkmark$ &  & $\checkmark$ & $\checkmark$ & $\checkmark$ &  \\
MDOCVRPBL     & PyVRP
  & $\checkmark$ &  &  & 
  & $\checkmark$ &  & $\checkmark$ & $\checkmark$ & $\checkmark$ & $\checkmark$ \\
MDOCVRPLTW    & PyVRP
  & $\checkmark$ &  &  & $\checkmark$
  & $\checkmark$ &  &  & $\checkmark$ & $\checkmark$ & $\checkmark$ \\
MDOCVRPBLTW   & PyVRP
  & $\checkmark$ &  &  & $\checkmark$
  & $\checkmark$ &  & $\checkmark$ & $\checkmark$ & $\checkmark$ & $\checkmark$ \\
MDCVRPBP      & PyVRP
  & $\checkmark$ &  &  & 
  & $\checkmark$ &  & $\checkmark$ & $\checkmark$ & $\checkmark$ &  \\
MDOCVRPBP     & PyVRP
  & $\checkmark$ &  &  & 
  & $\checkmark$ &  & $\checkmark$ & $\checkmark$ & $\checkmark$ & $\checkmark$ \\
MDCVRPBPL     & PyVRP
  & $\checkmark$ &  &  & 
  & $\checkmark$ &  & $\checkmark$ & $\checkmark$ & $\checkmark$ &  \\
MDOCVRPBPTW   & PyVRP
  & $\checkmark$ &  &  & $\checkmark$
  & $\checkmark$ &  & $\checkmark$ & $\checkmark$ & $\checkmark$ & $\checkmark$ \\
MDCVRPBPLTW    & PyVRP
  & $\checkmark$ &  &  & $\checkmark$
  & $\checkmark$ &  & $\checkmark$ & $\checkmark$ & $\checkmark$ &  \\
MDCVRPBPTW     & PyVRP
  & $\checkmark$ &  &  & $\checkmark$
  & $\checkmark$ &  & $\checkmark$ & $\checkmark$ & $\checkmark$ &  \\
MDOCVRPBPL    & PyVRP
  & $\checkmark$ &  &  & 
  & $\checkmark$ &  & $\checkmark$ & $\checkmark$ & $\checkmark$ & $\checkmark$ \\
MDOCVRPBPLTW   & PyVRP
  & $\checkmark$ &  &  & $\checkmark$
  & $\checkmark$ &  & $\checkmark$ & $\checkmark$ & $\checkmark$ & $\checkmark$ \\

    \midrule
    SPCTSP       & ILS       &  & $\checkmark$ & $\checkmark$ &  & $\checkmark$ &  &  &  &  &  \\
    PDCVRP & OR-Tools
& $\checkmark$ & & & & $\checkmark$ & $\checkmark$ & & $\checkmark$ & $\checkmark$ & \\

OPDCVRP & OR-Tools
& $\checkmark$ & & & & $\checkmark$ & $\checkmark$ & & $\checkmark$ & $\checkmark$ & $\checkmark$ \\
    \bottomrule[0.5mm]
  \end{tabular}
  }
\end{table}

\begin{table}[h!]
  \centering
  \caption{Problem representations of 52 unseen asymmetric VRP variants for zero-shot generalization.}
  \label{tab:problem_attributes_unseen_A}
  \resizebox{0.99\textwidth}{!}{
  \begin{tabular}{l|l|cccc|cccccc}
    \toprule[0.5mm]
    Problem & Oracle
    & Demand & Prize & Penalty & Time
    & Depot & Pickup & Backhaul & Delivery
    & Sub-routes & Open Route \\
    \midrule
    ACVRPTW        & OR-Tools
      & $\checkmark$ &  &  & $\checkmark$
      & $\checkmark$ &  &  & $\checkmark$
      & $\checkmark$ &  \\
    AOCVRP         & OR-Tools
      & $\checkmark$ &  &  &
      & $\checkmark$ &  &  & $\checkmark$
      & $\checkmark$ & $\checkmark$ \\
    ACVRPL         & OR-Tools
      & $\checkmark$ &  &  &
      & $\checkmark$ &  &  & $\checkmark$
      & $\checkmark$ &  \\
    ACVRPB         & OR-Tools
      & $\checkmark$ &  &  &
      & $\checkmark$ &  & $\checkmark$ & $\checkmark$
      & $\checkmark$ &  \\
    AOCVRPTW       & OR-Tools
      & $\checkmark$ &  &  & $\checkmark$
      & $\checkmark$ &  &  & $\checkmark$
      & $\checkmark$ & $\checkmark$ \\
    AOCVRPB        & OR-Tools
      & $\checkmark$ &  &  &
      & $\checkmark$ &  & $\checkmark$ & $\checkmark$
      & $\checkmark$ & $\checkmark$ \\
    ACVRPBL        & OR-Tools
      & $\checkmark$ &  &  &
      & $\checkmark$ &  & $\checkmark$ & $\checkmark$
      & $\checkmark$ &  \\
    ACVRPLTW       & OR-Tools
      & $\checkmark$ &  &  & $\checkmark$
      & $\checkmark$ &  &  & $\checkmark$
      & $\checkmark$ &  \\
    AOCVRPBTW      & OR-Tools
      & $\checkmark$ &  &  & $\checkmark$
      & $\checkmark$ &  & $\checkmark$ & $\checkmark$
      & $\checkmark$ & $\checkmark$ \\
    ACVRPBLTW      & OR-Tools
      & $\checkmark$ &  &  & $\checkmark$
      & $\checkmark$ &  & $\checkmark$ & $\checkmark$
      & $\checkmark$ &  \\
    AOCVRPL        & OR-Tools
      & $\checkmark$ &  &  &
      & $\checkmark$ &  &  & $\checkmark$
      & $\checkmark$ & $\checkmark$ \\
    ACVRPBTW       & OR-Tools
      & $\checkmark$ &  &  & $\checkmark$
      & $\checkmark$ &  & $\checkmark$ & $\checkmark$
      & $\checkmark$ &  \\
    AOCVRPBL       & OR-Tools
      & $\checkmark$ &  &  &
      & $\checkmark$ &  & $\checkmark$ & $\checkmark$
      & $\checkmark$ & $\checkmark$ \\
    AOCVRPLTW      & OR-Tools
      & $\checkmark$ &  &  & $\checkmark$
      & $\checkmark$ &  &  & $\checkmark$
      & $\checkmark$ & $\checkmark$ \\
    AOCVRPBLTW     & OR-Tools
      & $\checkmark$ &  &  & $\checkmark$
      & $\checkmark$ &  & $\checkmark$ & $\checkmark$
      & $\checkmark$ & $\checkmark$ \\
    ACVRPBP        & OR-Tools
      & $\checkmark$ &  &  &
      & $\checkmark$ &  & $\checkmark$ & $\checkmark$
      & $\checkmark$ &  \\
    AOCVRPBP       & OR-Tools
      & $\checkmark$ &  &  &
      & $\checkmark$ &  & $\checkmark$ & $\checkmark$
      & $\checkmark$ & $\checkmark$ \\
    ACVRPBPL       & OR-Tools
      & $\checkmark$ &  &  &
      & $\checkmark$ &  & $\checkmark$ & $\checkmark$
      & $\checkmark$ &  \\
    AOCVRPBPTW     & OR-Tools
      & $\checkmark$ &  &  & $\checkmark$
      & $\checkmark$ &  & $\checkmark$ & $\checkmark$
      & $\checkmark$ & $\checkmark$ \\
    ACVRPBPLTW      & OR-Tools
      & $\checkmark$ &  &  & $\checkmark$
      & $\checkmark$ &  & $\checkmark$ & $\checkmark$
      & $\checkmark$ &  \\
    AOCVRPBPL      & OR-Tools
      & $\checkmark$ &  &  &
      & $\checkmark$ &  & $\checkmark$ & $\checkmark$
      & $\checkmark$ & $\checkmark$ \\
    AOCVRPBPLTW    & OR-Tools
      & $\checkmark$ &  &  & $\checkmark$
      & $\checkmark$ &  & $\checkmark$ & $\checkmark$
      & $\checkmark$ & $\checkmark$ \\
    \midrule
AMDCVRP & OR-Tools
& $\checkmark$ & & & & $\checkmark$ & & & $\checkmark$ & $\checkmark$ & \\

AMDCVRPTW & OR-Tools
& $\checkmark$ & & & $\checkmark$ & $\checkmark$ & & & $\checkmark$ & $\checkmark$ & \\

AMDOCVRP & OR-Tools
& $\checkmark$ & & & & $\checkmark$ & & & $\checkmark$ & $\checkmark$ & $\checkmark$ \\

AMDCVRPL & OR-Tools
& $\checkmark$ & & & & $\checkmark$ & & & $\checkmark$ & $\checkmark$ & \\

AMDCVRPB & OR-Tools
& $\checkmark$ & & & & $\checkmark$ & & $\checkmark$ & $\checkmark$ & $\checkmark$ & \\

AMDOCVRPTW & OR-Tools
& $\checkmark$ & & & $\checkmark$ & $\checkmark$ & & & $\checkmark$ & $\checkmark$ & $\checkmark$ \\

AMDOCVRPB & OR-Tools
& $\checkmark$ & & & & $\checkmark$ & & $\checkmark$ & $\checkmark$ & $\checkmark$ & $\checkmark$ \\

AMDCVRPBL & OR-Tools
& $\checkmark$ & & & & $\checkmark$ & & $\checkmark$ & $\checkmark$ & $\checkmark$ & \\

AMDCVRPLTW & OR-Tools
& $\checkmark$ & & & $\checkmark$ & $\checkmark$ & & & $\checkmark$ & $\checkmark$ & \\

AMDOCVRPBTW & OR-Tools
& $\checkmark$ & & & $\checkmark$ & $\checkmark$ & & $\checkmark$ & $\checkmark$ & $\checkmark$ & $\checkmark$ \\

AMDCVRPBLTW & OR-Tools
& $\checkmark$ & & & $\checkmark$ & $\checkmark$ & & $\checkmark$ & $\checkmark$ & $\checkmark$ & \\

AMDOCVRPL & OR-Tools
& $\checkmark$ & & & & $\checkmark$ & & & $\checkmark$ & $\checkmark$ & $\checkmark$ \\

AMDCVRPBTW & OR-Tools
& $\checkmark$ & & & $\checkmark$ & $\checkmark$ & & $\checkmark$ & $\checkmark$ & $\checkmark$ & \\

AMDOCVRPBL & OR-Tools
& $\checkmark$ & & & & $\checkmark$ & & $\checkmark$ & $\checkmark$ & $\checkmark$ & $\checkmark$ \\

AMDOCVRPLTW & OR-Tools
& $\checkmark$ & & & $\checkmark$ & $\checkmark$ & & & $\checkmark$ & $\checkmark$ & $\checkmark$ \\

AMDOCVRPBLTW & OR-Tools
& $\checkmark$ & & & $\checkmark$ & $\checkmark$ & & $\checkmark$ & $\checkmark$ & $\checkmark$ & $\checkmark$ \\

AMDCVRPBP & OR-Tools
& $\checkmark$ & & & & $\checkmark$ & & $\checkmark$ & $\checkmark$ & $\checkmark$ & \\

AMDOCVRPBP & OR-Tools
& $\checkmark$ & & & & $\checkmark$ & & $\checkmark$ & $\checkmark$ & $\checkmark$ & $\checkmark$ \\

AMDCVRPBPL & OR-Tools
& $\checkmark$ & & & & $\checkmark$ & & $\checkmark$ & $\checkmark$ & $\checkmark$ & \\

AMDOCVRPBPTW & OR-Tools
& $\checkmark$ & & & $\checkmark$ & $\checkmark$ & & $\checkmark$ & $\checkmark$ & $\checkmark$ & $\checkmark$ \\

AMDCVRPBPLTW & OR-Tools
& $\checkmark$ & & & $\checkmark$ & $\checkmark$ & & $\checkmark$ & $\checkmark$ & $\checkmark$ & \\

AMDCVRPBPTW & OR-Tools
& $\checkmark$ & & & $\checkmark$ & $\checkmark$ & & $\checkmark$ & $\checkmark$ & $\checkmark$ & \\

AMDOCVRPBPL & OR-Tools
& $\checkmark$ & & & & $\checkmark$ & & $\checkmark$ & $\checkmark$ & $\checkmark$ & $\checkmark$ \\

AMDOCVRPBLTW & OR-Tools
& $\checkmark$ & & & $\checkmark$ & $\checkmark$ & & $\checkmark$ & $\checkmark$ & $\checkmark$ & $\checkmark$ \\
\midrule
APDTSP & OR-Tools
& & & & & $\checkmark$ & $\checkmark$ & & $\checkmark$ & & \\

APDCVRP & OR-Tools
& $\checkmark$ & & & & $\checkmark$ & $\checkmark$ & & $\checkmark$ & $\checkmark$ & \\

AOPDCVRP & OR-Tools
& $\checkmark$ & & & & $\checkmark$ & $\checkmark$ & & $\checkmark$ & $\checkmark$ & $\checkmark$ \\
AOP & OR-Tools
& & $\checkmark$ & & & $\checkmark$ & & & & & \\
APCTSP & OR-Tools
& & $\checkmark$ & $\checkmark$ & & $\checkmark$ & & & & & \\
ASPCTSP & ILS
& & $\checkmark$ & $\checkmark$ & & $\checkmark$ & & & & & \\

    \bottomrule[0.5mm]
  \end{tabular}}
\end{table}

\clearpage

\section{Unified Data Representation}
\label{append:udr}
We introduce the UDR $\bm{U}=\{\mathbf{u}_{i}\}_{i=0}^{n}$ to encode each node $i$ into a high-dimensional vector $\mathbf{u}_i = [\Phi(v_i); \bm{\omega}_i; \bm{\xi}_i]$. This representation integrates three essential components: a bidirectional Fréchet representation (BFR) $\Phi(v_i)$ to capture directional geometric relationships, a unified attribute set $\bm{\omega}_i$ to represent local constraints, and a node-type indicator $\bm{\xi}_i$ to define functional roles.

Building upon \citet{zhou2025urs}, we refine the UDR through two key enhancements. First, we replace the position identifier composed of random identifiers and node coordinates with the proposed unified bidirectional Fréchet representation. Second, we decouple the original Pickup/Backhaul-Deliver/Linehaul structure into two distinct categories: Pickup-Delivery (PD) and Backhaul-Linehaul (BL). This separation is motivated by the fundamental structural differences between the two constraints. Specifically, in PD constraints, pickup and delivery nodes are strictly paired with an explicit precedence relationship. In contrast, the BL relationship primarily serves to distinguish node functionalities: linehaul nodes receive goods from the depot, while backhaul nodes return them, with no direct one-to-one matching between them. Coupling these disparate logic flows into a single representation would impede the model's convergence and optimization efficiency.

% \rs{表格补一个pivot}

\begin{table}[h!]
\centering
\caption{Detailed description of each component in UDR.}
\label{tab:udr_attributes}
\resizebox{0.99\textwidth}{!}{
\begin{tabular}{l|l|c|l|l}
\toprule[0.5mm]
Attribute Name & Symbol  & Range & Description & Examples \\
\midrule

% \multicolumn{5}{l}{\textbf{Unified Attribute Set }($\bm{\omega}_{i} \in [-1,1]^6$)} \\
% \midrule
BFR & $\Phi(v)$ & $\mathbb{R}$ & Normalized bidirectional Fréchet distances to $M$ pivots. & All Variants \\
\midrule
Demand & $\delta$  & $[-1,1]$ & Node demand ($+$ for delivery/linehaul, $-$ for pickup/backhaul). & CVRP, PDCVRP \\
Prize & $\epsilon$  & $[0,1]$ & Prize collected for each node. & OP, PCTSP \\
Penalty & $\mu$  & $[0,1]$ & Penalty incurred for each node. & PCTSP \\
Earliest Arrival Time & $e$ & $[0,1]$ & Earliest permissible arrival time. & CVRPTW \\
Latest Arrival Time & $l$  & $[0,1]$ & Latest permissible arrival time (deadline). & CVRPTW \\
Service Time & $s$  & $[0,1]$ & Time required for service at the node. & CVRPTW \\
\midrule
Depot & $-$  & $\{0, 1\}$ & Indicates if the node is a depot. & CVRP \\
Pickup Node & $-$  & $\{0, 1\}$ & Indicates if the node is a pickup location. & PDTSP, APDTSP \\
Delivery Node & $-$  & $\{0, 1\}$ & Indicates if the node is a delivery location. & PDTSP, APDTSP \\
Backhaul Node & $-$  & $\{0, 1\}$ & Indicates if the node is a backhaul location. & CVRPB, ACVRPB \\
Linehaul Node & $-$  & $\{0, 1\}$ & Indicates if the node is a linehaul location. & CVRPB, ACVRPB \\
Node in Sub-routes & $-$ & $\{0, 1\}$ & The solution $\pi$ can have sub-routes. & CVRP \\
Node in Open Route & $-$  & $\{0, 1\}$ & Vehicles need not return to the depot. & OCVRP \\
\bottomrule[0.5mm]
\end{tabular}
}
\end{table}

\section{Model Architecture and Training}
\label{append:model_training}

\subsection{Embedding Layer}
Given an instance $\mathcal{G}$, we project each component of $\mathbf{u}_i = [\Phi(v_i); \bm{\omega}_i; \bm{\xi}_i]$ into a $d$-dimensional latent space using three independent linear transformations. The initial node embedding $\mathbf{h}_{i}^{(0)}$ is computed as:
\begin{equation}
\mathbf{h}_{i}^{(0)}
= \Phi(v_i) W_{\Phi}
+ \bm{\omega}_{i}W_{\bm{\omega}}
+ \bm{\xi}_{i}W_{\bm{\xi}},
\label{eq:input_linear}
\end{equation}
where $W_{\Phi}\in \mathbb{R}^{2M\times d_h}$, $W_{\bm{\omega}}\in \mathbb{R}^{6\times d_h}$, and $W_{\bm{\xi}}\in \mathbb{R}^{7\times d_h}$ are learnable weight matrices. This process yields a set of initial embeddings $H^{(0)}=\{\mathbf{h}_i^{(0)}\}_{i=0}^n$ for all nodes in $\mathcal{G}$.

\subsection{Encoder}
The encoder $\bm{\theta}_{enc}$ consists of $L$ stacked attention layers. It maps initial embeddings $H^{(0)}$ to high-level node representations $H^{(L)}=\{\mathbf{h}_i^{(L)}\}_{i=0}^n$. Each layer contains two primary sub-layers: a Mixed Bias Module (MBM)\citep{zhou2025urs} and a Feed-Forward (FF) network. We integrate both sub-layers using skip-connections and Instance Normalization (IN) \citep{he2016skip, ulyanov2016instanceNorm}. For each layer $\ell$, let ${H}^{(\ell-1)} = \{\mathbf{h}_{i}^{(\ell-1)}\}_{i=0}^{n}$ be the input. We compute the intermediate and final outputs for node $i$ as follows:\begin{equation}\widetilde{\mathbf{h}}_i^{(\ell)} = \mathrm{IN} \left( \mathbf{h}_i^{(\ell-1)} + \mathrm{MBM}\left(\mathbf{h}_i^{(\ell-1)}, H^{(\ell-1)}, \bm{D}, \bm{D}^{\mathrm{T}}, \bm{R}\right) \right),\end{equation}\begin{equation}\mathbf{h}_i^{(\ell)} = \mathrm{IN} \left( \widetilde{\mathbf{h}}_i^{(\ell)} + \mathrm{FF}\left(\widetilde{\mathbf{h}}_i^{(\ell)}\right) \right).\end{equation}Here, $\mathrm{MBM}(\cdot)$ denotes the mixed bias module \citep{zhou2025urs}, detailed in \cref{eq:mixed_bias_module}. $\mathrm{FF}(\cdot)$ represents a position-wise feed-forward network with ReLU activation. In addition, we also apply z-score normalization to $\bm{D}$ and $\bm{D}^{\mathrm{T}}$ throughout the entire process at SPACE to facilitate gradient flow and accelerate convergence~\citep{lecun2002efficient}.

\paragraph{Mixed Bias Module}
MBM \citep{zhou2025urs} integrates three structural bias matrices through decoupled attention mechanisms: (1) an outgoing distance matrix $\bm{D}$; (2) an incoming distance matrix $\bm{D}^{\mathrm{T}}$; and (3) an optional relation matrix $\bm{R}$. In this module, the relation elements $r_{ij} \in \bm{R}$ are set to 0 if a predefined relationship exists (e.g., pickup–delivery pairing) and 1 otherwise. This encoding ensures that smaller values represent higher structural affinities. For the $\ell$-th layer, the MBM computes three parallel attention outputs:
\begin{equation}
\bar{\mathbf{h}}_{i}^{(0)} = \mathrm{Attention}(\mathbf{h}_i^{(\ell-1)}, H^{(\ell-1)}, f\left(\alpha,N,\bm{D}_{i}\right)),
\end{equation}
\begin{equation}
\bar{\mathbf{h}}_{i}^{(1)} = \mathrm{Attention}(\mathbf{h}_i^{(\ell-1)}, H^{(\ell-1)}, f\left(\alpha,N,\bm{D}_{i}^{\mathrm{T}}\right)), \\
\end{equation}
\begin{equation}
\bar{\mathbf{h}}_{i}^{(2)} = 
    \begin{cases}
    \mathrm{Attention}(\mathbf{h}_i^{(\ell-1)}, H^{(\ell-1)}, f\left(\alpha,\bm{R}_{i}\right))   & \text{if} \ \bm{R} \neq \emptyset \\
    \bm{0}  &\text{otherwise}, 
    \end{cases},
\label{eq:mdm_attn}
\end{equation}

The intermediate results are then aggregated via concatenation and a linear projection:
\begin{equation}
\hat{\mathbf{h}}_i^{(\ell)} = \left[\bar{\mathbf{h}}_{i}^{(0)},\bar{\mathbf{h}}_{i}^{(1)},\bar{\mathbf{h}}_{i}^{(2)}\right]W^{O},
    \label{eq:mixed_bias_module}
\end{equation}
where $W^{O}\in \mathbb{R}^{(3\times d_h)\times d_h}$ is a learnable weight matrix.

For $\mathrm{Attention}(\cdot)$, MBM employs the Adaptation Attention Free Module (AAFM) \citep{zhou2024icam}. This module enhances the model's sensitivity to diverse geometric patterns via an adaptive function $f(\alpha, N, d_{ij})$ conditioned on a bias weight $\alpha$. The bias weight $\alpha$ is generated by a lightweight bias network, denoted as $\mathrm{BIAS}(\bm{\lambda})$, which takes the multi-hot representation $\bm{\lambda}$ of a sample as input. Formally, the computation is defined as:
\begin{equation}
\mathrm{BIAS}(\bm{\lambda}) = \max\left(0, \left(\bm{\lambda} W_1+\mathbf{b}_1\right)W_2 +\mathbf{b}_2\right),
\label{eq:bias}
\end{equation}
where $W_1 \in \mathbb{R}^{|\bm{\lambda}|\times d_h}$, $W_2 \in \mathbb{R}^{d_h\times 1}$, $\mathbf{b}_1\in \mathbb{R}^{d_h}$, and $\mathbf{b}_2\in \mathbb{R}^{1}$ represent the learnable parameters. 
Consistent with \citet{zhou2025urs}, we omit the scale factor $N$ in the adaptation function for $\bm{R}$, as the relations $r_{ij}$ are inherently scale-independent. In cases where $\bm{R}$ is absent, we initialize $\bar{\mathbf{h}}_{i}^{(2)}$ as a zero vector to maintain structural consistency.

\paragraph{Adaptation Attention Free Module}
\label{app:AAFM}
Following \citet{zhou2024icam}, we implement $\mathrm{Attention}(\cdot)$ using an adaptation attention free module (AAFM) to enhance geometric pattern recognition for routing problems. Given the input $X$, AAFM first transforms it into $Q$, $K$, and $V$ through corresponding linear projection operations:
\begin{equation}
Q = XW^{Q}, \quad K = XW^{K}, \quad V = XW^{V},
\label{eq:qkv}
\end{equation}
where $W^{Q}$, $W^{K}$, and $W^{V}$ are learnable matrices. The AAFM computation is then expressed as:
\begin{equation}
\mathrm{Attention}(Q,K,V,A) = \sigma (Q) \odot \frac{\exp (A) (\exp (K) \odot V)}{\exp (A)\exp (K)},
\label{eq:AAFM}
\end{equation}
where $\sigma$ denotes the sigmoid function, $\odot$ represents the element-wise product, and $A = \{a_{ij}\}$ denotes the pair-wise adaptation bias. For distance matrices, the corresponding adaptation bias $f\left(\alpha,N,d_{ij}\right)=-\alpha\cdot\log_2{N}\cdot d_{i,j}$. Here, $N=|\mathcal{V}|$ denotes the total number of nodes (i.e., problem size), and $\alpha$ is generated via a lightweight network $\mathrm{BIAS}(\bm{\lambda})$ in our paper (see \cref{eq:bias}). Notably, for the relation matrix, we remove the scale $N$ in the calculation of adaptation bias because the relation is scale-independent. 

Compared to multi-head attention (MHA)~\citep{vaswani2017attention}, AAFM enables the model to explicitly capture instance-specific knowledge by updating pair-wise adaptation biases while exhibiting lower computational overhead.

\subsection{Decoder}
\label{app:dora_implementation}
The decoder constructs a distribution over the next node at each autoregressive step. Since WDAD and AAFM have been defined in Section~\ref{subsec:dora_adaptation}, Appendix~\ref{app:dora_details}, and \cref{eq:AAFM}, we only specify how these modules are instantiated inside the decoder. Before decoding, WDAD generates five effective projection matrices conditioned on the multi-hot problem representation $\bm{\lambda}$: two query projections for the first and last selected nodes, $W_{\mathrm{first}}^{Q,(\bm{\lambda})}$ and $W_{\mathrm{last}}^{Q,(\bm{\lambda})}$, the key and value projections $W^{K,(\bm{\lambda})}$ and $W^{V,(\bm{\lambda})}$, and the state projection $W^{C,(\bm{\lambda})}$. Given the encoded node embeddings $H^{(L)}=\{\mathbf{h}_i^{(L)}\}_{i=0}^{n}$, the candidate keys and values are cached before decoding:
\begin{equation}
    K = H^{(L)} W^{K,(\bm{\lambda})}, \qquad
    V = H^{(L)} W^{V,(\bm{\lambda})}.
\end{equation}

At step $t$, the context query is formed from the first selected node, the most recently selected node, and the optional scalar problem state $C_t$:
\begin{equation}
    \mathbf{q}_{t}
    = \mathbf{h}_{\pi_1}^{(L)} W_{\mathrm{first}}^{Q,(\bm{\lambda})}
    + \mathbf{h}_{\pi_{t-1}}^{(L)} W_{\mathrm{last}}^{Q,(\bm{\lambda})}
    + C_t W^{C,(\bm{\lambda})},
\end{equation}
where $C_t=0$ for variants that do not require an explicit state signal. The state variables used in SPACE are summarized in the next subsection. Let $\tilde{\bm{d}}_t$ denote the z-score normalized outgoing distance vector from the current node to all candidate nodes, and let $\mathcal{M}_t\in\{0,-\infty\}^{n+1}$ denote the feasibility mask introduced in Section~\ref{sec:preliminary}. Similar to the encoder, the decoder uses two independent instances of the bias network in \cref{eq:bias} to produce distance-bias coefficients $\alpha_{\mathrm{attn}}$ and $\alpha_{\mathrm{com}}$. The first coefficient defines the AAFM bias for decoder readout:
\begin{equation}
    \mathbf{a}^{\mathrm{dec}}_t
    = -\alpha_{\mathrm{attn}}\log_2 N \cdot \tilde{\bm{d}}_t + \mathcal{M}_t,
    \qquad
    \mathbf{h}'_t
    =
    \mathrm{Attention}(\mathbf{q}_t,K,V,\mathbf{a}^{\mathrm{dec}}_t),
\end{equation}
so infeasible candidates are suppressed within the same four-argument AAFM form used in \cref{eq:AAFM}. The updated context $\mathbf{h}'_t$ is then matched against the original encoded node embeddings through a single-head compatibility layer. A second distance bias is added before clipping:
\begin{equation}
    \mathbf{s}_t
    =
    \frac{\mathbf{h}'_t (H^{(L)})^{\top}}{\sqrt{d_h}}
    - \alpha_{\mathrm{com}}\log_2 N \cdot \tilde{\bm{d}}_t.
\end{equation}
Finally, the transition probability is obtained by applying tanh clipping and the feasibility mask:
\begin{equation}
    p_{\bm{\theta}}(\pi_t=i \mid \pi_{1:t-1},\mathcal{G})
    =
    \mathrm{Softmax}\left(
    \zeta \cdot \tanh(\mathbf{s}_t) + \mathcal{M}_t
    \right)_i,
\end{equation}
where $\zeta$ is the logit clipping constant. This decoder preserves the standard autoregressive selection pipeline while allowing the effective projections and distance biases to adapt to the active routing attributes.

\subsection{Problem State \texorpdfstring{$C_{t}$}{Ct}}
\label{append:problem_state}

In this section, we detail the problems that require additional state signals. For the problem state $C_{t}$, we follow the settings in the URS~\citep{zhou2025urs}, and the specific operations are as follows:

\begin{itemize}
    \item For all problems with the capacity constraint, we explicitly impose \textbf{only remaining load}, while diverse additional constraints (e.g., time windows, duration limit, and backhaul) are enforced implicitly as $\mathcal{M}_t$ masks infeasible candidate nodes to guarantee valid solutions, instead of being fully enumerated in the input of the decoder.
    \item For (A)OP, we keep track of the remaining maximum length at time $t$, and $C_t$ represents the remaining routing length that can be moved, and we normalize it to $[0,1]$ by dividing by the maximum length (i.e., 4.0 in OP100 and 1.0 in AOP100). The setting is the same as AM~\citep{kool2019attention}.
    \item For (A)PCTSP and (A)SPCTSP, $C_t$ is the remaining prize to collect, and we do not provide any information about the penalties, as this is irrelevant for the remaining decisions, following~\citet{kool2019attention}.
\end{itemize}

\subsection{Training}
\label{append:training}
To enable the model to better capture features shared across different routing problems, we adopt a per-iteration random problem-type sampling strategy: at each training step, we randomly select one problem from the set of 12 training problems, and randomly generate a mini-batch (128 in our experiment) of instances of that problem for training.

Following~\citet{kwon2020pomo}, we use $n$ trajectories with distinct starting nodes in training. SPACE is trained by the REINFORCE~\citep{williams1992reinforce} algorithm with a shared baseline~\citep{kwon2020pomo}:
\begin{equation}
    \nabla_\theta \mathcal{L} (\bm{\theta}) = \mathbb{E}_{p(\bm{\pi}| \mathcal{G},{\bm{\theta}})}[ 
    (f(\bm{\pi} | \mathcal{G})-\frac{1}{n} \sum_{i=1}^n f(\bm{\pi}^{i} | \mathcal{G}))
    \nabla_{\bm{\theta}} \log p_{\bm{\theta}}\left(\bm{\pi} | \mathcal{G}\right) ],
    \label{eq:mean_loss}
\end{equation}
\begin{equation}
    p_{\bm{\theta}}(\bm{\pi}\mid \mathcal{G})=\prod_{t=2}^{n}
    p_{\bm{\theta}}(\pi_t\mid \mathcal{G},\pi_{1:t-1}),
    \label{eq:construct_solution}
\end{equation}
where the objective function $f(\bm{\pi} | \mathcal{G})$ represents the total reward (e.g., the negative value of tour length) of instance $\mathcal{G}$ given a specific solution $\bm{\pi}$.

\subsection{Model and Training Hyperparameters}
\label{append:model_setting}
Detailed information on the hyperparameter settings can be found in \cref{table:hyperparameters}. The training of SPACE is conducted on a single NVIDIA GeForce RTX 4090 GPU (24GB of memory). The training procedure requires only about $110$ hours ($\approx 5$ days) in total (500 epochs at approximately $ 13$ minutes each) to obtain a unified parameter set capable of addressing a broad class of routing problems. 
\begin{table}[h!]
\centering
\caption{The hyperparameter settings of SPACE.}
\label{table:hyperparameters}
\resizebox{0.5\textwidth}{!}{
\begin{tabular}{l c }
\toprule[0.5mm]
Hyperparameter& Value \\
\midrule
 Optimizer              & AdamW       \\
 Clipping parameter $\zeta$            & 50 \\
 Initial learning rate             & $10^{-4}$ \\
 Epochs of learning rate decay              & [451] \\
 Factor of learning rate decay             & 0.1 \\

 Weight decay                &$10^{-6}$      \\

 The number of encoder layers &$12$           \\
 Embedding dimension $d_h$       & $128$        \\
 Feed forward dimension    & $512$        \\
 Number of Pivots $M$      & $8$          \\
 WDAD Rank $r$             & $32$         \\
 WDAD Heads                & $3$          \\

 Training scale       &$100$     \\
 Batches of each epoch      & $2,000$        \\
 Batch size      & $128$           \\
 Training Epochs         &$500$       \\

\bottomrule[0.5mm]
\end{tabular}
}
\end{table}

\clearpage

\section{Detailed Experimental Results}
\label{sec:detailed_experimental_results}

\subsection{Consistent Problem Settings}

This section reports the complete per-problem comparison under the same problem settings used in the main experiments. We group the evaluated variants into single-depot and multi-depot families in \cref{tab:detailed_singledepot_results,tab:detailed_multidepot_results}. For each variant, we report the optimality gap and inference time under both symmetric and asymmetric distance matrices, together with the average gap across the two settings. A dash indicates that the method does not support the corresponding setting. For readability, we list the seen (a)symmetric variants corresponding to unseen variants together if necessary.

For single-depot variants, \cref{tab:detailed_singledepot_results} shows that SPACE achieves the lowest average gap on most evaluated problems. Compared with URS, SPACE improves 27 of 31 average gaps, with gains mainly from the asymmetric side while maintaining symmetric performance. The improvement is especially clear in compositional variants that include open routes, backhauls, time windows, and duration limits. URS remains slightly better on a few simpler or prize-collecting cases, indicating that the main benefit of SPACE is not isolated symmetric specialization but robust transfer across symmetric and asymmetric realizations of the same constraint structure.

For multi-depot variants, \cref{tab:detailed_multidepot_results} shows a stronger asymmetry-related trend. SPACE obtains lower asymmetric gaps than URS on all 24 multi-depot variants and the lowest average gap on 22 of them. SPACE consistently reduces the degradation caused by asymmetric distances. These results support the role of the proposed unified representation in maintaining stable generalization when both the depot structure and the underlying distance geometry change.

\begin{table*}[htbp]
  \centering
  \caption{Detailed results on single-depot VRPs.}
  \resizebox{\textwidth}{!}{
    \begin{tabular}{l|ccc|ccc|ccc}
    \toprule[0.5mm]
          & Symmetric & Asymmetric & Average & Symmetric & Asymmetric & Average & Symmetric & Asymmetric & Average \\
    Method & Gap (Time) & Gap (Time) & Gap   & Gap (Time) & Gap (Time) & Gap   & Gap (Time) & Gap (Time) & Gap \\
    \midrule
    \midrule
    Problem & \multicolumn{3}{c|}{CVRPBL} & \multicolumn{3}{c|}{OCVRPBL} & \multicolumn{3}{c}{OCVRPBLTW} \\
    \midrule
    Oracle & 0.00\% (3.5h) & 0.00\% (3.5h) & 0.00\% & 0.00\% (3.5h) & 0.00\% (3.5h) & 0.00\% & 0.00\% (3.5h) & 0.00\% (3.5h) & 0.00\% \\
    MVMoE & 1.35\% (12s) & $-$ & $-$ & 7.12\% (12s) & $-$ & $-$ & 10.01\% (15s) & $-$ & $-$ \\
    MTPOMO & 1.79\% (6s) & $-$ & $-$ & 7.34\% (7s) & $-$ & $-$ & 10.50\% (7s) & $-$ & $-$ \\
    ReLD-MTL & 1.01\% (7s) & $-$ & $-$ & 5.41\% (7s) & $-$ & $-$ & 9.22\% (8s) & $-$ & $-$ \\
    URS   & 1.65\% (6s) & 6.18\% (2.4m) & 3.92\% & 9.47\% (7s) & 17.23\% (2.7m) & 13.35\% & 13.72\% (8s) & 14.32\% (2.5m) & 14.02\% \\
    SPACE & 1.67\% (6s) & -14.27\% (1.8m) & \greybg{-6.30\%} & 7.36\% (7s) & -1.94\% (1.8m) & \greybg{2.71\%} & 11.63\% (8s) & 5.52\% (2.3m) & \greybg{8.57\%} \\
    \midrule
    \midrule
    Problem & \multicolumn{3}{c|}{CVRPBLTW} & \multicolumn{3}{c|}{CVRPBP} & \multicolumn{3}{c}{CVRPBPL} \\
    \midrule
    Oracle & 0.00\% (3.5h) & 0.00\% (3.5h) & 0.00\% & 0.00\% (6.6m) & 0.00\% (6.6m) & 0.00\% & 0.00\% (6.6m) & 0.00\% (6.6m) & 0.00\% \\
    MVMoE & 7.33\% (16s) & $-$ & $-$ & 13.95\% (14s) & $-$ & $-$ & 13.38\% (14s) & $-$ & $-$ \\
    MTPOMO & 7.75\% (8s) & $-$ & $-$ & 13.71\% (7s) & $-$ & $-$ & 13.44\% (8s) & $-$ & $-$ \\
    ReLD-MTL & 6.94\% (9s) & $-$ & $-$ & 13.57\% (9s) & $-$ & $-$ & 12.96\% (10s) & $-$ & $-$ \\
    URS   & 9.18\% (8s) & 10.18\% (2.4m) & 9.68\% & 12.95\% (7s) & 24.20\% (2.4m) & 18.58\% & 12.65\% (8s) & 24.03\% (2.6m) & 18.34\% \\
    SPACE & 8.29\% (8s) & -0.37\% (2.3m) & \greybg{3.96\%} & 13.49\% (8s) & 7.93\% (2.0m) & \greybg{10.71\%} & 13.13\% (8s) & 7.82\% (2.2m) & \greybg{10.47\%} \\
    \midrule
    \midrule
    Problem & \multicolumn{3}{c|}{CVRPBPLTW} & \multicolumn{3}{c|}{CVRPB} & \multicolumn{3}{c}{OCVRP} \\
    \midrule
    Oracle & 0.00\% (6.6m) & 0.00\% (6.6m) & 0.00\% & 0.00\% (20.8m) & 0.00\% (6.6m) & 0.00\% & 0.00\% (5.3m) & 0.00\% (6.6m) & 0.00\% \\
    MVMoE & 16.66\% (16s) & $-$ & $-$ & 1.28\% (11s) & $-$ & $-$ & 3.14\% (13s) & $-$ & $-$ \\
    MTPOMO & 17.15\% (10s) & $-$ & $-$ & 1.67\% (5s) & $-$ & $-$ & 3.46\% (6s) & $-$ & $-$ \\
    ReLD-MTL & 15.95\% (12s) & $-$ & $-$ & 0.90\% (6s) & $-$ & $-$ & 2.32\% (7s) & $-$ & $-$ \\
    URS   & 18.33\% (10s) & 13.82\% (2.8m) & 16.08\% & 1.46\% (6s) & 7.10\% (2.1m) & 4.28\% & 3.24\% (6s) & 37.63\% (2.9m) & 20.44\% \\
    SPACE & 17.54\% (11s) & 4.16\% (2.9m) & \greybg{10.85\%} & 1.45\% (6s) & -14.05\% (1.7m) & \greybg{-6.30\%} & 3.31\% (7s) & 11.18\% (1.8m) & \greybg{7.24\%} \\
    \midrule
    \midrule
    Problem & \multicolumn{3}{c|}{CVRPBPTW} & \multicolumn{3}{c|}{CVRPBTW} & \multicolumn{3}{c}{CVRPL} \\
    \midrule
    Oracle & 0.00\% (6.6m) & 0.00\% (6.6m) & 0.00\% & 0.00\% (3.5h) & 0.00\% (3.5h) & 0.00\% & 0.00\% (16m) & 0.00\% (16m) & 0.00\% \\
    MVMoE & 16.86\% (15s) & $-$ & $-$ & 7.08\% (15s) & $-$ & $-$ & 0.26\% (12s) & $-$ & $-$ \\
    MTPOMO & 17.22\% (9s) & $-$ & $-$ & 7.41\% (7s) & $-$ & $-$ & 0.48\% (6s) & $-$ & $-$ \\
    ReLD-MTL & 16.09\% (11s) & $-$ & $-$ & 6.74\% (8s) & $-$ & $-$ & 0.02\% (8s) & $-$ & $-$ \\
    URS   & 18.15\% (9s) & 14.52\% (2.6m) & 16.33\% & 8.94\% (8s) & 9.95\% (2.2m) & 9.45\% & 0.43\% (7s) & -2.24\% (1.9m) & \greybg{-0.91\%} \\
    SPACE & 17.57\% (10s) & 4.73\% (2.7m) & \greybg{11.15\%} & 7.91\% (8s) & -0.52\% (2.1m) & \greybg{3.69\%} & 0.45\% (7s) & -1.88\% (2.0m) & -0.72\% \\
    \midrule
    \midrule
    Problem & \multicolumn{3}{c|}{CVRPLTW} & \multicolumn{3}{c|}{OCVRPB} & \multicolumn{3}{c}{OCVRPBPTW} \\
    \midrule
    Oracle & 0.00\% (3.5h) & 0.00\% (3.5h) & 0.00\% & 0.00\% (3.5h) & 0.00\% (3.5h) & 0.00\% & 0.00\% (6.6m) & 0.00\% (6.6m) & 0.00\% \\
    MVMoE & 1.47\% (16s) & $-$ & $-$ & 7.09\% (12s) & $-$ & $-$ & 9.42\% (15s) & $-$ & $-$ \\
    MTPOMO & 1.92\% (9s) & $-$ & $-$ & 7.34\% (5s) & $-$ & $-$ & 9.94\% (9s) & $-$ & $-$ \\
    ReLD-MTL & 1.17\% (11s) & $-$ & $-$ & 5.36\% (6s) & $-$ & $-$ & 8.43\% (11s) & $-$ & $-$ \\
    URS   & 2.67\% (9s) & 12.22\% (2.7m) & 7.45\% & 9.35\% (6s) & 18.05\% (2.5m) & 13.70\% & 10.05\% (8s) & 17.45\% (2.5m) & 13.75\% \\
    SPACE & 1.82\% (9s) & 2.26\% (2.6m) & \greybg{2.04\%} & 7.33\% (6s) & -1.34\% (1.7m) & \greybg{2.99\%} & 9.62\% (9s) & 9.65\% (2.7m) & \greybg{9.64\%} \\
    \midrule
    \midrule
    Problem & \multicolumn{3}{c|}{OCVRPBP} & \multicolumn{3}{c|}{OCVRPBPL} & \multicolumn{3}{c}{OCVRPBPLTW} \\
    \midrule
    Oracle & 0.00\% (6.6m) & 0.00\% (6.6m) & 0.00\% & 0.00\% (6.6m) & 0.00\% (6.6m) & 0.00\% & 0.00\% (6.6m) & 0.00\% (6.6m) & 0.00\% \\
    MVMoE & 18.27\% (14s) & $-$ & $-$ & 17.75\% (14s) & $-$ & $-$ & 8.87\% (16s) & $-$ & $-$ \\
    MTPOMO & 17.90\% (8s) & $-$ & $-$ & 17.31\% (8s) & $-$ & $-$ & 9.47\% (10s) & $-$ & $-$ \\
    ReLD-MTL & 15.81\% (9s) & $-$ & $-$ & 15.21\% (10s) & $-$ & $-$ & 8.06\% (11s) & $-$ & $-$ \\
    URS   & 16.96\% (7s) & 32.58\% (2.5m) & 24.77\% & 16.37\% (8s) & 32.50\% (2.7m) & 24.44\% & 9.51\% (9s) & 13.82\% (2.8m) & 11.66\% \\
    SPACE & 16.85\% (8s) & 17.40\% (2.1m) & \greybg{17.12\%} & 16.31\% (8s) & 17.00\% (2.3m) & \greybg{16.66\%} & 9.23\% (10s) & 8.57\% (2.9m) & \greybg{8.90\%} \\
    \midrule
    \midrule
    Problem & \multicolumn{3}{c|}{OCVRPBTW} & \multicolumn{3}{c|}{OCVRPL} & \multicolumn{3}{c}{OCVRPLTW} \\
    \midrule
    Oracle & 0.00\% (3.5h) & 0.00\% (6.6m) & 0.00\% & 0.00\% (3.5h) & 0.00\% (3.5h) & 0.00\% & 0.00\% (3.5h) & 0.00\% (3.5h) & 0.00\% \\
    MVMoE & 9.95\% (15s) & $-$ & $-$ & 3.15\% (13s) & $-$ & $-$ & 3.90\% (15s) & $-$ & $-$ \\
    MTPOMO & 10.45\% (6s) & $-$ & $-$ & 3.44\% (6s) & $-$ & $-$ & 4.37\% (8s) & $-$ & $-$ \\
    ReLD-MTL & 9.29\% (8s) & $-$ & $-$ & 2.31\% (8s) & $-$ & $-$ & 3.16\% (9s) & $-$ & $-$ \\
    URS   & 13.77\% (7s) & 14.45\% (2.3m) & 14.11\% & 3.22\% (7s) & 37.36\% (3.1m) & 20.29\% & 5.12\% (9s) & 16.35\% (2.8m) & 10.74\% \\
    SPACE & 11.60\% (7s) & 5.48\% (2.2m) & \greybg{8.54\%} & 3.29\% (7s) & 11.02\% (2.0m) & \greybg{7.15\%} & 4.26\% (9s) & 6.99\% (2.6m) & \greybg{5.63\%} \\
    \midrule
    \midrule
    Problem & \multicolumn{3}{c|}{TSP} & \multicolumn{3}{c|}{OP} & \multicolumn{3}{c}{PCTSP} \\
    \midrule
    Oracle & 0.00\% (6m) & 0.00\% (2m) & 0.00\% & 0.00\% (1.5m) & 0.00\% (6.6m) & 0.00\% & 0.00\% (1.2h) & 0.00\% (6.6m) & 0.00\% \\
    GOAL-MTL & $-$ & 1.77\% (1m) & $-$ & 1.20\% (38s) & $-$ & $-$ & $-$ & $-$ & $-$ \\
    URS   & 0.57\% (6s) & 2.26\% (1.1m) & \greybg{1.42\%} & 0.45\% (4s) & -5.47\% (1.2m) & -2.51\% & 1.06\% (5s) & 43.79\% (1.2m) & \greybg{22.43\%} \\
    SPACE & 0.64\% (5s) & 2.89\% (1.4m) & 1.77\% & 0.44\% (5s) & -12.13\% (1.4m) & \greybg{-5.85\%} & 0.88\% (5s) & 73.95\% (1.5m) & 37.42\% \\
    \midrule
    \midrule
    Problem & \multicolumn{3}{c|}{SPCTSP} & \multicolumn{3}{c|}{PDCVRP} & \multicolumn{3}{c}{OPDCVRP} \\
    \midrule
    Oracle & $-$ & $-$ & $-$ & 0.00\% (6.6m) & 0.00\% (6.6m) & 0.00\% & 0.00\% (6.6m) & 0.00\% (6.6m) & 0.00\% \\
    URS   & -2.37\% (5s) & 0.00\% (1.2m) & \greybg{-1.19\%} & -1.47\% (4.3s) & 7.03\% (1.2m) & 2.78\% & 4.93\% (4.3s) & 11.13\% (1.2m) & 8.03\% \\
    SPACE & -2.60\% (5s) & 20.05\% (1.5m) & 8.73\% & 0.38\% (5.1s) & 1.26\% (1.3m) & \greybg{0.82\%} & 7.90\% (5.4s) & 4.60\% (1.4m) & \greybg{6.25\%} \\
    \midrule
    \midrule
    Problem & \multicolumn{3}{c|}{CVRP} & \multicolumn{3}{c|}{CVRPTW} & \multicolumn{3}{c}{OCVRPTW} \\
    \midrule
    Oracle & 0.00\% (9.1m) & 0.00\% (6.6m) & 0.00\% & 0.00\% (19.6m) & 0.00\% (6.6m) & 0.00\% & 0.00\% (20.8m) & 0.00\% (6.6m) & 0.00\% \\
    MVMoE & 1.65\% (12s) & $-$ & $-$ & 4.90\% (15s) & $-$ & $-$ & 3.85\% (15s) & $-$ & $-$ \\
    MTPOMO & 1.85\% (6s) & $-$ & $-$ & 5.31\% (8s) & $-$ & $-$ & 4.41\% (7s) & $-$ & $-$ \\
    ReLD-MTL & 1.42\% (8s) & $-$ & $-$ & 4.56\% ( 10s) & $-$ & $-$ & 3.10\% (9s) & $-$ & $-$ \\
    GOAL-MTL & 4.22\% (48s) & $-$ & $-$ & 4.66\% (42s) & $-$ & $-$ & $-$ & $-$ & $-$ \\
    URS   & 1.81\% (6s) & 3.06\% (1.4m) & 2.44\% & 6.13\% (8s) & 11.28\% (2.5m) & 8.71\% & 5.07\% (8s) & 17.47\% (2.6m) & 11.27\% \\
    SPACE & 1.83\% (7s) & 2.67\% (1.8m) & \greybg{2.25\%} & 5.36\% (9s) & 1.54\% (2.4m) & \greybg{3.45\%} & 4.20\% (8s) & 7.97\% (2.4m) & \greybg{6.08\%} \\
    \midrule
    \midrule
    Problem & \multicolumn{3}{c|}{PDTSP} & \multicolumn{3}{c|}{$-$} & \multicolumn{3}{c}{$-$} \\
    \midrule
    Oracle & 0.00\% (9.8m) & 0.00\% (6.6m) & 0.00\% & $-$ & $-$ & $-$ & $-$ & $-$ & $-$ \\
    URS   & 4.98\% (4s) & 6.21\% (1.0m) & 5.60\% & $-$ & $-$ & $-$ & $-$ & $-$ & $-$ \\
    SPACE & 4.71\% (5s) & -7.46\% (1.2m) & \greybg{-1.38\%} & $-$ & $-$ & $-$ & $-$ & $-$ & $-$ \\
    \bottomrule[0.5mm]
    \end{tabular}%
    }
  \label{tab:detailed_singledepot_results}%
\end{table*}%

\begin{table*}[htbp]
  \centering
  \caption{Detailed results on multi-depot VRPs.}
  \resizebox{\textwidth}{!}{
    \begin{tabular}{l|ccc|ccc|ccc}
    \toprule[0.5mm]
          & Symmetric & Asymmetric & Average & Symmetric & Asymmetric & Average & Symmetric & Asymmetric & Average \\
    Method & Gap (Time) & Gap (Time) & Gap   & Gap (Time) & Gap (Time) & Gap   & Gap (Time) & Gap (Time) & Gap \\
    \midrule
    \midrule
    Problem & \multicolumn{3}{c|}{MDCVRP} & \multicolumn{3}{c|}{MDCVRPB} & \multicolumn{3}{c}{MDCVRPBL} \\
    \midrule
    Oracle & 0.00\% (6.6m) & 0.00\% (6.6m) & 0.00\% & 0.00\% (6.6m) & 0.00\% (6.6m) & 0.00\% & 0.00\% (6.6m) & 0.00\% (6.6m) & 0.00\% \\
    MVMoE & 32.39\% (1.8m) & $-$ & $-$ & 16.70\% (1.5m) & $-$ & $-$ & 16.50\% (1.5m) & $-$ & $-$ \\
    MTPOMO & 27.25\% (1.3m) & $-$ & $-$ & 13.41\% (1.0m) & $-$ & $-$ & 11.92\% (1.1m) & $-$ & $-$ \\
    ReLD-MTL & 18.79\% (1.5m) & $-$ & $-$ & 7.71\% (1.1m) & $-$ & $-$ & 6.87\% (1.2m) & $-$ & $-$ \\
    URS   & 12.88\% (56s) & 4.67\% (15m) & \greybg{8.77\%} & 3.94\% (45s) & 8.52\% (15m) & 6.23\% & 3.12\% (48s) & 9.66\% (15m) & 6.39\% \\
    SPACE & 16.67\% (1.0m) & 3.93\% (15m) & 10.30\% & 5.49\% (50s) & -9.40\% (12m) & \greybg{-1.95\%} & 5.00\% (53s) & -8.50\% (12m) & \greybg{-1.75\%} \\
    \midrule
    \midrule
    Problem & \multicolumn{3}{c|}{MDCVRPBLTW} & \multicolumn{3}{c|}{MDCVRPBP} & \multicolumn{3}{c}{MDCVRPLTW} \\
    \midrule
    Oracle & 0.00\% (6.6m) & 0.00\% (6.6m) & 0.00\% & 0.00\% (6.6m) & 0.00\% (6.6m) & 0.00\% & 0.00\% (6.6m) & 0.00\% (6.6m) & 0.00\% \\
    MVMoE & 50.63\% (1.8m) & $-$ & $-$ & 62.05\% (2.0m) & $-$ & $-$ & 41.51\% (2.1m) & $-$ & $-$ \\
    MTPOMO & 42.54\% (1.3m) & $-$ & $-$ & 52.27\% (1.4m) & $-$ & $-$ & 34.21\% (1.5m) & $-$ & $-$ \\
    ReLD-MTL & 35.33\% (1.5m) & $-$ & $-$ & 38.98\% (1.6m) & $-$ & $-$ & 26.44\% (1.8m) & $-$ & $-$ \\
    URS   & 36.76\% (1.0m) & 13.18\% (18m) & 24.97\% & 31.19\% (1.0m) & 27.10\% (20m) & 29.14\% & 32.92\% (1.3m) & 15.26\% (22m) & 24.09\% \\
    SPACE & 35.97\% (1.1m) & 2.70\% (18m) & \greybg{19.33\%} & 34.16\% (1.2m) & 9.69\% (17m) & \greybg{21.92\%} & 27.29\% (1.3m) & 4.64\% (21m) & \greybg{15.97\%} \\
    \midrule
    \midrule
    Problem & \multicolumn{3}{c|}{MDOCVRPBL} & \multicolumn{3}{c|}{MDCVRPBPL} & \multicolumn{3}{c}{MDCVRPBPLTW} \\
    \midrule
    Oracle & 0.00\% (6.6m) & 0.00\% (6.6m) & 0.00\% & 0.00\% (6.6m) & 0.00\% (6.6m) & 0.00\% & 0.00\% (6.6m) & 0.00\% (6.6m) & 0.00\% \\
    MVMoE & 29.70\% (1.6m) & $-$ & $-$ & 62.41\% (2.1m) & $-$ & $-$ & 56.55\% (2.3m) & $-$ & $-$ \\
    MTPOMO & 24.25\% (1.1m) & $-$ & $-$ & 52.77\% (1.4m) & $-$ & $-$ & 47.00\% (1.7m) & $-$ & $-$ \\
    ReLD-MTL & 17.33\% (1.2m) & $-$ & $-$ & 38.79\% (1.6m) & $-$ & $-$ & 40.18\% (2.0m) & $-$ & $-$ \\
    URS   & 13.29\% (49s) & 25.18\% (17m) & 19.24\% & 30.76\% (1.1m) & 25.81\% (20m) & 28.28\% & 37.99\% (1.4m) & 17.16\% (23m) & 27.58\% \\
    SPACE & 12.95\% (49s) & 10.31\% (13m) & \greybg{11.63\%} & 34.20\% (1.2m) & 8.66\% (17m) & \greybg{21.43\%} & 39.38\% (1.5m) & 7.51\% (23m) & \greybg{23.44\%} \\
    \midrule
    \midrule
    Problem & \multicolumn{3}{c|}{MDCVRPBPTW} & \multicolumn{3}{c|}{MDCVRPBTW} & \multicolumn{3}{c}{MDCVRPL} \\
    \midrule
    Oracle & 0.00\% (6.6m) & 0.00\% (6.6m) & 0.00\% & 0.00\% (6.6m) & 0.00\% (6.6m) & 0.00\% & 0.00\% (6.6m) & 0.00\% (6.6m) & 0.00\% \\
    MVMoE & 56.76\% (2.3m) & $-$ & $-$ & 51.68\% (1.7m) & $-$ & $-$ & 32.23\% (1.9m) & $-$ & $-$ \\
    MTPOMO & 46.92\% (1.6m) & $-$ & $-$ & 43.01\% (1.2m) & $-$ & $-$ & 27.94\% (1.3m) & $-$ & $-$ \\
    ReLD-MTL & 40.02\% (2.0m) & $-$ & $-$ & 35.21\% (1.5m) & $-$ & $-$ & 19.29\% (1.5m) & $-$ & $-$ \\
    URS   & 38.11\% (1.3m) & 17.94\% (23m) & 28.03\% & 36.51\% (1.0m) & 13.29\% (18m) & 24.90\% & 13.05\% (58s) & 3.94\% (15m) & \greybg{8.49\%} \\
    SPACE & 39.88\% (1.4m) & 8.22\% (21m) & \greybg{24.05\%} & 36.79\% (1.1m) & 2.69\% (17m) & \greybg{19.74\%} & 16.76\% (1.1m) & 3.46\% (15m) & 10.11\% \\
    \midrule
    \midrule
    Problem & \multicolumn{3}{c|}{MDOCVRPBLTW} & \multicolumn{3}{c|}{MDOCVRPBP} & \multicolumn{3}{c}{MDOCVRPBPL} \\
    \midrule
    Oracle & 0.00\% (6.6m) & 0.00\% (6.6m) & 0.00\% & 0.00\% (6.6m) & 0.00\% (6.6m) & 0.00\% & 0.00\% (6.6m) & 0.00\% (6.6m) & 0.00\% \\
    MVMoE & 46.31\% (1.8m) & $-$ & $-$ & 60.72\% (2.1m) & $-$ & $-$ & 60.87\% (2.1m) & $-$ & $-$ \\
    MTPOMO & 39.03\% (1.3m) & $-$ & $-$ & 48.33\% (1.5m) & $-$ & $-$ & 47.56\% (1.5m) & $-$ & $-$ \\
    ReLD-MTL & 33.71\% (1.6m) & $-$ & $-$ & 36.09\% (1.7m) & $-$ & $-$ & 36.15\% (1.7m) & $-$ & $-$ \\
    URS   & 29.49\% (1.0m) & 21.24\% (17m) & 25.36\% & 24.97\% (1.1m) & 37.37\% (22m) & 31.17\% & 24.99\% (1.1m) & 36.71\% (22m) & 30.85\% \\
    SPACE & 26.37\% (1.0m) & 14.34\% (17m) & \greybg{20.36\%} & 27.64\% (1.1m) & 23.35\% (17m) & \greybg{25.49\%} & 27.79\% (1.1m) & 22.81\% (17m) & \greybg{25.30\%} \\
    \midrule
    \midrule
    Problem & \multicolumn{3}{c|}{MDCVRPTW} & \multicolumn{3}{c|}{MDOCVRP} & \multicolumn{3}{c}{MDOCVRPB} \\
    \midrule
    Oracle & 0.00\% (6.6m) & 0.00\% (6.6m) & 0.00\% & 0.00\% (6.6m) & 0.00\% (6.6m) & 0.00\% & 0.00\% (6.6m) & 0.00\% (6.6m) & 0.00\% \\
    MVMoE & 40.61\% (2.1m) & $-$ & $-$ & 37.43\% (1.9m) & $-$ & $-$ & 29.77\% (1.5m) & $-$ & $-$ \\
    MTPOMO & 33.05\% (1.5m) & $-$ & $-$ & 29.65\% (1.3m) & $-$ & $-$ & 24.80\% (1.1m) & $-$ & $-$ \\
    ReLD-MTL & 25.48\% (1.8m) & $-$ & $-$ & 22.86\% (1.5m) & $-$ & $-$ & 17.66\% (1.2m) & $-$ & $-$ \\
    URS   & 31.51\% (1.3m) & 15.38\% (21m) & 23.45\% & 17.63\% (57s) & 42.04\% (22m) & 29.84\% & 13.46\% (47s) & 25.95\% (16m) & 19.70\% \\
    SPACE & 25.97\% (1.2m) & 4.73\% (19m) & \greybg{15.35\%} & 15.01\% (1.0m) & 22.11\% (15m) & \greybg{18.56\%} & 13.07\% (47s) & 11.03\% (12m) & \greybg{12.05\%} \\
    \midrule
    \midrule
    Problem & \multicolumn{3}{c|}{MDOCVRPBPLTW} & \multicolumn{3}{c|}{MDOCVRPBPTW} & \multicolumn{3}{c}{MDOCVRPTW} \\
    \midrule
    Oracle & 0.00\% (6.6m) & 0.00\% (6.6m) & 0.00\% & 0.00\% (6.6m) & 0.00\% (6.6m) & 0.00\% & 0.00\% (6.6m) & 0.00\% (6.6m) & 0.00\% \\
    MVMoE & 48.64\% (2.3m) & $-$ & $-$ & 49.11\% (2.3m) & $-$ & $-$ & 35.99\% (2.1m) & $-$ & $-$ \\
    MTPOMO & 40.66\% (1.6m) & $-$ & $-$ & 41.05\% (1.6m) & $-$ & $-$ & 30.29\% (1.4m) & $-$ & $-$ \\
    ReLD-MTL & 36.80\% (2.1m) & $-$ & $-$ & 36.87\% (2.0m) & $-$ & $-$ & 24.87\% (1.8m) & $-$ & $-$ \\
    URS   & 26.84\% (1.3m) & 22.06\% (22m) & 24.45\% & 26.88\% (1.3m) & 23.22\% (21m) & 25.05\% & 22.05\% (1.1m) & 25.30\% (20m) & 23.67\% \\
    SPACE & 25.46\% (1.3m) & 16.08\% (21m) & \greybg{20.77\%} & 25.19\% (1.3m) & 16.98\% (21m) & \greybg{21.09\%} & 18.21\% (1.1m) & 16.29\% (19m) & \greybg{17.25\%} \\
    \midrule
    \midrule
    Problem & \multicolumn{3}{c|}{MDOCVRPBTW} & \multicolumn{3}{c|}{MDOCVRPL} & \multicolumn{3}{c}{MDOCVRPLTW} \\
    \midrule
    Oracle & 0.00\% (6.6m) & 0.00\% (6.6m) & 0.00\% & 0.00\% (6.6m) & 0.00\% (6.6m) & 0.00\% & 0.00\% (6.6m) & 0.00\% (6.6m) & 0.00\% \\
    MVMoE & 46.75\% (1.8m) & $-$ & $-$ & 37.48\% (2.0m) & $-$ & $-$ & 35.82\% (2.1m) & $-$ & $-$ \\
    MTPOMO & 39.07\% (1.2m) & $-$ & $-$ & 29.37\% (1.4m) & $-$ & $-$ & 30.08\% (1.5m) & $-$ & $-$ \\
    ReLD-MTL & 33.68\% (1.5m) & $-$ & $-$ & 22.87\% (1.6m) & $-$ & $-$ & 24.65\% (1.8m) & $-$ & $-$ \\
    URS   & 29.54\% (58s) & 21.97\% (17m) & 25.75\% & 17.64\% (1.0m) & 42.56\% (23m) & 30.10\% & 21.88\% (1.1m) & 24.30\% (20m) & 23.09\% \\
    SPACE & 26.52\% (59s) & 15.16\% (18m) & \greybg{20.84\%} & 15.08\% (1.0m) & 22.46\% (16m) & \greybg{18.77\%} & 18.16\% (1.1m) & 15.35\% (20m) & \greybg{16.75\%} \\
    \bottomrule[0.5mm]
    \end{tabular}%
    }
  \label{tab:detailed_multidepot_results}%
\end{table*}%

\subsection{Inconsistent Problem Settings}
\label{append:compare_rf_cada}

For our cross-problem experiment in the main text, RF~\citep{berto2024routefinder} and CaDA~\citep{li2025cada} are excluded due to discrepancies in the problems, including the number of training CVRP variants (16 in their work vs. 5 in ours) and constraint settings (i.e., Backhauls and Time Windows). To ensure a fair comparison with RF~\citep{berto2024routefinder} and CaDA~\citep{li2025cada}, we evaluate SPACE against the official best-performing version of RouteFinder(i.e., RF-TE) and CaDA models on a subset of problems that all three methods have both seen and unseen, strictly filtering for variants where the constraint definitions align perfectly across all methods. As shown in \cref{tab:RF_CaDA_official}, when evaluated under strictly matched conditions, SPACE still performs best overall among comparable methods. Note that SPACE can provide the strongest zero-shot generalization across all shared unseen problems.

\begin{table}[htbp]
  \centering
  \caption{Comparison with RF and CaDA on shared seen and unseen VRP variants.}
  \resizebox{\textwidth}{!}{
    \begin{tabular}{c|c|c|c|c|c|c|c|c}
    \toprule[0.5mm]
    \multirow{2}[2]{*}{Method} & \multicolumn{2}{c|}{Seen} & \multicolumn{4}{c|}{Unseen}   & \multirow{2}[2]{*}{Avg.gap} & \multirow{2}[2]{*}{Best Sol.} \\
          & CVRP  & OCVRP & MDCVRP & MDOCVRP & MDCVRPL & MDOCVRPL &       &  \\
    \midrule
    RF-TE & 2.06\% & 2.86\% & 63.60\% & 33.52\% & 66.75\% & 34.40\% & 33.86\% & 0/6 \\
    CaDA  & 2.17\% & \cellcolor[HTML]{D0CECE}\textbf{2.80\%} & 40.96\% & 53.21\% & 42.40\% & 53.34\% & 32.48\% & 1/6 \\
    SPACE   & \cellcolor[HTML]{D0CECE}\textbf{1.83\%} & 3.31\% & \cellcolor[HTML]{D0CECE}\textbf{16.67\%} & \cellcolor[HTML]{D0CECE}\textbf{15.01\%} & \cellcolor[HTML]{D0CECE}\textbf{16.76\%} & \cellcolor[HTML]{D0CECE}\textbf{15.08\%} & \cellcolor[HTML]{D0CECE}\textbf{11.44\%} & \cellcolor[HTML]{D0CECE}\textbf{5/6} \\
    \bottomrule[0.5mm]
    \end{tabular}%
    }
  \label{tab:RF_CaDA_official}%
\end{table}%

\clearpage

\section{Results On Benchmark Dataset}

We provide detailed results on public CVRPLIB benchmark instances in this section. CVRPLIB Set-X~\citep{uchoa2017cvrplib_setx} and Set-XXL~\citep{arnold2019cvrplib_xxl} contain fixed large-scale real-world instances with hundreds to thousands of nodes. Since all neural solvers are trained only on generated instances of size $N=100$, this benchmark evaluates whether the learned routing policy generalizes to substantially larger instances.

Table \ref{tb:exp_benchmark_large_scale} reports the detailed Set-X results with $N \in [500,1000]$. SPACE achieves the lowest average gap among the compared neural solvers. These results indicate that the proposed SPACE not only improves performance on synthetic training-scale distributions but also strengthens the transfer of a generalist neural solver to established large-scale benchmark instances.

\begin{table*}[htbp]
\caption{Results on large-scale CVRPLIB instances (Set-X) \citep{uchoa2017cvrplib_setx} ($N \in [500,1000]$).}
  \label{tb:exp_benchmark_large_scale}
  \begin{center}
  \begin{small}
  \renewcommand\arraystretch{1.5}
  \resizebox{\textwidth}{!}{ 
  \begin{tabular}{cc|cccccccccccccccccc}
    \toprule[0.5mm]
    \multicolumn{2}{c|}{Set-X} & \multicolumn{2}{c}{POMO-MTL} & \multicolumn{2}{c}{MVMoE/4E} & \multicolumn{2}{c}{MVMOE/4E-L} & \multicolumn{2}{c}{RF-MVMOE} & \multicolumn{2}{c}{RF-TE} & \multicolumn{2}{c}{CaDA} & \multicolumn{2}{c}{ReLD-MTL} & \multicolumn{2}{c}{URS} & \multicolumn{2}{c}{SPACE} \\
    Instance & Opt.  & Obj.  & Gap   & Obj.  & Gap   & Obj.  & Gap   & Obj.  & Gap   & Obj.  & Gap   & Obj.  & Gap   & Obj.  & Gap   & Obj.  & Gap   & Obj.  & Gap \\
    \midrule
    X-n502-k39 & 69226 & 77284 & 11.640\% & 73533 & 6.222\% & 74429 & 7.516\% & 76338 & 10.274\% & 71791 & 3.705\% & 189318 & 173.478\% & 71954 & 3.941\% & \greybg{71281} & \greybg{2.969\%} & 74139 & 7.097\% \\
    X-n513-k21 & 24201 & 28510 & 17.805\% & 32102 & 32.647\% & 31231 & 29.048\% & 32639 & 34.866\% & 28465 & 17.619\% & 236328 & 876.522\% & 27554 & 13.855\% & \greybg{26166} & \greybg{8.119\%} & 26311 & 8.719\% \\
    X-n524-k153 & 154593 & 192249 & 24.358\% & 186540 & 20.665\% & 182392 & 17.982\% & \greybg{170999} & \greybg{10.612\%} & 174381 & 12.800\% & 226478 & 46.500\% & 171465 & 10.914\% & 175250 & 13.362\% & 176153 & 13.946\% \\
    X-n536-k96 & 94846 & 106514 & 12.302\% & 109581 & 15.536\% & 108543 & 14.441\% & 105847 & 11.599\% & 103272 & 8.884\% & 203165 & 114.205\% & \greybg{101259} & \greybg{6.761\%} & 102969 & 8.564\% & 103115 & 8.718\% \\
    X-n548-k50 & 86700 & 94562 & 9.068\% & 95894 & 10.604\% & 95917 & 10.631\% & 104289 & 20.287\% & 100956 & 16.443\% & 220653 & 154.502\% & 91802 & 5.885\% & \greybg{89768} & \greybg{3.539\%} & 90421 & 4.292\% \\
    X-n561-k42 & 42717 & 47846 & 12.007\% & 56008 & 31.114\% & 51810 & 21.287\% & 53383 & 24.969\% & 49454 & 15.771\% & 288552 & 575.497\% & 47390 & 10.939\% & 45964 & 7.601\% & \greybg{45681} & \greybg{6.939\%} \\
    X-n573-k30 & 50673 & 60913 & 20.208\% & 59473 & 17.366\% & 57042 & 12.569\% & 61524 & 21.414\% & 55952 & 10.418\% & 144633 & 185.424\% & \greybg{53893} & \greybg{6.354\%} & 54361 & 7.278\% & 54880 & 8.302\% \\
    X-n586-k159 & 190316 & 208893 & 9.761\% & 215668 & 13.321\% & 214577 & 12.748\% & 212151 & 11.473\% & 205575 & 8.018\% & 319620 & 67.942\% & 202742 & 6.529\% & 202645 & 6.478\% & \greybg{201504} & \greybg{5.879\%} \\
    X-n599-k92 & 108451 & 120333 & 10.956\% & 128949 & 18.901\% & 125279 & 15.517\% & 126578 & 16.714\% & 116560 & 7.477\% & 224439 & 106.950\% & 116539 & 7.458\% & \greybg{114423} & \greybg{5.507\%} & 114959 & 6.001\% \\
    X-n613-k62 & 59535 & 67984 & 14.192\% & 82586 & 38.718\% & 74945 & 25.884\% & 73456 & 23.383\% & 67267 & 12.987\% & 348589 & 485.519\% & 65896 & 10.684\% & 65901 & 10.693\% & \greybg{63680} & \greybg{6.962\%} \\
    X-n627-k43 & 62164 & 73060 & 17.528\% & 70987 & 14.193\% & 70905 & 14.061\% & 70414 & 13.271\% & 67572 & 8.700\% & 148241 & 138.468\% & 67329 & 8.309\% & \greybg{66499} & \greybg{6.973\%} & 66889 & 7.601\% \\
    X-n641-k35 & 63682 & 72643 & 14.071\% & 75329 & 18.289\% & 72655 & 14.090\% & 71975 & 13.023\% & 70831 & 11.226\% & 423415 & 564.890\% & 69382 & 8.951\% & \greybg{67005} & \greybg{5.218\%} & 67369 & 5.790\% \\
    X-n655-k131 & 106780 & 116988 & 9.560\% & 117678 & 10.206\% & 118475 & 10.952\% & 119057 & 11.497\% & 112202 & 5.078\% & 161519 & 51.263\% & 110656 & 3.630\% & 110237 & 3.237\% & \greybg{109203} & \greybg{2.269\%} \\
    X-n670-k130 & 146332 & 190118 & 29.922\% & 197695 & 35.100\% & 183447 & 25.364\% & 168226 & 14.962\% & 168999 & 15.490\% & 274207 & 87.387\% & \greybg{167064} & \greybg{14.168\%} & 184010 & 25.748\% & 170270 & 16.359\% \\
    X-n685-k75 & 68205 & 80892 & 18.601\% & 97388 & 42.787\% & 89441 & 31.136\% & 82269 & 20.620\% & 77847 & 14.137\% & 342170 & 401.679\% & 75501 & 10.697\% & 75942 & 11.344\% & \greybg{74776} & \greybg{9.634\%} \\
    X-n701-k44 & 81923 & 92075 & 12.392\% & 98469 & 20.197\% & 94924 & 15.870\% & 90189 & 10.090\% & 89932 & 9.776\% & 397296 & 384.963\% & 88989 & 8.625\% & \greybg{86038} & \greybg{5.023\%} & 86556 & 5.655\% \\
    X-n716-k35 & 43373 & 52709 & 21.525\% & 56773 & 30.895\% & 52305 & 20.593\% & 52250 & 20.467\% & 49669 & 14.516\% & 229530 & 429.200\% & 48602 & 12.056\% & 46496 & 7.200\% & \greybg{46320} & \greybg{6.795\%} \\
    X-n733-k159 & 136187 & 161961 & 18.925\% & 178322 & 30.939\% & 167477 & 22.976\% & 156387 & 14.833\% & 148463 & 9.014\% & 289794 & 112.791\% & 147574 & 8.361\% & 147743 & 8.485\% & \greybg{145816} & \greybg{7.070\%} \\
    X-n749-k98 & 77269 & 90582 & 17.229\% & 100438 & 29.985\% & 94497 & 22.296\% & 92147 & 19.255\% & 85171 & 10.227\% & 191170 & 147.408\% & 84643 & 9.543\% & 83759 & 8.399\% & \greybg{83278} & \greybg{7.777\%} \\
    X-n766-k71 & 114417 & 144041 & 25.891\% & 152352 & 33.155\% & 136255 & 19.086\% & 130505 & 14.061\% & 129935 & 13.563\% & 297390 & 159.918\% & \greybg{126834} & \greybg{10.852\%} & 139371 & 21.810\% & 127186 & 11.160\% \\
    X-n783-k48 & 72386 & 83169 & 14.897\% & 100383 & 38.677\% & 92960 & 28.423\% & 96336 & 33.087\% & 83185 & 14.919\% & 532247 & 635.290\% & 81305 & 12.321\% & \greybg{77437} & \greybg{6.978\%} & 78570 & 8.543\% \\
    X-n801-k40 & 73305 & 85077 & 16.059\% & 91560 & 24.903\% & 87662 & 19.585\% & 87118 & 18.843\% & 86164 & 17.542\% & 595083 & 711.790\% & 81063 & 10.583\% & \greybg{77369} & \greybg{5.544\%} & 77933 & 6.313\% \\
    X-n819-k171 & 158121 & 177157 & 12.039\% & 183599 & 16.113\% & 185832 & 17.525\% & 179596 & 13.581\% & 174441 & 10.321\% & 385566 & 143.842\% & \greybg{169248} & \greybg{7.037\%} & 171024 & 8.160\% & 170592 & 7.887\% \\
    X-n837-k142 & 193737 & 214207 & 10.566\% & 229526 & 18.473\% & 221286 & 14.220\% & 230362 & 18.904\% & 208528 & 7.635\% & 393693 & 103.210\% & 206963 & 6.827\% & 203457 & 5.017\% & \greybg{203379} & \greybg{4.977\%} \\
    X-n856-k95 & 88965 & 101774 & 14.398\% & 99129 & 11.425\% & 106816 & 20.065\% & 105801 & 18.924\% & 98291 & 10.483\% & 480248 & 439.817\% & 96215 & 8.149\% & \greybg{94547} & \greybg{6.274\%} & 96288 & 8.231\% \\
    X-n876-k59 & 99299 & 116617 & 17.440\% & 119619 & 20.463\% & 114333 & 15.140\% & 114016 & 14.821\% & 107416 & 8.174\% & 501455 & 404.995\% & 105900 & 6.648\% & 105417 & 6.161\% & \greybg{104368} & \greybg{5.105\%} \\
    X-n895-k37 & 53860 & 65587 & 21.773\% & 79018 & 46.710\% & 64310 & 19.402\% & 69099 & 28.294\% & 64871 & 20.444\% & 597118 & 1008.648\% & 62970 & 16.914\% & \greybg{58137} & \greybg{7.941\%} & 58511 & 8.635\% \\
    X-n916-k207 & 329179 & 361719 & 9.885\% & 383681 & 16.557\% & 374016 & 13.621\% & 373600 & 13.494\% & 352998 & 7.236\% & 619583 & 88.221\% & 350310 & 6.419\% & 346556 & 5.279\% & \greybg{346431} & \greybg{5.241\%} \\
    X-n936-k151 & 132715 & 186262 & 40.347\% & 220926 & 66.466\% & 190407 & 43.471\% & 161343 & 21.571\% & 163162 & 22.942\% & 388590 & 192.800\% & \greybg{155144} & \greybg{16.900\%} & 172675 & 30.110\% & 157340 & 18.555\% \\
    X-n957-k87 & 85465 & 98198 & 14.898\% & 113882 & 33.250\% & 105629 & 23.593\% & 123633 & 44.659\% & 102689 & 20.153\% & 414723 & 385.255\% & 93156 & 8.999\% & \greybg{90485} & \greybg{5.874\%} & 90710 & 6.137\% \\
    X-n979-k58 & 118976 & 138092 & 16.067\% & 146347 & 23.005\% & 139682 & 17.404\% & 131754 & 10.740\% & 129952 & 9.225\% & 542553 & 356.019\% & 127584 & 7.235\% & \greybg{125353} & \greybg{5.360\%} & 125495 & 5.479\% \\
    X-n1001-k43 & 72355 & 87660 & 21.153\% & 114448 & 58.176\% & 94734 & 30.929\% & 88969 & 22.962\% & 85929 & 18.760\% & 781780 & 980.478\% & 83572 & 15.503\% & \greybg{77739} & \greybg{7.441\%} & 79256 & 9.538\% \\
    \midrule
    \multicolumn{2}{c|}{Avg. Gap} & \multicolumn{2}{c}{16.796\%} & \multicolumn{2}{c}{26.408\%} & \multicolumn{2}{c}{19.607\%} & \multicolumn{2}{c}{18.795\%} & \multicolumn{2}{c}{12.303\%} & \multicolumn{2}{c}{334.840\%} & \multicolumn{2}{c}{9.439\%} & \multicolumn{2}{c}{8.678\%} & \multicolumn{2}{c}{\greybg{7.863\%}} \\
    \bottomrule[0.5mm]
    \end{tabular}
    }
  \end{small}
  \end{center}
\end{table*}

Table \ref{tab:setxxl_detailed} further evaluates the more challenging Set-XXL instances with $N \in [3000,7000]$. Most neural baselines cannot handle large instances due to memory limitations. Combined with the Set-X results, SPACE achieves the lowest benchmark average across all reported size ranges, supporting its robustness under out-of-distribution scale transfer.

\begin{table}[htbp]
  \centering
  \caption{Results on large-scale CVRPLIB instances (Set-XXL) \citep{arnold2019cvrplib_xxl} ($N \in [3000,7000]$).}
  \resizebox{0.90\textwidth}{!}{
    \begin{tabular}{ll|c|c|c|c|cc}
    \toprule[0.5mm]
   \multicolumn{2}{l|}{\multirow{2}[2]{*}{Method}} & Leuven1 & Leuven2 & Antwerp1 & Antwerp2 & \multicolumn{2}{c}{\multirow{2}[2]{*}{Avg.gap}} \\
   \multicolumn{2}{l|}{}& (N=3000) & (N=4000) & (N=6000) & (N=7000) &\multicolumn{2}{c}{} \\
    \midrule
    \multicolumn{2}{l|}{MTPOMO }  & 67.72\% & 87.31\% & OOM   & OOM   & \multicolumn{2}{c}{$-$} \\
    \multicolumn{2}{l|}{MVMoE/4E  }  & 299.10\% & 170.10\% & OOM   & OOM   & \multicolumn{2}{c}{$-$} \\
    \multicolumn{2}{l|}{MVMoE/4E-L  }  & 182.28\% & 127.07\% & OOM   & OOM   & \multicolumn{2}{c}{$-$} \\
    \multicolumn{2}{l|}{RF-MVMoE  }  & 57.30\% & 160.27\% & OOM   & OOM   & \multicolumn{2}{c}{$-$} \\
    \multicolumn{2}{l|}{RF-TE  }  & 26.90\% & 45.56\% & OOM   & OOM   & \multicolumn{2}{c}{$-$} \\
    \multicolumn{2}{l|}{CaDA  }  & 1035.51\% & OOM   & OOM   & OOM   & \multicolumn{2}{c}{$-$} \\
    \multicolumn{2}{l|}{ReLD-MTL  }  & 17.00\% & 30.59\% & OOM   & OOM   & \multicolumn{2}{c}{$-$} \\
    \midrule
    \multicolumn{2}{l|}{URS  }  
    & 11.57\% & 17.80\% & 9.23\% & \greybg{14.95\%} 
    & \multicolumn{2}{c}{13.39\%} \\
    \multicolumn{2}{l|}{SPACE  }  
    & \greybg{8.79\%} & \greybg{16.26\%} & \greybg{7.82\%} & 15.69\%
    & \multicolumn{2}{c}{\greybg{12.14\%}} \\
    \bottomrule[0.5mm]
    \end{tabular}%
    }
  \label{tab:setxxl_detailed}%
\end{table}%

\clearpage

\section{Training Problem Selection}
\label{append:problem_selection}

\begin{figure*}[h!]
    \centering
    \includegraphics[width=1\linewidth]{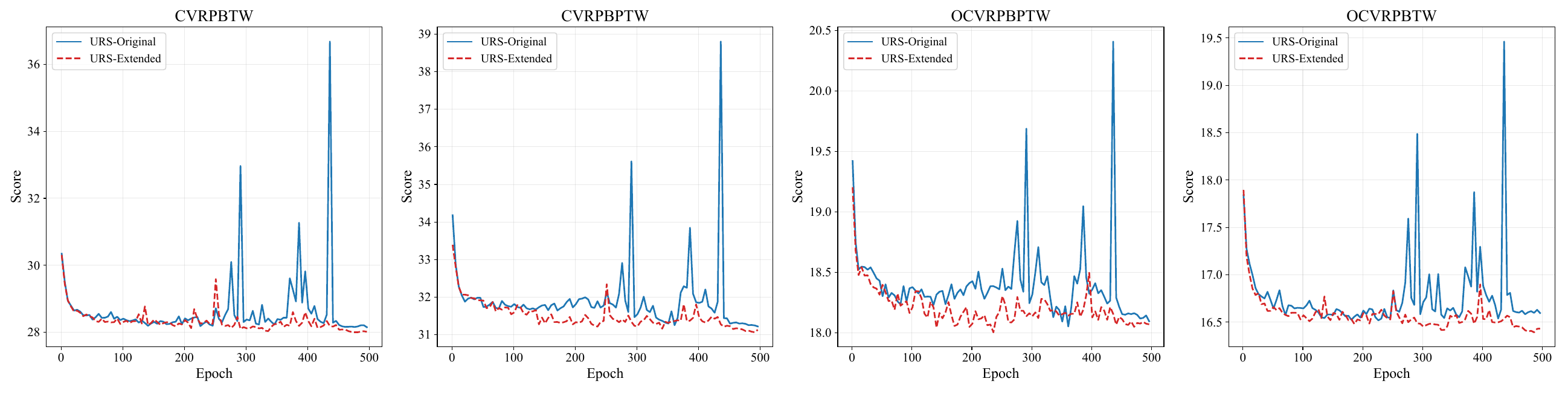}
    \caption{The validation curves of URS trained w/o ACVRPBTW (denoted as URS-Original) and w. ACVRPBTW (denoted as URS-Extended).}
    \label{fig:urs12}
\end{figure*}
As detailed in Section \ref{sec:experiments}, our training set encompasses 12 VRP variants. Unlike the URS configuration, we include ACVRPBTW in our training set. This design choice is driven by two primary considerations: (1) As discussed in Appendix \ref{append:udr} and \ref{append:multi_hot}, SPACE enhances the original UDR by replacing the random identifier and node coordinate indicators with our bidirectional Fréchet representation. This architectural update requires the model to learn unified routing knowledge from symmetric and asymmetric variants. Given that the original URS incorporates only two asymmetric problems (i.e., ATSP and ACVRP), we deliberately introduce an additional asymmetric variant to enhance the model's representational learning capacity. (2) More importantly, since SPACE is built from the architecture of URS, as empirically observed in Figure \ref{fig:urs12}, the original URS exhibits severe performance instability on variants containing both B and TW constraints. We find that including this specific problem effectively stabilizes training on these complex variants that integrate B and TW constraints.

The main paper compares against URS under its published training protocol, because this corresponds to the official baseline and keeps the primary comparison aligned with prior work. To ensure a fair comparison, we conduct a controlled comparison by retraining a URS variant (denoted URS-Extended) on the same 12 training problems as SPACE. This setting directly tests whether improvements in SPACE can be explained by differences in training problems. To have a comprehensive experimental comparison, we evaluate SPACE against URS-Extended on 96 VRPs (48 CVRP variants and their asymmetric counterparts). As shown in Tables~\ref{tab:urs12_space_comparison_sym} and~\ref{tab:urs12_space_comparison_asym}, SPACE still achieves lower average gaps under the matched training set compared to URS-Extended. Notably, our SPACE achieves better results across substantially more variants. These results indicate that the gains mainly come from the proposed unified representation and adaptive decoding, rather than from increased training problems.

\begin{table*}[htbp]
  \centering
  \caption{Controlled comparison on symmetric variants with URS retrained on the same training problems as SPACE.}
  \resizebox{\textwidth}{!}{
    \begin{tabular}{l|c|c|c|c|c}
    \toprule[0.5mm]
    Method & CVRPTW & OCVRP & CVRPL & CVRPB & OCVRPTW \\
    \midrule
    URS-Extended & 5.78\% & \greybg{3.28\%} & 0.45\% & 1.47\% & 4.79\% \\
    SPACE & \greybg{5.36\%} & 3.31\% & \greybg{0.45\%} & \greybg{1.45\%} & \greybg{4.20\%} \\
    \midrule
    \midrule
    Method & OCVRPB & CVRPBL & CVRPLTW & OCVRPBTW & CVRPBLTW \\
    \midrule
    URS-Extended & 8.42\% & 1.68\% & 2.30\% & 12.45\% & 8.63\% \\
    SPACE & \greybg{7.33\%} & \greybg{1.67\%} & \greybg{1.82\%} & \greybg{11.60\%} & \greybg{8.29\%} \\
    \midrule
    \midrule
    Method & OCVRPL & CVRPBTW & OCVRPBL & OCVRPLTW & OCVRPBLTW \\
    \midrule
    URS-Extended & \greybg{3.25\%} & 8.27\% & 8.54\% & 4.84\% & 12.52\% \\
    SPACE & 3.29\% & \greybg{7.91\%} & \greybg{7.36\%} & \greybg{4.26\%} & \greybg{11.63\%} \\
    \midrule
    \midrule
    Method & CVRPBP & OCVRPBP & CVRPBPL & OCVRPBPTW & CVRPBPLTW \\
    \midrule
    URS-Extended & 13.74\% & 17.41\% & 13.20\% & 9.72\% & 17.62\% \\
    SPACE & \greybg{13.49\%} & \greybg{16.85\%} & \greybg{13.13\%} & \greybg{9.62\%} & \greybg{17.54\%} \\
    \midrule
    \midrule
    Method & CVRPBPTW & OCVRPBPL & OCVRPBPLTW & CVRP  & MDCVRPTW \\
    \midrule
    URS-Extended & 17.61\% & 16.60\% & \greybg{9.22\%} & 1.83\% & 30.86\% \\
    SPACE & \greybg{17.57\%} & \greybg{16.31\%} & 9.23\% & \greybg{1.83\%} & \greybg{25.97\%} \\
    \midrule
    \midrule
    Method & MDOCVRP & MDCVRPL & MDCVRPB & MDOCVRPTW & MDOCVRPB \\
    \midrule
    URS-Extended & 16.80\% & 18.31\% & \greybg{3.97\%} & 22.22\% & 13.53\% \\
    SPACE & \greybg{15.01\%} & \greybg{16.76\%} & 5.49\% & \greybg{18.21\%} & \greybg{13.07\%} \\
    \midrule
    \midrule
    Method & MDCVRPBL & MDCVRPLTW & MDOCVRPBTW & MDCVRPBLTW & MDOCVRPL \\
    \midrule
    URS-Extended & \greybg{3.32\%} & 32.77\% & 29.22\% & 37.92\% & 16.61\% \\
    SPACE & 5.00\% & \greybg{27.29\%} & \greybg{26.52\%} & \greybg{35.97\%} & \greybg{15.08\%} \\
    \midrule
    \midrule
    Method & MDCVRPBTW & MDOCVRPBL & MDOCVRPLTW & MDOCVRPBLTW & MDCVRPBP \\
    \midrule
    URS-Extended & 38.17\% & 13.36\% & 22.50\% & 29.30\% & \greybg{31.31\%} \\
    SPACE & \greybg{36.79\%} & \greybg{12.95\%} & \greybg{18.16\%} & \greybg{26.37\%} & 34.16\% \\
    \midrule
    \midrule
    Method & MDOCVRPBP & MDCVRPBPL & MDOCVRPBPTW & MDCVRPBPLTW & MDCVRPBPTW \\
    \midrule
    URS-Extended & \greybg{26.88\%} & \greybg{31.00\%} & 27.03\% & 39.52\% & \greybg{39.77\%} \\
    SPACE & 27.63\% & 34.20\% & \greybg{25.19\%} & \greybg{39.38\%} & 39.88\% \\
    \midrule
    \midrule
    Method & MDOCVRPBPL & MDOCVRPBPLTW & MDCVRP & Avg.gap & Best Sol. \\
    \midrule
    URS-Extended & \greybg{26.80\%} & 27.58\% & 17.95\% & 16.67\% & 10/48 \\
    SPACE & 27.79\% & \greybg{25.45\%} & \greybg{16.67\%} & \greybg{15.93\%} & \greybg{38/48} \\
    \bottomrule[0.5mm]
    \end{tabular}%
    }   
  \label{tab:urs12_space_comparison_sym}%
\end{table*}%

\begin{table*}[h!]
  \centering
  \caption{Controlled comparison on asymmetric variants with URS retrained on the same training problems as SPACE.}
  \resizebox{\textwidth}{!}{
    \begin{tabular}{l|c|c|c|c|c}
    \toprule[0.5mm]
    Method & ACVRPTW & AOCVRP & ACVRPL & ACVRPB & AOCVRPTW \\
    \midrule
    URS-Extended & 1.84\% & 26.82\% & \greybg{-2.23\%} & \greybg{-14.45\%} & 14.06\% \\
    SPACE & \greybg{1.54\%} & \greybg{11.18\%} & -1.88\% & -14.05\% & \greybg{7.97\%} \\
    \midrule
    \midrule
    Method & AOCVRPB & ACVRPBL & ACVRPLTW & AOCVRPBTW & ACVRPBLTW \\
    \midrule
    URS-Extended & 10.97\% & \greybg{-14.77\%} & 2.62\% & 7.88\% & 0.20\% \\
    SPACE & \greybg{-1.34\%} & -14.27\% & \greybg{2.26\%} & \greybg{5.48\%} & \greybg{-0.37\%} \\
    \midrule
    \midrule
    Method & AOCVRPL & ACVRPBTW & AOCVRPBL & AOCVRPLTW & AOCVRPBLTW \\
    \midrule
    URS-Extended & 26.98\% & 0.07\% & 10.74\% & 12.95\% & 7.96\% \\
    SPACE & \greybg{11.02\%} & \greybg{-0.52\%} & \greybg{-1.94\%} & \greybg{6.99\%} & \greybg{5.52\%} \\
    \midrule
    \midrule
    Method & ACVRPBP & AOCVRPBP & ACVRPBPL & AOCVRPBPTW & ACVRPBPLTW \\
    \midrule
    URS-Extended & 8.00\% & 27.20\% & 7.82\% & 11.99\% & 4.69\% \\
    SPACE & \greybg{7.93\%} & \greybg{17.40\%} & \greybg{7.82\%} & \greybg{9.65\%} & \greybg{4.16\%} \\
    \midrule
    \midrule
    Method & ACVRPBPTW & AOCVRPBPL & AOCVRPBPLTW & ACVRP & AMDCVRPTW \\
    \midrule
    URS-Extended & 5.34\% & 26.87\% & 10.88\% & \greybg{-1.96\%} & 6.88\% \\
    SPACE & \greybg{4.73\%} & \greybg{17.00\%} & \greybg{8.57\%} & -1.62\% & \greybg{4.72\%} \\
    \midrule
    \midrule
    Method & AMDOCVRP & AMDCVRPL & AMDCVRPB & AMDOCVRPTW & AMDOCVRPB \\
    \midrule
    URS-Extended & 32.08\% & 4.23\% & \greybg{-9.89\%} & 25.00\% & 16.09\% \\
    SPACE & \greybg{22.11\%} & \greybg{3.46\%} & -9.40\% & \greybg{16.29\%} & \greybg{11.03\%} \\
    \midrule
    \midrule
    Method & AMDCVRPBL & AMDCVRPLTW & AMDOCVRPBTW & AMDCVRPBLTW & AMDOCVRPL \\
    \midrule
    URS-Extended & \greybg{-8.84\%} & 6.55\% & 19.19\% & 5.87\% & 32.23\% \\
    SPACE & -8.50\% & \greybg{4.64\%} & \greybg{15.16\%} & \greybg{2.70\%} & \greybg{22.46\%} \\
    \midrule
    \midrule
    Method & AMDCVRPBTW & AMDOCVRPBL & AMDOCVRPLTW & AMDOCVRPBLTW & AMDCVRPBP \\
    \midrule
    URS-Extended & 5.82\% & 15.59\% & 24.31\% & 18.31\% & 11.38\% \\
    SPACE & \greybg{2.69\%} & \greybg{10.31\%} & \greybg{15.35\%} & \greybg{14.34\%} & \greybg{9.69\%} \\
    \midrule
    \midrule
    Method & AMDOCVRPBP & AMDCVRPBPL & AMDOCVRPBPTW & AMDCVRPBPLTW & AMDCVRPBPTW \\
    \midrule
    URS-Extended & 28.24\% & 10.49\% & 21.18\% & 11.50\% & 11.93\% \\
    SPACE & \greybg{23.35\%} & \greybg{8.66\%} & \greybg{16.98\%} & \greybg{7.51\%} & \greybg{8.22\%} \\
    \midrule
    \midrule
    Method & AMDOCVRPBPL & AMDOCVRPBPLTW & AMDCVRP & Avg.gap & Best Sol. \\
    \midrule
    URS-Extended & 27.82\% & 20.07\% & 4.94\% & 11.20\% & 6/48 \\
    SPACE & \greybg{22.81\%} & \greybg{16.08\%} & \greybg{3.93\%} & \greybg{7.13\%} & \greybg{42/48} \\
    \bottomrule[0.5mm]
    \end{tabular}%
    }   
  \label{tab:urs12_space_comparison_asym}%
\end{table*}%
\clearpage

\section{Ablation Study}
\label{append:ablation}

\subsection{Effects of Bidirectional Fréchet Representation}

Table \ref{tab:ablation_bfr_details} reports the per-variant ablation results for the main BFR design components. \textit{Pivot} indicates whether the BFR is used; when disabled, the model falls back to a unified representation (e.g., GOAL). \textit{FPS} indicates whether pivots are selected by Furthest Pivot Sampling; when disabled, pivots are selected randomly. \textit{Depot} indicates whether the depot is used as the initial pivot; when disabled, the initial pivot is randomly selected. \textit{Bi.} indicates whether both incoming and outgoing pivot distances are retained; when disabled, they are replaced by their bidirectional average.

The complete BFR design achieves the best average gap in both symmetric and asymmetric settings. Removing the entire BFR increases the symmetric average gap, showing that BFR provides useful representation beyond other unified representations. Among the individual components, FPS is the most important: replacing FPS with random pivot selection increases the average gap to $83.10\%$ on symmetric variants and $95.72\%$ on asymmetric variants. This confirms that BFR depends not only on using pivots, but also on selecting spatially diverse pivots that cover the topology. Removing depot initialization or bidirectional coordinates leads to smaller but consistent degradation, indicating that the depot anchor and direction-aware coordinates further stabilize the representation.

\begin{table*}[htbp]
  \centering
  \caption{Detailed ablation results for Bidirectional Fréchet Representation.}
  \resizebox{\textwidth}{!}{
    \begin{tabular}{cccc|c|c|c|c|c|c}
    \toprule[0.5mm]
    Pivot & FPS   & depot & Bi.    & CVRP  & CVRPTW & OCVRP & CVRPL & CVRPB & OCVRPTW \\
    \midrule
    $\times$ & $\times$ & $\times$ & $\times$ & 2.25\% & 6.33\% & 4.00\% & 0.91\% & 2.21\% & 5.47\% \\
    $\checkmark$ & $\times$ & $\checkmark$ & $\checkmark$ & 23.23\% & 37.06\% & 121.75\% & 21.01\% & 26.24\% & 177.45\% \\
    $\checkmark$ & $\checkmark$ & $\times$ & $\checkmark$ & 2.09\% & 5.77\% & 3.51\% & 0.68\% & 1.81\% & 4.61\% \\
    $\checkmark$ & $\checkmark$ & $\checkmark$ & $\times$ & 1.92\% & 5.55\% & 3.58\% & 0.54\% & 1.66\% & 4.42\% \\
    $\checkmark$ & $\checkmark$ & $\checkmark$ & $\checkmark$ & \greybg{1.83\%} & \greybg{5.36\%} & \greybg{3.31\%} & \greybg{0.45\%} & \greybg{1.45\%} & \greybg{4.20\%} \\
    \midrule
    \midrule
    Pivot & FPS   & depot & Bi.    & OCVRPB & OCVRPL & CVRPBL & CVRPBTW & CVRPLTW & OCVRPBL \\
    \midrule
    $\times$ & $\times$ & $\times$ & $\times$ & 9.69\% & 4.02\% & 2.43\% & 9.52\% & 2.84\% & 9.77\% \\
    $\checkmark$ & $\times$ & $\checkmark$ & $\checkmark$ & 88.19\% & 120.92\% & 25.43\% & 31.31\% & 32.60\% & 88.03\% \\
    $\checkmark$ & $\checkmark$ & $\times$ & $\checkmark$ & 7.54\% & 3.48\% & 2.08\% & 9.62\% & 2.25\% & 7.69\% \\
    $\checkmark$ & $\checkmark$ & $\checkmark$ & $\times$ & 7.99\% & 3.61\% & 1.90\% & 9.42\% & 2.02\% & 8.17\% \\
    $\checkmark$ & $\checkmark$ & $\checkmark$ & $\checkmark$ & \greybg{7.33\%} & \greybg{3.29\%} & \greybg{1.67\%} & \greybg{7.91\%} & \greybg{1.82\%} & \greybg{7.36\%} \\
    \midrule
    \midrule
    Pivot & FPS   & depot & Bi.    & OCVRPBTW & OCVRPLTW & CVRPBLTW & OCVRPBLTW & Avg.gap & Best Sol. \\
    \midrule
    $\times$ & $\times$ & $\times$ & $\times$ & 23.94\% & 5.58\% & 9.76\% & 23.98\% & 7.67\% & 0/16 \\
    $\checkmark$ & $\times$ & $\checkmark$ & $\checkmark$ & 164.07\% & 177.23\% & 31.71\% & 163.39\% & 83.10\% & 0/16 \\
    $\checkmark$ & $\checkmark$ & $\times$ & $\checkmark$ & 13.81\% & 4.67\% & 10.08\% & 13.94\% & 5.85\% & 0/16 \\
    $\checkmark$ & $\checkmark$ & $\checkmark$ & $\times$ & 13.81\% & 4.49\% & 9.87\% & 13.80\% & 5.80\% & 0/16 \\
    $\checkmark$ & $\checkmark$ & $\checkmark$ & $\checkmark$ & \greybg{11.60\%} & \greybg{4.26\%} & \greybg{8.29\%} & \greybg{11.63\%} & \greybg{5.11\%} & \greybg{16/16} \\
    \toprule[0.5mm]
    Pivot & FPS   & depot & Bi.    & ACVRP & ACVRPTW & AOCVRP & ACVRPL & ACVRPB & AOCVRPTW \\
    \midrule
    $\times$ & $\times$ & $\times$ & $\times$ & 4.15\% & 3.75\% & 10.66\% & -0.48\% & -13.37\% & 10.65\% \\
    $\checkmark$ & $\times$ & $\checkmark$ & $\checkmark$ & 123.94\% & 59.05\% & 161.50\% & 111.28\% & 92.95\% & 86.93\% \\
    $\checkmark$ & $\checkmark$ & $\times$ & $\checkmark$ & 2.79\% & 2.24\% & \greybg{10.46\%} & -1.75\% & -12.77\% & 9.25\% \\
    $\checkmark$ & $\checkmark$ & $\checkmark$ & $\times$ & 3.90\% & 2.78\% & 15.43\% & -0.76\% & \greybg{-14.12\%} & 9.27\% \\
    $\checkmark$ & $\checkmark$ & $\checkmark$ & $\checkmark$ & \greybg{2.67\%} & \greybg{1.54\%} & 11.18\% & \greybg{-1.88\%} & -14.05\% & \greybg{7.97\%} \\
    \midrule
    \midrule
    Pivot & FPS   & depot & Bi.    & AOCVRPB & AOCVRPL & ACVRPBL & ACVRPBTW & ACVRPLTW & AOCVRPBL \\
    \midrule
    $\times$ & $\times$ & $\times$ & $\times$ & \greybg{-1.50\%} & 10.50\% & -13.61\% & 0.87\% & 4.53\% & \greybg{-2.09\%} \\
    $\checkmark$ & $\times$ & $\checkmark$ & $\checkmark$ & 132.02\% & 162.44\% & 82.90\% & 48.48\% & 60.47\% & 127.58\% \\
    $\checkmark$ & $\checkmark$ & $\times$ & $\checkmark$ & 1.20\% & \greybg{10.41\%} & -13.01\% & -0.36\% & 2.98\% & 0.40\% \\
    $\checkmark$ & $\checkmark$ & $\checkmark$ & $\times$ & 0.95\% & 15.30\% & \greybg{-14.42\%} & 0.04\% & 3.60\% & 0.45\% \\
    $\checkmark$ & $\checkmark$ & $\checkmark$ & $\checkmark$ & -1.34\% & 11.02\% & -14.27\% & \greybg{-0.52\%} & \greybg{2.26\%} & -1.94\% \\
    \midrule
    \midrule
    Pivot & FPS   & depot & Bi.    & AOCVRPBTW & AOCVRPLTW & ACVRPBLTW & AOCVRPBLTW & Avg.gap & Best Sol. \\
    \midrule
    $\times$ & $\times$ & $\times$ & $\times$ & 6.73\% & 9.71\% & 1.01\% & 6.63\% & 2.38\% & 2/16 \\
    $\checkmark$ & $\times$ & $\checkmark$ & $\checkmark$ & 73.04\% & 86.95\% & 49.25\% & 72.75\% & 95.72\% & 0/16 \\
    $\checkmark$ & $\checkmark$ & $\times$ & $\checkmark$ & 6.50\% & 8.26\% & -0.17\% & 6.52\% & 2.06\% & 2/16 \\
    $\checkmark$ & $\checkmark$ & $\checkmark$ & $\times$ & 5.69\% & 8.38\% & 0.16\% & 5.65\% & 2.64\% & 2/16 \\
    $\checkmark$ & $\checkmark$ & $\checkmark$ & $\checkmark$ & \greybg{5.48\%} & \greybg{6.99\%} & \greybg{-0.37\%} & \greybg{5.52\%} & \greybg{1.27\%} & \greybg{10/16} \\
    \bottomrule[0.5mm]
    \end{tabular}%
    }   
  \label{tab:ablation_bfr_details}%
\end{table*}%

\clearpage

\subsection{Effects of Pivot Set Size}

Table \ref{tab:ablation_pivot_details} studies the sensitivity of BFR to the number of pivots $M$. The default setting $M=8$ achieves the best average gap in both symmetric and asymmetric settings, respectively. Using fewer pivots ($M=5$) remains competitive on symmetric variants but provides less stable asymmetric transfer. Increasing the pivot count beyond $8$ does not improve performance: larger values, such as $M=20$ or $M=30$, can introduce redundant or noisy reference-distance coordinates and increase the input dimensionality. Overall, $M=8$ provides the best trade-off between geometric coverage and representation compactness.

\begin{table*}[htbp]
  \centering
  \caption{Detailed ablation results for the number of pivots in BFR.}
  \resizebox{\textwidth}{!}{
    \begin{tabular}{c|c|c|c|c|c|c}
    \toprule[0.5mm]
    Pivot & CVRP  & CVRPTW & OCVRP & CVRPL & CVRPB & OCVRPTW \\
    \midrule
    5     & 1.84\% & \greybg{5.29\%} & 3.32\% & 0.47\% & \greybg{1.42\%} & 4.25\% \\
    10    & 1.92\% & 5.50\% & \greybg{3.23\%} & 0.53\% & 1.62\% & 4.29\% \\
    20    & 2.09\% & 5.74\% & 3.49\% & 0.67\% & 1.80\% & 4.71\% \\
    30    & 2.16\% & 5.67\% & 3.60\% & 0.76\% & 1.89\% & 4.50\% \\
    8     & \greybg{1.83\%} & 5.36\% & 3.31\% & \greybg{0.45\%} & 1.45\% & \greybg{4.20\%} \\
    \midrule
    \midrule
    Pivot & OCVRPB & OCVRPL & CVRPBL & CVRPBTW & CVRPLTW & OCVRPBL \\
    \midrule
    5     & \greybg{7.31\%} & 3.29\% & \greybg{1.57\%} & 8.27\% & \greybg{1.74\%} & \greybg{7.35\%} \\
    10    & 7.50\% & \greybg{3.26\%} & 1.80\% & 8.96\% & 1.99\% & 7.70\% \\
    20    & 7.91\% & 3.43\% & 2.07\% & 9.46\% & 2.32\% & 7.91\% \\
    30    & 7.89\% & 3.59\% & 2.14\% & 9.08\% & 2.18\% & 8.06\% \\
    8     & 7.33\% & 3.29\% & 1.67\% & \greybg{7.91\%} & 1.82\% & 7.36\% \\
    \midrule
    \midrule
    Pivot & OCVRPBTW & OCVRPLTW & CVRPBLTW & OCVRPBLTW & Avg.gap & Best Sol. \\
    \midrule
    5     & 11.92\% & 4.35\% & 8.65\% & 11.92\% & 5.19\% & 6/16 \\
    10    & 12.94\% & 4.39\% & 9.35\% & 13.00\% & 5.50\% & 2/16 \\
    20    & 13.54\% & 4.79\% & 9.81\% & 13.51\% & 5.83\% & 0/16 \\
    30    & 12.83\% & 4.66\% & 9.39\% & 12.73\% & 5.70\% & 0/16 \\
    8     & \greybg{11.60\%} & \greybg{4.26\%} & \greybg{8.29\%} & \greybg{11.63\%} & \greybg{5.11\%} & \greybg{8/16} \\
    \toprule[0.5mm]
    Pivot & ACVRP & ACVRPTW & AOCVRP & ACVRPL & ACVRPB & AOCVRPTW \\
    \midrule
    5     & 2.99\% & 2.02\% & 12.15\% & -1.61\% & -13.05\% & 8.89\% \\
    10    & 2.73\% & 2.08\% & 13.56\% & -1.85\% & -14.02\% & 10.94\% \\
    20    & 2.97\% & 1.97\% & \greybg{10.81\%} & -1.53\% & \greybg{-15.21\%} & 8.86\% \\
    30    & 2.80\% & 1.86\% & 16.93\% & -1.73\% & -14.67\% & 9.11\% \\
    8     & \greybg{2.67\%} & \greybg{1.54\%} & 11.18\% & \greybg{-1.88\%} & -14.05\% & \greybg{7.97\%} \\
    \midrule
    \midrule
    Pivot & AOCVRPB & AOCVRPL & ACVRPBL & ACVRPBTW & ACVRPLTW & AOCVRPBL \\
    \midrule
    5     & 0.64\% & 11.89\% & -13.26\% & -0.31\% & 2.74\% & -0.06\% \\
    10    & 1.36\% & 13.47\% & -14.27\% & -0.46\% & 2.79\% & 0.74\% \\
    20    & -0.38\% & \greybg{10.66\%} & \greybg{-15.43\%} & -0.17\% & 2.76\% & -1.02\% \\
    30    & 1.91\% & 16.93\% & -15.00\% & -0.14\% & 2.64\% & 1.32\% \\
    8     & \greybg{-1.34\%} & 11.02\% & -14.27\% & \greybg{-0.52\%} & \greybg{2.26\%} & \greybg{-1.94\%} \\
    \midrule
    \midrule
    Pivot & AOCVRPBTW & AOCVRPLTW & ACVRPBLTW & AOCVRPBLTW & Avg.gap & Best Sol. \\
    \midrule
    5     & 5.88\% & 7.97\% & -0.16\% & 5.94\% & 2.04\% & 0/16 \\
    10    & 5.46\% & 10.04\% & -0.31\% & \greybg{5.37\%} & 2.35\% & 1/16 \\
    20    & 6.11\% & 8.08\% & -0.04\% & 6.13\% & 1.54\% & 4/16 \\
    30    & \greybg{5.34\%} & 8.25\% & -0.03\% & 5.38\% & 2.55\% & 1/16 \\
    8     & 5.48\% & \greybg{6.99\%} & \greybg{-0.37\%} & 5.52\% & \greybg{1.27\%} & \greybg{10/16} \\
    \bottomrule[0.5mm]
    \end{tabular}%
    }   
  \label{tab:ablation_pivot_details}%
\end{table*}%

\clearpage

\subsection{Effects of Weight-Decomposed Adaptive Decoding}

Table \ref{tab:ablation_wdad_details} reports the per-variant ablation results for WDAD. In the \textit{WDAD} column, $\times$ replaces WDAD with the original URS hypernetwork. The value $1$ denotes the single-head ablation under the same total adapter rank budget as the multi-head setting, and $3$ denotes the default $H=3$ WDAD configuration.

The comparison shows that BFR alone is not sufficient; the decoder also needs a parameterization that separates geometry-invariant routing knowledge from constraint-conditioned decision logic. When WDAD is replaced by the URS-style hypernetwork, the adaptive projections are generated from the problem representation as a whole, so routing attributes can be coupled again with metric-specific patterns observed during training. In contrast, WDAD forms each effective decoder weight as $W_{\mathrm{eff}}^{(\bm{\lambda})}=W_0+\Delta W^{(\bm{\lambda})}$, where the shared base weight $W_0$ carries reusable geometric decoding behavior and $\Delta W^{(\bm{\lambda})}$ is the average of active attribute-specific updates. This additive mean aggregation keeps the update scale stable when multiple constraints coexist and makes an unseen variant behave more like a composition of learned constraint primitives, especially when asymmetric distances alter pairwise costs and constraint-induced feasibility while the constraint primitives should remain reusable.

The single-head variant further indicates that the improvement is not solely due to the addition of a low-rank adapter. Although it uses the same total rank budget, a single normalized update direction per attribute tends to merge heterogeneous effects into one residual pathway. The multi-head decomposition instead lets each routing attribute express several complementary directions with independent row-wise magnitudes, so the decoder can adapt its query, key, value, and state projections without allowing any single constraint to dominate the effective weight scale. The consistent advantage of the default setting, therefore, supports the intended role of WDAD: separating geometry-invariant decoding from constraint-conditioned decision adjustments and enabling smoother transfer to unseen constraint combinations.

\begin{table*}[htbp]
  \centering
  \caption{Detailed ablation results for Weight-Decomposed Adaptive Decoding.}
  \resizebox{\textwidth}{!}{
    \begin{tabular}{c|c|c|c|c|c|c}
    \toprule[0.5mm]
    WDAD  & CVRP  & CVRPTW & OCVRP & CVRPL & CVRPB & OCVRPTW \\
    \midrule
    $\times$ & 2.04\% & 6.02\% & 3.70\% & 0.68\% & 2.04\% & 5.21\% \\
    1     & 1.98\% & 5.61\% & 3.61\% & 0.61\% & 1.76\% & 4.48\% \\
    3     & \greybg{1.83\%} & \greybg{5.36\%} & \greybg{3.31\%} & \greybg{0.45\%} & \greybg{1.45\%} & \greybg{4.20\%} \\
    \midrule
    \midrule
    WDAD  & OCVRPB & OCVRPL & CVRPBL & CVRPBTW & CVRPLTW & OCVRPBL \\
    \midrule
    $\times$ & 11.81\% & 3.63\% & 2.27\% & 10.67\% & 2.50\% & 11.92\% \\
    1     & 8.16\% & 3.59\% & 2.02\% & 8.55\% & 2.11\% & 8.33\% \\
    3     & \greybg{7.33\%} & \greybg{3.29\%} & \greybg{1.67\%} & \greybg{7.91\%} & \greybg{1.82\%} & \greybg{7.36\%} \\
    \midrule
    \midrule
    WDAD  & OCVRPBTW & OCVRPLTW & CVRPBLTW & OCVRPBLTW & Avg.gap & Best Sol. \\
    \midrule
    $\times$ & 17.15\% & 5.36\% & 10.85\% & 17.11\% & 7.06\% & 0/16 \\
    1     & 12.58\% & 4.53\% & 8.83\% & 12.60\% & 5.58\% & 0/16 \\
    3     & \greybg{11.60\%} & \greybg{4.26\%} & \greybg{8.29\%} & \greybg{11.63\%} & \greybg{5.11\%} & \greybg{16/16} \\
    \toprule[0.5mm]
    WDAD  & ACVRP & ACVRPTW & AOCVRP & ACVRPL & ACVRPB & AOCVRPTW \\
    \midrule
    $\times$ & 3.46\% & 2.72\% & 16.70\% & -1.09\% & -13.01\% & 12.79\% \\
    1     & 2.94\% & 1.81\% & 13.09\% & -1.67\% & \greybg{-14.65\%} & 9.80\% \\
    3     & \greybg{2.67\%} & \greybg{1.54\%} & \greybg{11.18\%} & \greybg{-1.88\%} & -14.05\% & \greybg{7.97\%} \\
    \midrule
    \midrule
    WDAD  & AOCVRPB & AOCVRPL & ACVRPBL & ACVRPBTW & ACVRPLTW & AOCVRPBL \\
    \midrule
    $\times$ & 2.51\% & 16.39\% & -13.27\% & 0.42\% & 3.50\% & 1.91\% \\
    1     & 1.01\% & 12.85\% & \greybg{-14.93\%} & -0.22\% & 2.56\% & 0.40\% \\
    3     & \greybg{-1.34\%} & \greybg{11.02\%} & -14.27\% & \greybg{-0.52\%} & \greybg{2.26\%} & \greybg{-1.94\%} \\
    \midrule
    \midrule
    WDAD  & AOCVRPBTW & AOCVRPLTW & ACVRPBLTW & AOCVRPBLTW & Avg.gap & Best Sol. \\
    \midrule
    $\times$ & 7.93\% & 11.72\% & 0.54\% & 7.91\% & 3.82\% & 0/16 \\
    1     & 6.94\% & 8.97\% & -0.08\% & 6.84\% & 2.23\% & 2/16 \\
    3     & \greybg{5.48\%} & \greybg{6.99\%} & \greybg{-0.37\%} & \greybg{5.52\%} & \greybg{1.27\%} & \greybg{14/16} \\
    \bottomrule[0.5mm]
    \end{tabular}%
    }   
  \label{tab:ablation_wdad_details}%
\end{table*}%

\clearpage

\section{Licenses For Used Resources}
\label{append:licenses}

\begin{table}[htbp]
\centering
\caption{List of licenses for the codes and datasets we used in this work.}
\label{table:Licenses}
\resizebox{0.99\textwidth}{!}{%
\begin{tabular}{l |l | l | l }
\toprule[0.5mm]
 Resource   &   Type  &  Link  & License    \\
\midrule
HGS-PyVRP \citep{wouda2024pyvrp}& Code & \url{https://github.com/PyVRP/PyVRP} & MIT License\\
LKH3 \citep{LKH3} & Code & \url{http://webhotel4.ruc.dk/~keld/research/LKH-3/} & Available for academic research use\\

OR-Tools \citep{ortools}& Code & \url{https://github.com/google/or-tools} & Apache-2.0 License\\
\midrule

MTPOMO \citep{liu2024mtpomo} & Code & \url{https://github.com/FeiLiu36/MTNCO} & MIT License  \\
MVMoE \citep{zhou2024mvmoe} & Code & \url{https://github.com/RoyalSkye/Routing-MVMoE} & MIT License  \\
RouteFinder \citep{berto2024routefinder} & Code & \url{https://github.com/ai4co/routefinder} & MIT License  \\
CaDA \citep{li2025cada} & Code & \url{https://github.com/CIAM-Group/CaDA} & MIT License  \\
ReLD \citep{huang2025reld} & Code & \url{https://github.com/ziweileonhuang/reld-nco} & MIT License  \\
GOAL \citep{drakulic2025goal} & Code & \url{https://github.com/naver/goal-co} & Available for any non-commercial use  \\
URS \citep{zhou2025urs} & Code & \url{https://github.com/CIAM-Group/URS} & MIT License  \\
\midrule

CVRPLIB Set-X \citep{uchoa2017cvrplib_setx} & Dataset & \url{http://vrp.galgos.inf.puc-rio.br/index.php/en/}  & Available for academic research use \\
CVRPLIB Set-XXL \citep{arnold2019cvrplib_xxl} & Dataset & \url{http://vrp.galgos.inf.puc-rio.br/index.php/en/}  & Available for academic research use \\

\bottomrule[0.5mm]
\end{tabular}%
}
\end{table}
We list the existing codes and datasets in \cref{table:Licenses}, all of which are open-source resources for academic use.

\section{Broader Impacts}
\label{append:broader_impacts}

This paper presents work aimed at advancing neural combinatorial optimization for vehicle routing problems. By unifying symmetric and asymmetric VRPs within a generalist neural routing solver, SPACE may reduce the need to design and retrain specialized models for each routing variant, thereby improving the adaptability of learning-based solvers in logistics, supply-chain management, and urban delivery scenarios. More efficient, transferable route-optimization methods could reduce operational costs, travel distance, and energy consumption when deployed with appropriate domain constraints. The proposed method has no potential negative societal impacts that we feel must be specifically highlighted.

\end{CJK*}

\end{document}